\documentclass{article} % For LaTeX2e
\usepackage{iclr2026_conference,times}

\usepackage{hyperref}
\usepackage{url}
\usepackage{natbib}
\usepackage[table]{xcolor}
\usepackage{xparse}
\usepackage{titletoc}
\usepackage{wrapfig}
\usepackage{float}
\usepackage[export]{adjustbox}

\usepackage{twemojis}
\newcommand{\palm}{\textsuperscript{\twemoji{palm tree}}}     
\newcommand{\evg}{\textsuperscript{\twemoji{evergreen tree}}}

\usepackage{graphicx,amsmath,amsthm,
amsfonts,amssymb,
bm,
url,breakurl,epsf,color,mathbbol,
fmtcount,semtrans,caption,multirow,comment,boldline,microtype}
\usepackage{cancel}
\usepackage{hyperref}
\usepackage{amssymb}
\usepackage{mathrsfs}
\usepackage{nicematrix}
\usepackage{mathtools}

\hypersetup{
    colorlinks=true,
    linkcolor=[rgb]{1,0.0,0.5},
    filecolor=magenta,      
    urlcolor=[rgb]{0.3,0.75,0.7},
    citecolor=[rgb]{0.1,0.3,0.88}
    }
\usepackage{titlesec}

\usepackage{tikz}
\usepackage{pgfplots}
\usetikzlibrary{pgfplots.groupplots}

%additions
\usepackage{subcaption}
\usepackage{color-edits}

\addauthor[B]{bv}{magenta}
\addauthor[P]{pd}{orange}
\addauthor[Y]{yz}{pink}
\addauthor[V]{vs}{red}
\addauthor[C]{ct}{blue}

%end

%\setlist[itemize]{leftmargin=5mm}

\usepackage[utf8]{inputenc} % allow utf-8 input
\usepackage[T1]{fontenc}    % use 8-bit T1 fonts
\usepackage{url}            % simple URL typesetting
\usepackage{booktabs}       % professional-quality tables
\usepackage{nicefrac}       % compact symbols for 1/2, etc.
\usepackage{microtype}      % microtypography
\usepackage{dsfont}
\usepackage{xspace}

\usepackage[bottom,hang,flushmargin,multiple,symbol]{footmisc} 
\usepackage{fnpct}

\newcommand{\Lambdab}{{\mtx{S}_{\xb\xb}}}
\newcommand{\Nn}{\mathcal{N}}
\newcommand{\Wbbar}{\overline{\W}}

\newcommand{\Wbar}{\overline{W}}
\newcommand{\Lcbal}{\Lc_{\rm{bal}}}
\newcommand{\Lcdotbal}{\dot{\Lc}_{\rm{bal}}}
% \let\mathcal\undefined
% \DeclareMathAlphabet{\mathcal}{OMS}{cmsy}{m}{n}

% \newcommand{\min}{\text{min}}

\newcommand{\tsc}[1]{{\fontfamily{pcr}\selectfont #1}}

\newcommand{\snr}{\text{SNR}\xspace}

\newcommand{\vb}{{\vct{v}}}
\newcommand{\invq}[1]{#1^{-1/4}}

\newcommand{\Sb}{{\mtx{S}_{\yb\xb}}}
\newcommand{\Vb}{{\mtx{V}}}
\newcommand{\Ub}{{\mtx{U}}}
\newcommand{\Ib}{{\mtx{I}}}
\newcommand{\Mb}{{\mtx{M}}}

\newcommand{\Pbb}{{\mtx{P}}}

\newcommand{\vct}[1]{\bm{#1}}
\newcommand{\mtx}[1]{\bm{#1}}

\newcommand{\Delb}{\mathbf{\Delta}}
\newcommand{\Nab}{\rotatebox[origin=c]{180}{$\boldsymbol{\Delta}$}\!}
\newcommand{\Nabla}{\rotatebox[origin=c]{180}{$\Delta$}\!}

\newtheorem{condition}{Condition}
%\newtheorem{conjecture}{Conjecture}

%\newtheorem{remark}{Remark}

%\newtheorem{example}[subsection]{Example}

%------------------------

%\newcommand{\Rb}{\mathbf{R}}

%\newcommand{\softmaxd}{\operatorname{\rho^\prime}}

\newcommand{\diag}[1]{\operatorname{diag}(#1)}

\newcommand{\sigmagen}{\alpha}
\newcommand{\sigmaGD}{\alpha^{\text{GF}}}
\newcommand{\sigmaNGD}{\alpha^{\text{NGF}}}
\newcommand{\sigmaSpec}{\alpha^{\text{Spec}}}

\newcommand{\sigmadotGD}{\dot{\alpha}^{\text{GF}}}
\newcommand{\sigmadotNGD}{\dot{\alpha}^{\text{NGF}}}
\newcommand{\sigmadotSpec}{\dot{\alpha}^{\text{Spec}}}

\newcommand{\Deltasigmadot}{\Delta \dot{\alpha}}

% ---------------------------

\DeclareMathOperator{\tr}{tr}
\DeclareMathOperator*{\argmax}{argmax}

\newcommand{\cut}[1]{\textcolor{red}{}}
\newcommand{\W}{\vct{W}}

\newcommand{\sign}[1]{\texttt{sign}(#1)}

\newcommand{\M}{\vct{M}}

\newcommand{\Z}{\vct{Z}}

\newcommand{\Ellc}{\mathcal{L}}

\newcommand{\A}{\vct{A}}

\newcommand{\Q}{\vct{Q}}

%Bold lower
%\newcommand{\gammab}{\boldsymbol{\gamma}}

\newcommand{\mub}{\boldsymbol{\mu}}

\newcommand{\x}{\vct{x}}

\newcommand{\ub}{\vct{u}}

\newcommand{\Bb}{\vct{B}}

\newcommand{\eb}{\vct{e}}
\newcommand{\y}{\vct{y}}

\newcommand{\ab}{\vct{a}}

\newcommand{\pr}{p}
%Calligraphic

%\newcommand{\Ic}{\mathcal{I}}

%\newcommand{\Nn}{\mathcal{N}}
\newcommand{\Lc}{\mathcal{L}}

%hat, tilde, etc

%\newcommand{\alh}{\hat{\al}}

%...

%Equations
\newcommand{\beq}{\begin{equation}}
\newcommand{\eeq}{\end{equation}}
\newcommand{\bea}{\begin{align}}
\newcommand{\eea}{\end{align}}

\newcommand{\vsn}{\vspace{-1mm}}%-1.5mm}}
%-1.5mm}}
\newcommand{\R}{\mathbb{R}}
\newcommand{\E}{\mathbb{E}}

\newcommand{\nn}{\notag}

\newcommand{\Lb}{\vct{L}}
\newcommand{\Rb}{\vct{R}}

\newcommand{\Mh}{\hat{\M}}
\newcommand{\Vh}{\hat{\Z}}

%_{\ell_2}

%% Greek

% \newcommand{\betabh}{\hat{\boldsymbol\beta}}
    
%        \newcommand{\etab}{\boldsymbol\eta}
 
 \newcommand{\Sigb}{\boldsymbol\Sigma}
  \newcommand{\Sigmab}{\boldsymbol\Sigma}
    
                       % lambda

%%%%%%%%%%%%%%%%%%%%%%%%%%%%%%
%%%%%%%%%%%% FAIRNESS paper %%%%%%%%%%%%%%

\newcommand{\ind}[1]{\mathds{1}\left[#1\right]}

\DeclarePairedDelimiterX{\inp}[2]{\langle}{\rangle}{#1, #2}
\newcommand{\inpb}[2]{\left\langle #1, #2 \right\rangle}

\newcommand{\Id}{\mathds{I}}

\providecommand{\norm}[1]{\lVert#1\rVert}
\providecommand{\abs}[1]{\left\lvert#1\right\rvert}

%###########EXTRAS##################
%\newcommand{\W}{\mtx{W}}
%\newcommand{\Ub}{\mtx{U}}

%\newcommand{\mub}{{\vct{\mu}}}
%\newcommand{\qb}{{\vct{q}}}
\newcommand{\xb}{{\vct{x}}}
\newcommand{\yb}{{\vct{y}}}
\newcommand{\epsb}{{\vct{\varepsilon}}}
\newcommand{\zob}{{\vct{0}}}

%{\tau^*}

%\newcommand{\set}{\mathcal{S}}

%\newtheorem{observation}{Observation}

\newtheorem{theorem}{Theorem}
\newtheorem{lemma}{Lemma}[section]
\newtheorem{proposition}{Proposition}
%\newtheorem{result}{Result}
%\newtheorem{problem}{Problem}
%\newtheorem{claim}{Claim}
%\newtheorem{assumption}{Assumption}
%\newtheorem{question}{Question}

%\newtheorem{corollary}{Corollary}[theorem]
%\newtheorem{definition}{Definition}
%\newtheorem{condition}{Condition}
%\newtheorem{conjecture}{Conjecture}

%\newtheorem{remark}{Remark}[section]
%\newtheorem*{remark*}{Remark}
%\newtheorem{remarks}[subsection]{Remarks}
%\newtheorem{example}{Example}

%\newcommand{\norm}[1]{\left\|#1\right\|}

 %indicator

\newcommand{\svxx}{s^{\texttt{xx}}}
\newcommand{\svyx}{s^{\texttt{yx}}}
\newcommand{\snrv}{\texttt{\snr}}

\usepackage[capitalise]{cleveref}
\crefname{appendix}{App.}{Apps.}
\crefname{section}{Sec.}{Secs.}
\bibliographystyle{iclr2026_conference}

\title{How Muon’s Spectral Design Benefits \\ Generalization: A Study on Imbalanced Data}

\author{Bhavya Vasudeva$\palm$, Puneesh Deora$\evg$, Yize Zhao$\evg$, Vatsal Sharan$\palm$, Christos Thrampoulidis$\evg$\\
%\thanks{ Use footnote for providing further information
% about author (webpage, alternative address)---\emph{not} for acknowledging
% funding agencies.  Funding acknowledgements go at the end of the paper.} \\
$\palm$University of Southern California \\
$\evg$University of British Columbia\\
\small \texttt{bvasudev@usc.edu}, \texttt{\{puneeshdeora,cthrampo\}@ece.ubc.ca}}
%\small \texttt{\{bvasudev,vsharan\}@usc.edu}, \texttt{\{puneeshdeora,zhaoyize,cthrampo\}@ece.ubc.ca}}

\iclrfinalcopy % Uncomment for camera-ready version, but NOT for submission.
\begin{document}

\maketitle

\begin{abstract}%
The growing adoption of spectrum-aware matrix-valued optimizers such as Muon and Shampoo in deep learning  motivates a systematic study of their generalization properties and, in particular, when they might outperform competitive algorithms. We approach this  question by introducing appropriate simplifying abstractions as follows: First, we use imbalanced data as a testbed. Second, we study the canonical form of such optimizers, which is Spectral Gradient Descent (SpecGD)—each update step is $\Ub\Vb^T$ where $\Ub\Sigmab \Vb^T$ is the truncated SVD of the gradient. Third, within this framework we identify a canonical setting for which we precisely quantify when SpecGD outperforms vanilla Euclidean GD. For a Gaussian mixture data model and both linear and bilinear models, we show that unlike GD, which prioritizes learning dominant principal components of the data first, SpecGD learns all principal components of the data at equal rates. We demonstrate how this translates to a growing gap in class balanced loss favoring SpecGD early in training and further show that the gap remains consistent even when the GD counterpart uses adaptive step-sizes via normalization. By extending the analysis to deep linear models, we show that depth amplifies these effects. We empirically verify our theoretical findings on a variety of imbalanced datasets. Our experiments compare practical variants of spectral methods, like Muon and Shampoo, against their Euclidean counterparts and Adam. The results validate our findings that these spectral optimizers achieve superior generalization by promoting a more balanced learning of the data's underlying components.
\end{abstract}

%\begin{keywords}%
%  List of keywords%
%\end{keywords}
\vsn
\vsn
\vspace{-2.5mm}
\section{Introduction}
\vsn
\label{sec:intro}
\vspace{-1.5mm}
Spectrum-aware optimizers such as Shampoo \citep{gupta2018shampoo}, Muon \citep{pethick2025trainingdeeplearningmodels,jordan2024muon} have recently gained significant traction in the deep learning community, delivering substantial training speedups for deep classifiers \citep{jordan2024muon} and transformer language models \citep{vyas2025soap,liu2025muon} compared to standard methods like SGD \citep{robbins1951stochastic} with momentum or even Adam \citep{kingma2014adam}. The key distinction lies in how these methods treat neural network parameters: while SGD and Adam operate entry-wise on vectorized parameters, Shampoo and Muon work directly with matrix-valued parameters (e.g., weight matrices and attention matrices) at the layer level \citep{Carlson2015PreconditionedSD,large2024scalable}. This matrix-level approach intuitively enables optimization trajectories that entry-wise methods cannot achieve. Despite their empirical success, a fundamental question remains unanswered: \textbf{when do spectrum-aware optimizers generalize better than standard methods?}

The challenge is substantial. Even well-established optimizers like Adam, despite more than a decade of practical dominance, remain less well understood than SGD, whose Euclidean GD trajectory is well-characterized both statistically and algorithmically. Recent theoretical progress has begun to illuminate these methods through the lens of implicit bias. For instance, while the implicit bias of GD toward $\ell_2$-norm max-margin classifiers is well-established \citep{soudry2018implicit,ji2018gradient}, recent work proved that Adam converges toward $\ell_\infty$-norm max-margin classifiers in linear settings \citep{adam2024implicit1,xie2024implicit}. Conceptually, this difference can be understood by noting that Adam reduces to SignGD when momentum and preconditioning histories are ignored (\textit{i.e.}, $\beta$ parameters set to zero). This reduction has often been leveraged before to study Adam's properties, since SignGD is simpler to analyze, e.g.,  \citet{kunstner2024heavy,vasudeva2025richsimpleimplicitbias}.

\vsn
A similar reduction exists for spectrum-aware optimizers: Shampoo and Muon without accumulations and with exact matrix operations reduce to Spectral Gradient Descent (SpecGD)~\citep{pethick2025trainingdeeplearningmodels,bernstein2024old}, where each update step is $\Ub\Vb^T$, with $\Ub\Sigmab \Vb^T$ denoting the truncated SVD of the gradient (see \cref{sec:linear-expts} for details). Thus, just as SignGD serves as the canonical form for understanding Adam, SpecGD serves as a canonical form for understanding Shampoo and Muon. Moreover, both SignGD and SpecGD are instances of normalized steepest descent with respect to $\max$ and spectral norms, respectively \citep{bernstein2009matrix,Carlson2015PreconditionedSD,pethick2025trainingdeeplearningmodels}. 

\begin{figure}[t]
    \centering
    \includegraphics[valign=c,width=0.24\linewidth]{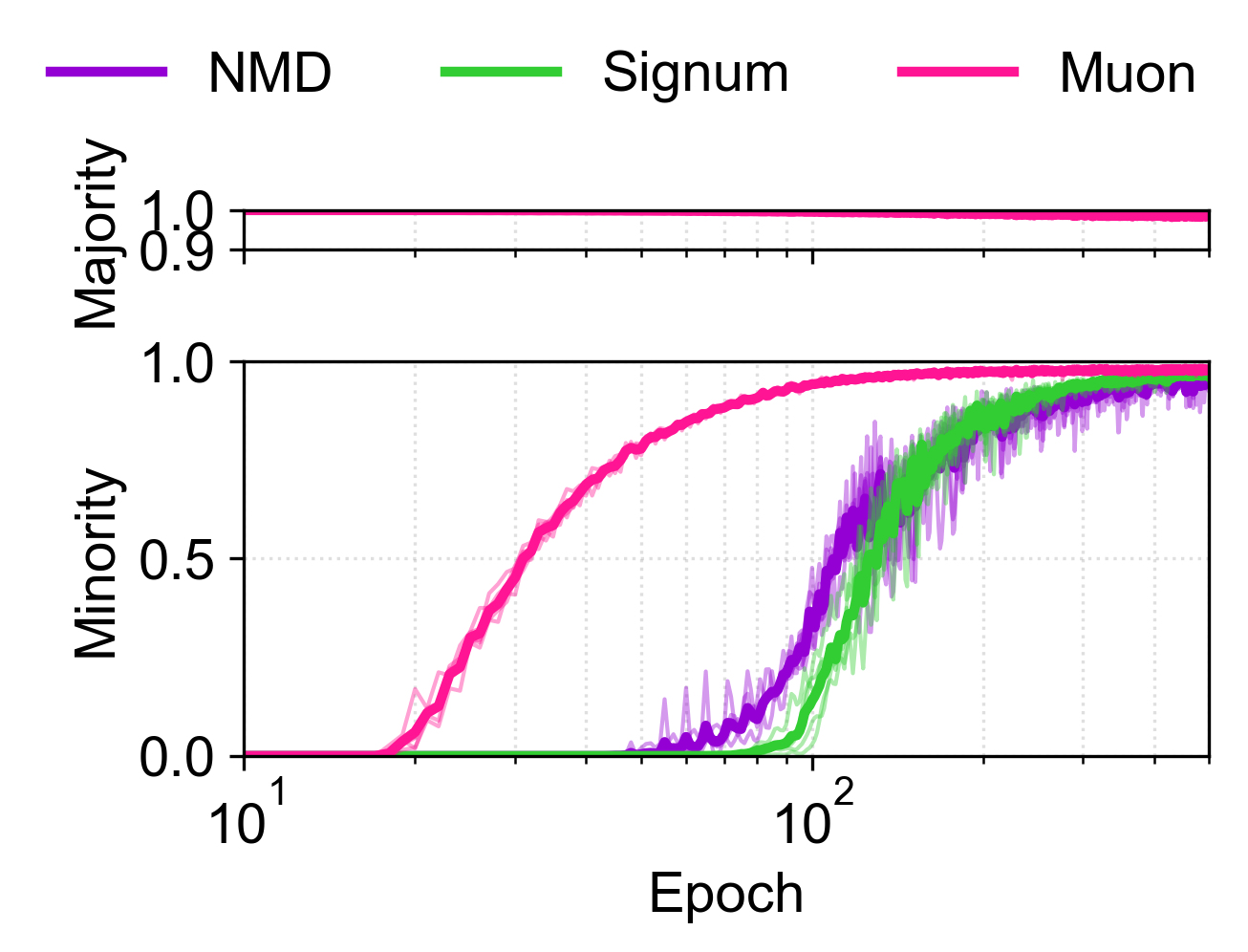}\hspace{1mm}%\includegraphics[width=0.23\linewidth]{imgs/cm_svd.pdf}
    \includegraphics[valign=c,width=0.74\linewidth]{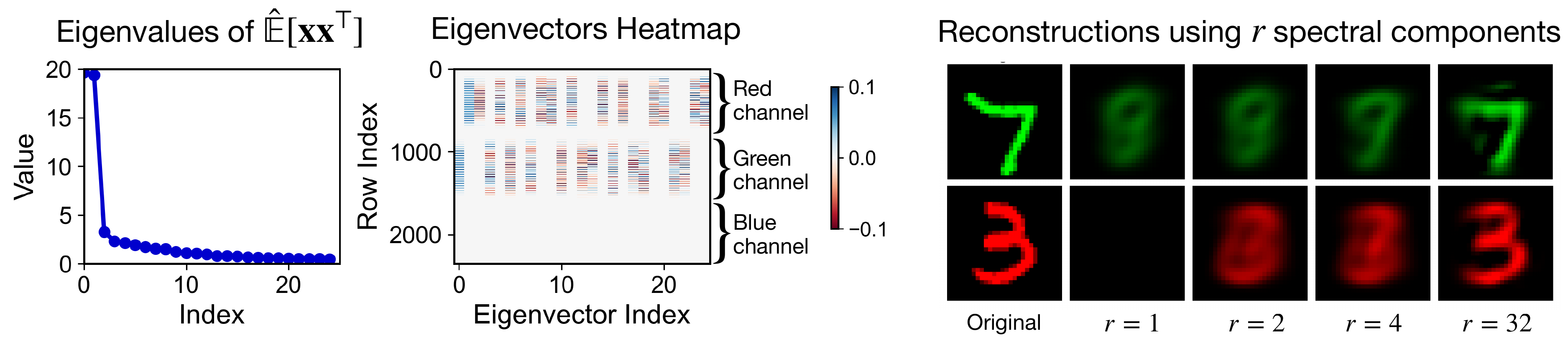}
    \vspace{-1mm}
    \caption{(Left) Test accuracy for training an MLP on Colored-MNIST with a $99\%$ digit–colour correlation using NMD, Signum, and Muon (see text for details). All optimizers achieve near-perfect accuracy on majority groups (same digit–colour labels), but Muon outperforms the others on minority groups (opposite digit–colour labels), especially early in training. (Middle) Eigenvalues and eigenvectors of the empirical moment matrices for all training images show that the dominant spectral components correspond to colour. (Right) Reconstructions of sample images reveal that digit-shape information lies in less dominant components. This suggests that Muon’s spectral design promotes balanced learning of these components, leading to superior generalization.   
    }
    \vspace{-4mm}
    \label{fig:group_imbalance_mlp}
\end{figure}

\begin{wrapfigure}[27]{R}{0.41\textwidth}
\vspace{-4ex}
  \begin{center}
    \includegraphics[width=0.78\linewidth]{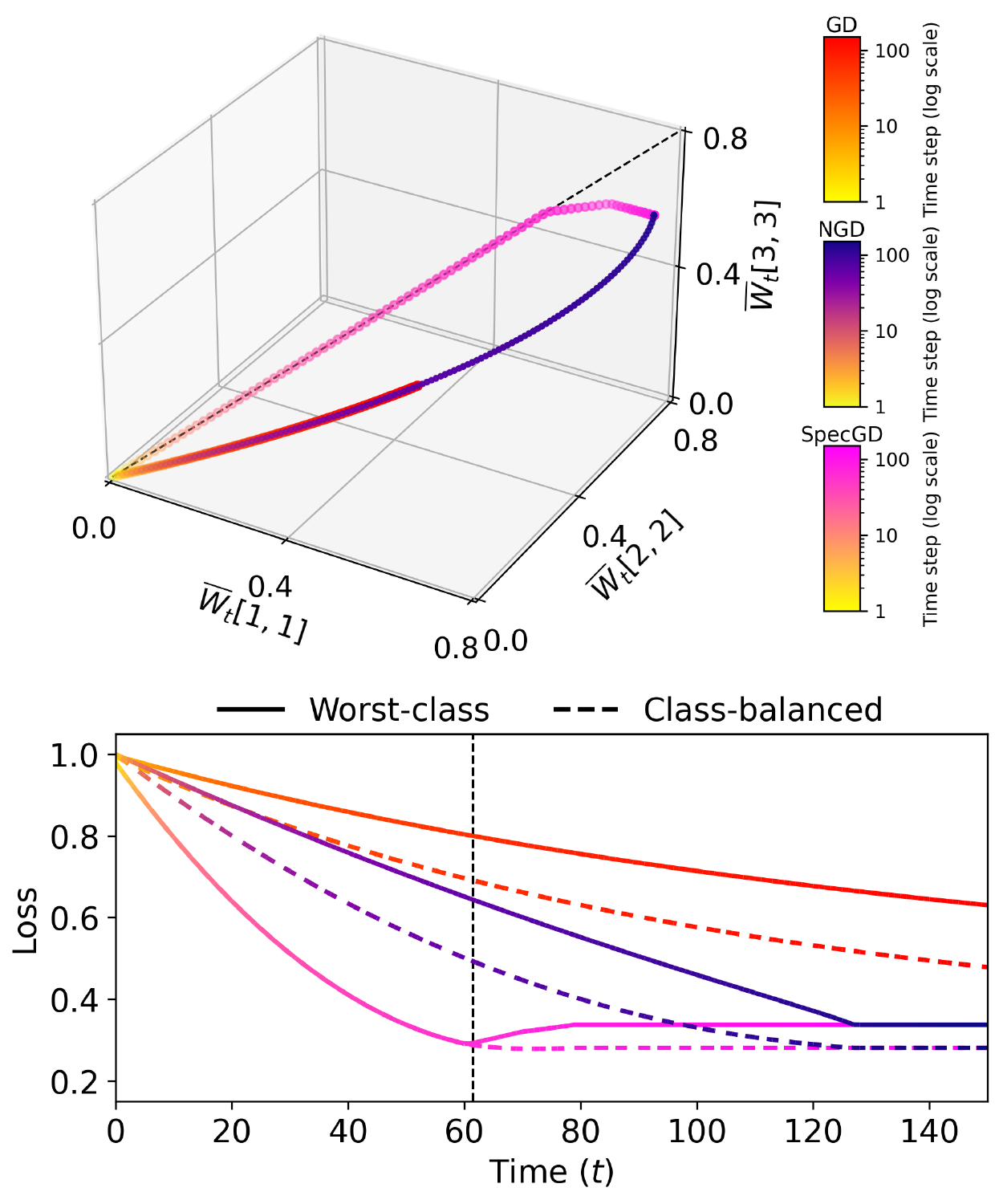}
  \end{center}
  \vspace{-2ex}
  \caption{Dynamics of the iterates (top) and loss (bottom) for GD, NGD, and SpecGD on a linear model with class imbalance (priors $p_1\!>\!p_2\!>\!p_3$). SpecGD learns all spectral components at the same rate, whereas (N)GD prioritizes more dominant components. Although all converge to the same solution, SpecGD significantly outperforms in worst-class and class-balanced loss early-on in training. See \cref{sec:theory} for a precise characterization of SpecGD dynamics and a proof of its superiority over (N)GD.}
  \label{fig:dynamics}
\end{wrapfigure}
Recently, \citet{Chen_NSD,tsilivis2024flavors} characterized SpecGD's optimization bias in linear multi-class and homogeneous binary classification, showing that it drives weights toward a max-margin classifier with respect to the spectral norm. However, these results have two limitations. First, they only describe the algorithm's behavior in the terminal phase of training, which may not reflect practical deep learning scenarios that often employ early stopping. Second, and more importantly, implicit bias results provide no direct guarantees about generalization performance—the ultimate objective in machine learning. For instance, even in linear settings, the minimum spectral norm solution may not be unique. Recent work has also studied Muon and Shampoo, but primarily from the perspective of optimization properties \citep{pethick2025trainingdeeplearningmodels,morwani2024newperspectiveshampoospreconditioner,vyas2025soap,li2025note,chang2025convergencemuon,chen2025muonoptimizesspectralnorm} or scalability \citep{liu2025muon,boreiko2025towards}, leaving their generalization behavior and underlying mechanisms comparatively underexplored.

 Motivated by the apparent lack of understanding of the generalization properties of SpecGD, we ask: \textbf{Can we identify concrete settings where SpecGD generalizes better than standard (Euclidean) GD?} 

\textbf{Imbalanced Data as Playground.} We introduce imbalanced data as a testbed for studying SpecGD's potential generalization advantages. \cref{fig:group_imbalance_mlp} provides a concrete demonstration: we train an MLP on Colored-MNIST \citep{irm,gs} under severe group imbalance, where each digit appears in its majority color $99\%$ of the time during training, creating a strong spurious digit-color correlation (see \cref{app:cmnist} for details). At test time, we evaluate on majority and minority groups with same and opposite digit-color associations, respectively. The results are revealing: whereas Muon improves on both groups early in training, its Frobenius and $\max$ norm counterparts, namely normalized momentum descent (NMD) and Signum \citep{bernstein2018signsgd}, respectively, take much longer to improve on the minority group. We observe a similar effect on the CIFAR dataset \citep{krizhevsky2009learning} with class imbalance (see \cref{sec:cifar10} for details). 

\noindent\textbf{Theoretical Comparison in Class-imbalanced Data.} To demystify this behavior and analyze the implicit regularization of different update rules, we analyze in detail a linear model trained with squared-loss under class imbalance. We derive closed-form expressions for the training trajectories of (Euclidean) GD and SpecGD, and use them to show that, with early stopping, SpecGD achieves a lower worst-class and class-balanced risk than GD, whereas both approach the same risk asymptotically in time. We also show that introducing adaptive step-size in GD via normalization does not equalize the gap: SpecGD outperforms normalized GD (NGD) as well. This is illustrated in \cref{fig:dynamics}. 

Extending to deep networks, we derive the closed-form training trajectory for SpecGD with $L \geq 1$ layers. Our findings reveal two key effects of depth: i) it accelerates the learning of all spectral components, and ii) it narrows the time gap between the saturation points of different components.    

\noindent\textbf{Experimental Results.} We empirically validate our theoretical insights through experiments on datasets with class or group imbalance. We demonstrate that practical variants of SpecGD, such as Muon and Shampoo, achieve superior generalization over SGD by promoting a more balanced learning of the spectral components of the data. While our theory focuses on uncovering the distinct learning mechanisms of SpecGD and GD, our experiments also include Adam to provide a comprehensive benchmark against a commonly-used baseline.

\vsn
\vsn
\section{Results on CIFAR with STEP-Imbalance} \label{sec:cifar10}
\vsn
\vsn

\begin{wrapfigure}[20]{r}{0.48\textwidth} % 
  \vspace{-3ex}
\centering
\begin{minipage}[t]{0.7\linewidth}
  \centering
  \includegraphics[width=\linewidth]{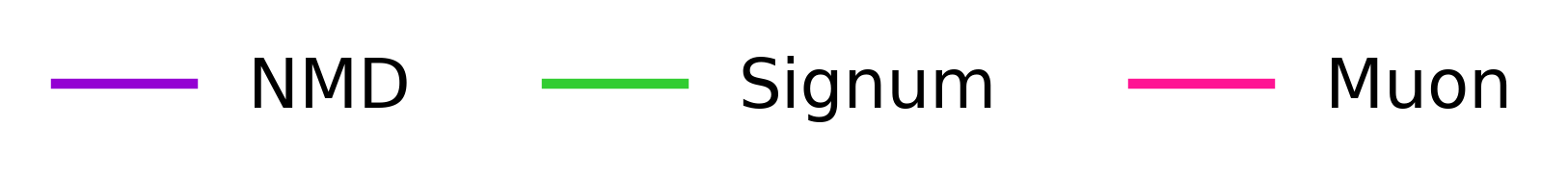} 
\end{minipage}
\vspace{-1ex}
\begin{minipage}[t]{0.47\linewidth}
  \centering
  \includegraphics[width=\linewidth]{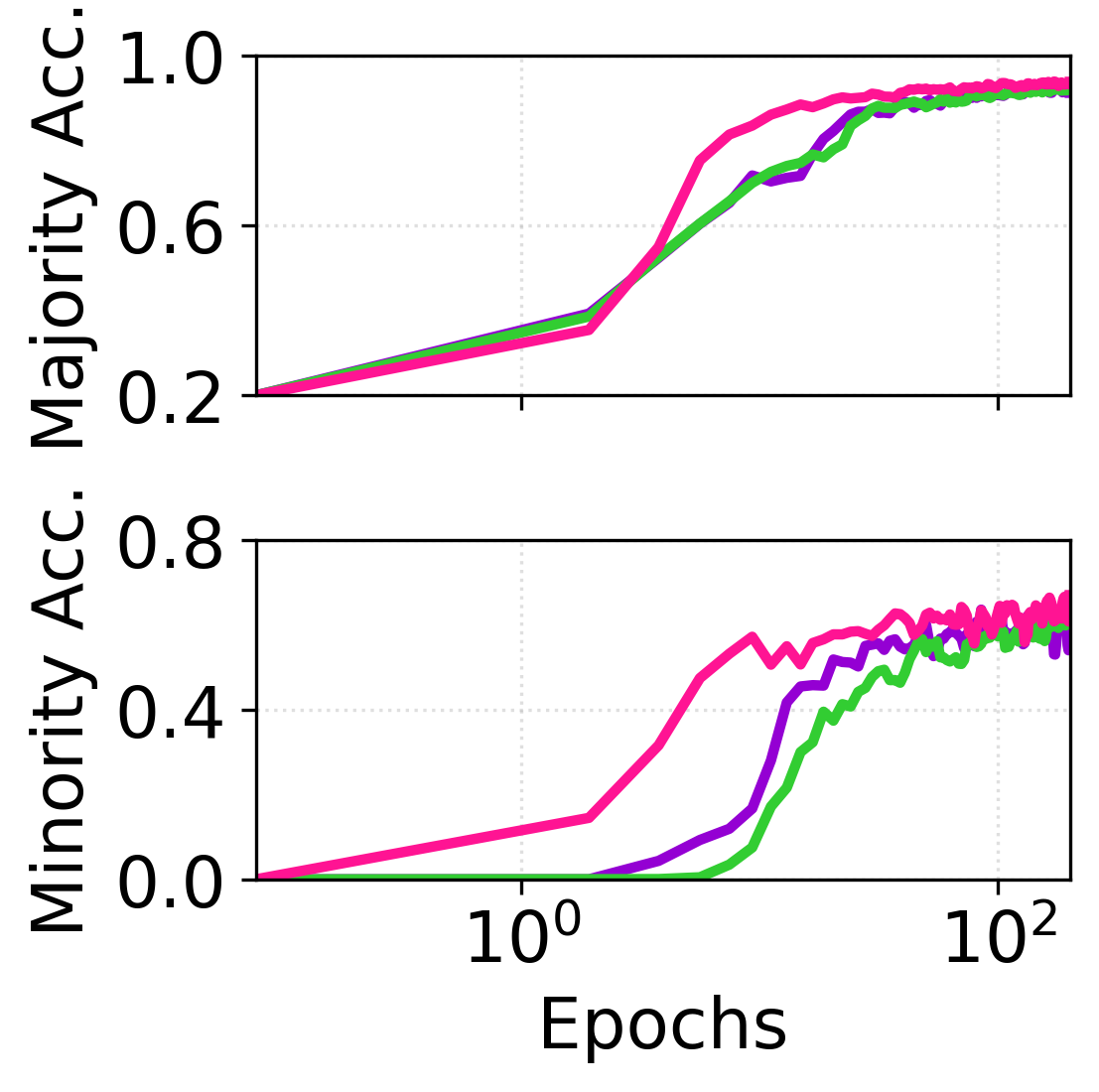}
  \vspace{-3ex}
  \caption*{(a) CIFAR-10}
\end{minipage}\hfill
\begin{minipage}[t]{0.47\linewidth}
  \centering
  \includegraphics[width=\linewidth]{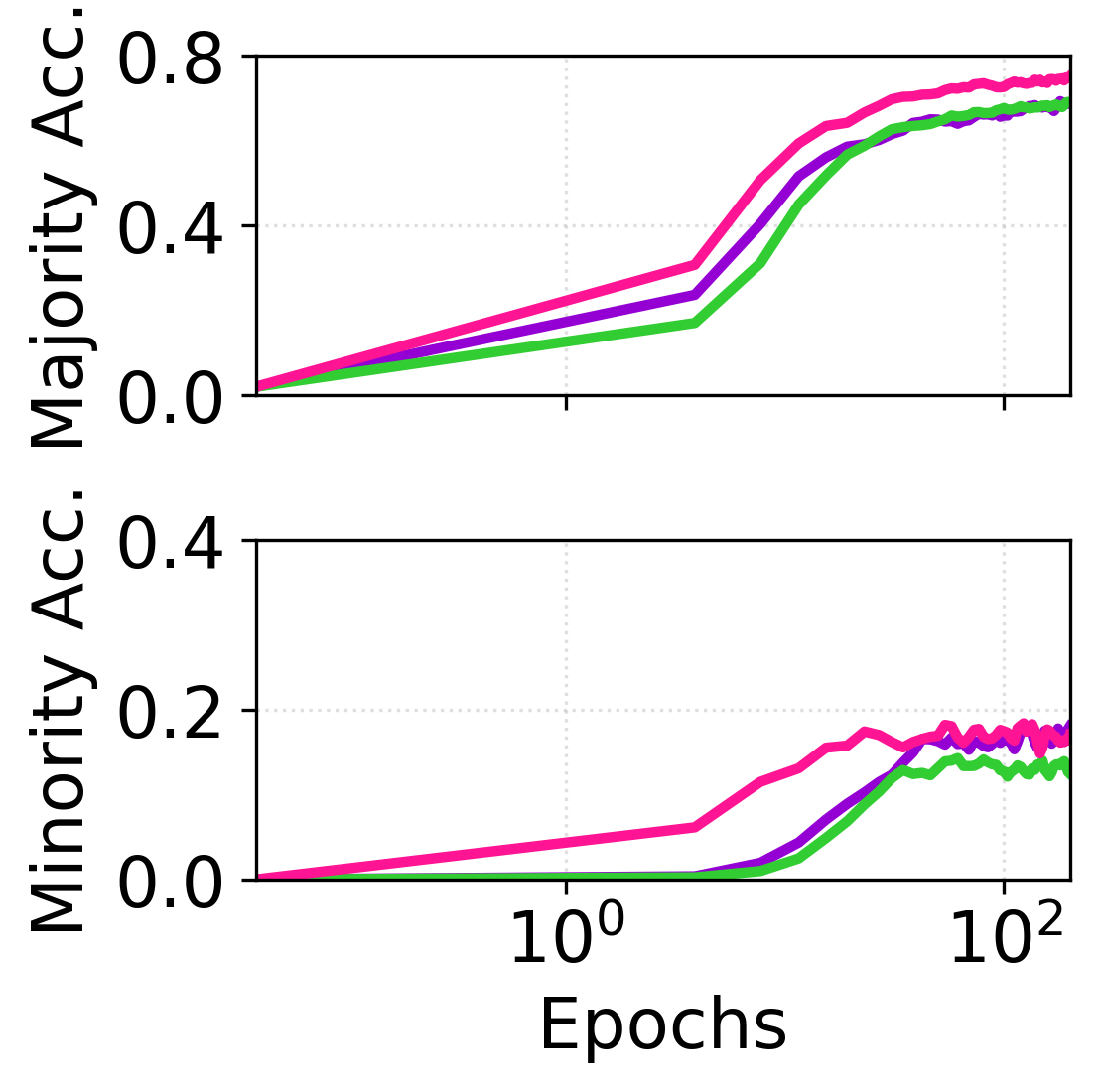}
  \vspace{-3ex}
  \caption*{(b) CIFAR-100}
\end{minipage}
  % \vspace{-2ex}
\caption{Test accuracy dynamics for training ResNet-based models on (a) CIFAR-10, and (b) CIFAR-100, both with STEP imbalance ($20{:}1$ majority-to-minority class ratio) using NMD, Signum, and Muon. All optimizers achieve comparable accuracy on majority classes, but Muon outperforms the others on minority-class performance, especially early in training.
}
\label{fig:cifar_accuracy}
\end{wrapfigure}

In this section, we conduct a preliminary experiment to investigate whether Muon's superiority observed in the group-imbalanced setting in \cref{fig:group_imbalance_mlp} also persists under class imbalance. We consider step-imbalanced CIFAR-10 and CIFAR-100~\citep{krizhevsky2009learning} with imbalance ratio $20$ (majorities  contain $20\times$ more samples than minorities). We train ResNet-18/50 on CIFAR-10/100 with three optimizers: NMD, Signum, and Muon (see \cref{app:cifar10} for details and additional results with lower imbalance ratios). As shown in \cref{fig:cifar_accuracy}, Muon achieves superior minority-class performance compared to NMD and Signum. While all optimizers achieve similar majority-class accuracy, Muon more effectively reduces the performance gap between minority and majority classes, particularly early on in training. This result further suggests that spectral methods may offer advantages in imbalanced settings, motivating our subsequent theoretical investigation into the mechanisms underlying this behavior.
 
\vsn
\vsn
\section{Linear Model on Class-Imbalanced Data}
\vsn
\label{sec:linear-expts}
\vsn
\paragraph{Notation.} We denote matrices, vectors and scalars by $\A$, $\ab$, and $a$, respectively. We denote the $(i,j)$-th entry of matrix $\A$ as $A[i,j]$. Let $\norm{\cdot}_F$, $\norm{\cdot}_{\max}$, and $\norm{\cdot}_2$ denote the Frobenius, $\max$ and spectral norms, respectively, where $\norm{\A}_{\max}:=\max_{i,j}|A[i,j]|$. Let $\norm{\ab}_2$ denote the $\ell_2$ norm of $\ab$. $\ind{\cdot}$ denotes the indicator function, \textit{e.g.}, $\ind{a \ge b} = 1$ if $a \ge b$ and $0$ otherwise. For matrices $\A,\Bb$, the inner product $\inpb{\A}{\Bb}=\tr(\A\Bb^\top)$.

%\paragraph{Setup.} 
Let $\W\in\R^{K\times d}$ denote the weight matrix of a linear model, and $\Ellc(\W)$ denote the loss. Let 
%$\Nab:=\Nabla\Ellc(\W)$ denote the gradient, and 
$\W_t$ and $\Nab_t:=\Nabla\Ellc(\W_t)$ denote the iterate and gradient at time $t$, respectively. Normalized steepest-descent (NSD) updates with respect to norm $\norm{\cdot}$, with step-size $\eta>0$, are  \citep{boyd2009convex}:
\vsn
\begin{align}\label{eq:update}
    \W_{t+1}=\W_t-\eta\Delb_t, \,\, \text{where} \,\, \Delb_t:=\argmax\nolimits_{\norm{\Delb}\leq 1}\,\langle \Nab_t,\Delb\rangle.
\end{align}
As discussed in \cref{sec:intro}, NGD, SignGD and SpecGD are instances of normalized steepest descent \cref{eq:update} with respect to Frobenius, $\max$ and spectral norm, respectively. Normalized momentum steepest-descent (NMD) updates are defined similarly as \cref{eq:update} but use momentum $\M_t=\beta \M_{t-1}+(1-\beta)\Nab_t$ in place of $\Nab_t$. \cref{tab:nsd-nmd-updates}  summarizes the updates for NSD and NMD for the three norms. Note that Muon with exact matrix operations is NMD with spectral norm, and reduces to SpecGD when $\beta=0$. Also see \cref{app:update_rules}, where we discuss how Shampoo and Adam without accumulations reduce to SpecGD and SignGD, respectively.

\begin{table}[t]
  \centering
  \caption{Normalized steepest descent (NSD) and normalized momentum descent (NMD) updates for different norms. Here, $\Nab_t$ denotes the gradient, $\M_t=\beta \M_{t-1} +(1-\beta)\Nab_t$ denotes the momentum term, and $\Nab_t=\Ub_t\Sigmab_t\Vb^\top$ and $\M_t=\widetilde{\Ub}_t\widetilde{\Sigmab}_t\widetilde{\Vb}^\top$ denote their SVDs. Setting $\beta=0$ in the NMD updates yields the corresponding NSD updates.}
  \resizebox{\linewidth}{!}{
  \begin{tabular}{lll}
    \toprule
    Norm & NSD update $\Delb_t\!:=\!\argmax\nolimits_{\norm{\Delb}\leq 1}\langle \Nab_t,\Delb\rangle$ & NMD update $\Delb_t\!:=\!\argmax\nolimits_{\norm{\Delb}\leq 1}\langle \M_t,\Delb\rangle$ \\
    \midrule
    $\norm{\cdot}_F$
      & NGD: $\displaystyle \Delb_t = \tfrac{\Nab_t}{\norm{\Nab_t}_F}$
      & NMD: $\displaystyle \Delb_t = \tfrac{\M_t}{\norm{\M_t}_F}$ \\
    $\norm{\cdot}_{\max}$
      & SignGD: $\Delb_t = \sign{\Nab_t}$
      & Signum: $\Delb_t = \sign{\M_t}$ \\
    $\norm{\cdot}_2$ 
      & SpecGD: $\Delb_t = \Ub_t \Vb_t^\top$
      & Muon: $\Delb_t = \widetilde{\Ub}_t \widetilde{\Vb}_t^\top$ \\
    \bottomrule
  \end{tabular}}
  \label{tab:nsd-nmd-updates}
\end{table}

\vsn
\paragraph{Data Model.} Let $y\in\{1,\dots,k\}$ denote the class labels, and let the corresponding one-hot labels be denoted as $\yb\in\{\eb_c\}_{c=1}^k$, where $\eb_c$ is the $c$-th standard basis vector in $\R^k$. Class probabilities are given by $\pr_c:=\Pr(y=c)$ such that $\sum_{c=1}^k \pr_c = 1$. Each class $c$ has an associated mean vector $\mub_c\in \R^d$, and samples for class $c$ are generated as isotropic Gaussians with mean $\mub_c$. Finally, we assume that the means $\mub_c$ are orthogonal. 
Put together, the data model we study is such that:
%\vsn
\begin{align}\label{eq:dm}
 \Pr(y=c)=p_c, ~c\in[k],\quad \xb|y \sim \mathcal{N}\bigl(\mub_y,\sigma_x^2\Id_d\bigr),\quad \text{and}\quad \mub_1\perp\ldots\perp\mub_k\,, \norm{\mub_c}=\mu.\tag{DM}
\end{align}
Define minority class index $m\!:=\!\arg\min_{c\in[k]}p_c$ with class prior $p_m$, and majority class index $M\!:=\!\arg\max_{c\in[k]}p_c$ with class prior $p_M$. Let $\snrv\!:=\! \tfrac{\mu^2}{\sigma_x^2}$ denote the signal-to-noise ratio (SNR). 

\vsn
\vsn
\subsection{Experimental Results}
\vsn
\begin{wrapfigure}[12]{R}{0.68\textwidth}
\vspace{-5ex}
  \begin{center}
    \includegraphics[width=\linewidth]{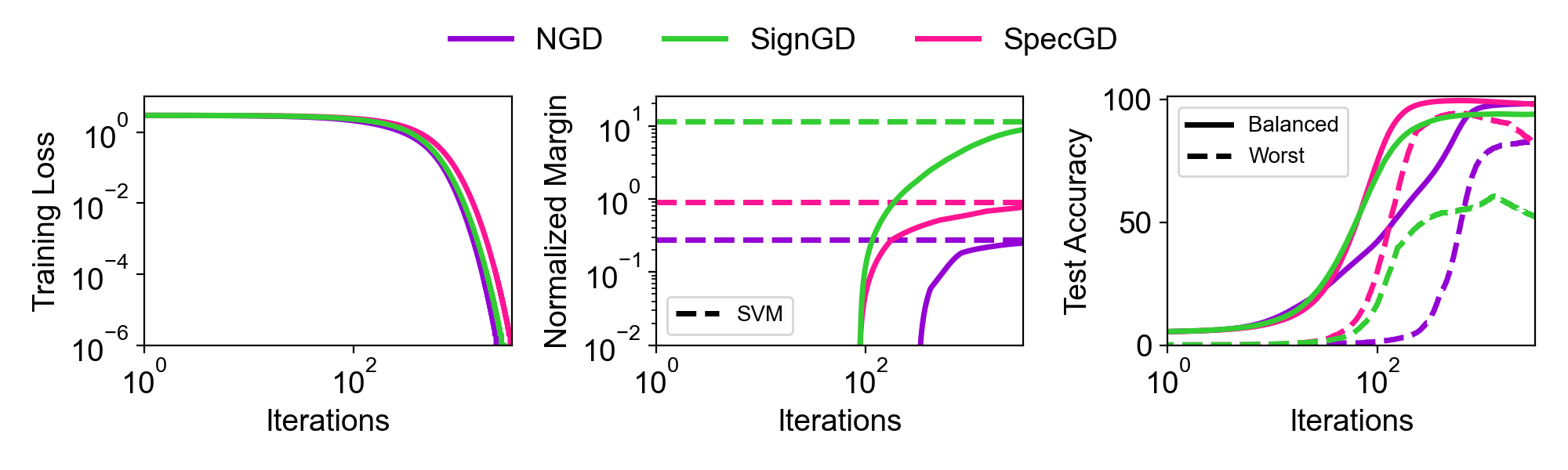}
  \end{center}
  \vspace{-2ex}
    \caption{Results for a linear model trained with NGD, SignGD and SpecGD on heavy-tailed class-imbalance data using cross-entropy loss. Early-stopped SpecGD attains higher class-balanced and worst-class accuracy compared to other update rules or stopping points.}%\vscomment{is x-axis on log-scale here?}yes
    \label{fig:linear}
\end{wrapfigure}
For the experiments, we consider a heavy-tailed class-imbalanced setting by choosing $p_c\!\propto\!\tfrac{1}{c}$, and we sample each $\mub_c$ independently from a zero-mean isotropic Gaussian distribution and normalize it. We use $20$ classes, $100$ samples, $d=200$ and $\sigma_x=0.1$. We initialize the weight matrix $\W_0$ by sampling each entry independently from $\Nn(0,\tfrac{1}{d})$. We use learning rates $0.025$, $0.005$ and $5\times10^{-4}$ for NGD, SignGD and SpecGD, respectively. These choices are made such that train loss curves of the algorithms are comparable. The training is stopped when the norm of the gradient for SpecGD falls below $10^{-6}$, as small gradients can introduce numerical imprecision in the SVD computation. 

\cref{fig:linear} shows results of training a linear model in this setting using NGD/SignGD/SpecGD to minimize cross-entropy loss.  We observe that for all three update rules, as train loss approaches $0$, the iterates $\W_t$ converge to a solution that maximizes the margin defined with respect to the corresponding norm \citep{Chen_NSD}. However, comparing test performance, early-stopped SpecGD attains higher class-balanced and worst-class test accuracies compared to NGD or SignGD at any stopping point.  
\vsn
\vsn
\subsection{Theoretical Analysis}
\vsn
\label{sec:theory}
In this section, we analyze and compare the dynamics of GD and SpecGD. For tractability, we consider squared-loss, and population setting. Specifically, let  $\Ellc(\W)\!=\!\tfrac{1}{2}\E\norm{\y-\W\xb}^2_2$, where the expectation is over the joint distribution of $\x,\y$ in (say)~\eqref{eq:dm}. In addition, define $\Lc_c(\W)\!=\!\tfrac{1}{2}\E_{\x|y=c}\norm{\y-\W\xb}_2^2$ to be the class-conditional loss for class $c\!\in\![k]$ and let $\Lcbal(\W)\!=\!\frac{1}{k}\sum_{c\in[k]}\Lc_c(\W)$ be the balanced loss.
Define the population moment matrices $\Sigb_{\xb\xb} \! := \! \E\bigl[\xb \xb^{\top}\bigr]$ and $\Sigmab_{\yb\xb}\!:=\! \E\bigl[\yb\,\xb^{\top}\bigr]$. Our analysis holds when the (full) SVDs of these moment matrices are jointly diagonalizable.

\begin{condition}[Joint Diagonalizability]\label{ass:diag} There exist orthonormal matrices $\Ub\!\in\!\R^{k\times k}, \Vb\!\in\!\R^{d\times d}$, and matrices $\Sb\!\in\!\R^{k\times d}, \Lambdab\!\in\!\R^{d\times d}$ with non-zero entries only along their main diagonals\footnote{With slight abuse of terminology, we will refer to rectangular matrices having non-zero entries only on their main diagonals simply as ‘diagonal’ hereon.} ($\svyx_1\geq \svyx_2\dots\geq \svyx_k\geq 0$ and $\svxx_1\geq \svxx_2\dots\geq \svxx_d\geq 0$, respectively), such that,
    $\Sigb_{\yb\xb}          = \Ub\Sb \Vb^{\top}$ and $\Sigb_{\xb\xb}   = \Vb\Lambdab\Vb^{\top}$.
\end{condition}
\vsn
Under this condition applied to the empirical moment matrices, \citet{saxe2013exact,gidel2019implicit} derive closed-form training dynamics of two-layer linear networks. Here, we instead apply this condition on population statistics making it possible to study test statistics. In the lemma below, we show that our data model~\eqref{eq:dm} satisfies this condition. See \cref{app:svd-ci} for the proof.

\begin{lemma}\label{lem:assumption} The population moment matrices of data model \eqref{eq:dm} satisfy Condition \ref{ass:diag}, with $\svyx_c=\mu p_c$ for $c\in[k]$, $\svxx_c=\mu^2p_c+\sigma_x^2$ for $1\leq c\leq k$, and $\svxx_c=\sigma_x^2$ for $k<c\leq d$. 
\end{lemma}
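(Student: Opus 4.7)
The plan is to compute both population moment matrices explicitly and then exhibit a joint SVD by exploiting the orthogonality of the class means. First, conditioning on $y$ and using $\yb = \eb_y$, $\Pr(y=c) = p_c$, and $\E[\xb \mid y=c] = \mub_c$, I would obtain $\Sigmab_{\yb\xb} = \sum_{c=1}^k p_c\,\eb_c\,\mub_c^\top$. For the covariance, $\E[\xb\xb^\top \mid y=c] = \mub_c\mub_c^\top + \sigma_x^2 \Id_d$, which together with $\sum_c p_c = 1$ gives $\Sigmab_{\xb\xb} = \sum_{c=1}^k p_c\,\mub_c\mub_c^\top + \sigma_x^2 \Id_d$.

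Next, I would construct the joint eigenbasis. Assume WLOG that classes are indexed so that $p_1 \geq p_2 \geq \cdots \geq p_k$ (permuting the standard basis of $\R^k$ otherwise). Let $\widetilde{\mub}_c := \mub_c/\mu$; by hypothesis $\{\widetilde{\mub}_1,\ldots,\widetilde{\mub}_k\}$ is orthonormal in $\R^d$, so I would extend it to a full orthonormal basis $\{\widetilde{\mub}_1,\ldots,\widetilde{\mub}_d\}$ by appending any orthonormal basis of the orthogonal complement. Set $\Vb := [\widetilde{\mub}_1,\ldots,\widetilde{\mub}_d]$ and $\Ub := \Id_k$.

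To close the proof, I would verify the two factorizations and the ordering. Substituting the orthonormal decomposition into $\Sigmab_{\xb\xb}$ yields $\sum_c (\mu^2 p_c)\,\widetilde{\mub}_c\widetilde{\mub}_c^\top + \sigma_x^2 \Vb\Vb^\top = \Vb\,\Lambdab\,\Vb^\top$, where $\Lambdab$ is diagonal with $\Lambdab[c,c] = \mu^2 p_c + \sigma_x^2$ for $c \leq k$ and $\Lambdab[c,c] = \sigma_x^2$ for $c > k$. Defining $\Sb \in \R^{k\times d}$ as diagonal with $\Sb[c,c] = \mu p_c$, a one-line check shows row $c$ of $\Ub\Sb\Vb^\top$ equals $\mu p_c\,\widetilde{\mub}_c^\top = p_c\,\mub_c^\top$, matching $\Sigmab_{\yb\xb}$. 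The ordering conditions of Condition~\ref{ass:diag} then follow: both $\svyx_c = \mu p_c$ and $\svxx_c = \mu^2 p_c + \sigma_x^2$ inherit non-increasing order from $p_c$ on $c \leq k$, and the tail values $\svxx_c = \sigma_x^2$ for $c > k$ are bounded above by $\svxx_k = \mu^2 p_k + \sigma_x^2$, so the full spectrum is globally non-increasing.

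There is no serious obstacle: the orthogonality of the $\mub_c$ delivers the joint eigenbasis essentially for free, and the only bookkeeping is the WLOG relabeling that aligns decreasing $p_c$ with the singular-value ordering and the (non-unique) orthonormal extension on the remaining $d-k$ dimensions. The whole argument is a direct verification once the closed-form expressions for the two moment matrices are in hand.
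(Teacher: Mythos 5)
Your proposal is correct and follows essentially the same route as the paper: compute $\Sigmab_{\xb\xb}=\sum_c p_c\mub_c\mub_c^\top+\sigma_x^2\Id_d$ and $\Sigmab_{\yb\xb}=\sum_c p_c\eb_c\mub_c^\top$, then use the normalized means (extended to an orthonormal basis of $\R^d$) as the columns of $\Vb$ with $\Ub=\Id_k$ to read off the diagonal factors $\Lambdab$ and $\Sb$. Your explicit handling of the singular-value ordering via the WLOG relabeling is a minor addition the paper leaves implicit, but otherwise the arguments coincide.
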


 This result shows that the data model \eqref{eq:dm} is one such setting where Condition \ref{ass:diag} strictly holds. Notably, in \citet{gidel2019implicit}, the authors validate whether a weaker version of this condition is satisfied for some datasets used in practice. Specifically, they check whether there exist orthonormal matrices $\Ub, \Vb$ such that the empirical moment matrices are jointly diagonalizable as $\Ub\Sigb_{\yb\xb}\Vb^\top$ and $\Vb(\Sigb_{\xb\xb}+\Bb)\Vb^\top$, respectively, where $\norm{\Bb}$ is much smaller than $\norm{\Sigb_{\xb\xb}}$. They find that on datasets like MNIST, CIFAR, etc., the ratio is indeed small (see Table 1 in \citet{gidel2019implicit}), so this can be considered as a reasonable condition. We consider Condition \ref{ass:diag}, where $\norm{\Bb}=0$, mainly to make the analysis more tractable. See \cref{app:linear_res} for further discussion, and experimental results that examine the effect of relaxing this condition.  

%\vsn
\vsn
\paragraph{Evolution of $\W_t$.} We compare the evolution of the weight matrix $\W_t$ for GD and SpecGD over iterations $t$. For each iteration $t$, define $\Wbbar_t:=\Ub^\top\W_t\Vb$ and recall that  $\Wbar_t[c,c]$ denotes the $c^{\text{th}}$ main-diagonal entry of $\Wbbar_t$, for $c\in[k]$. 

Under Condition~\ref{ass:diag}, \citet{saxe2013exact} (see also \cref{app:evo-w}) shows that when initialized at zero and run with sufficiently small step size $\eta<\min_{c\in[k]}(\svxx_c)^{-1}$, GD iterates $\W_t$ are such that $\Wbbar_t$ is diagonal at each iteration with diagonal entries evolving as (and approximation becoming accurate for gradient flow (GF) $\eta\!\rightarrow\!0$):\vsn%\footnote{The proof can be found in \cref{app:evo-w}.}:
    \begin{align}\label{eq:gd-evo-w}
\Wbar_{t}[c,c]=
      \svyx_c \,
  \tfrac{1-\left(1-\eta \svxx_c\right)^{t}}{\svxx_c}\approx
      \tfrac{\svyx_c}{\svxx_c}
      \left(1-e^{-\eta \svxx_c t}\right).
\end{align}
The following result characterizes the dynamics of $\W_t$ for SpecGD. See \cref{app:evo-w} for the proof.
\vsn
\begin{proposition}\label{th:evo-w}
    Assume zero initialization $\W_\zob=\zob$ and Condition \ref{ass:diag} holds. Then, 
    for SpecGD with step size $\eta<\min_{c\in[k]}\tfrac{\svyx_c}{\svxx_c}$, at each iteration, $\W_t=\Ub\Wbbar_t\Vb^\top$ where $\Wbbar_t$ is zero except its main diagonal along which the entries evolve as follows for $c\in[k]$:
    \vsn
    \begin{align*}
  \Wbar_{t}[c,c] = \eta\,t \, \ind{t \le \tfrac{\svyx_c}{\eta \svxx_c}} +
  \tfrac{\svyx_c}{\svxx_c}\,
  \ind{t > \tfrac{\svyx_c}{\eta \svxx_c}}.
\end{align*}
\end{proposition}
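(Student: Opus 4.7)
The plan is to proceed by induction on $t \geq 0$, maintaining the invariant that $\W_t = \Ub\,\Wbbar_t\,\Vb^{\top}$ with $\Wbbar_t \in \R^{k\times d}$ zero outside its main diagonal, and to simultaneously derive a scalar recurrence for each diagonal entry $\Wbar_t[c,c]$. The base case is immediate from $\W_0 = \zob$.

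For the inductive step, I would first compute the squared-loss population gradient $\Nab_t = \W_t\,\Sigb_{\xb\xb} - \Sigb_{\yb\xb}$. Substituting the inductive hypothesis and \cref{ass:diag} collapses this to
\begin{align*}
\Nab_t = \Ub\bigl(\Wbbar_t\,\Lambdab - \Sb\bigr)\Vb^{\top} =: \Ub\,\Db_t\,\Vb^{\top},
\end{align*}
and since $\Wbbar_t,\Lambdab,\Sb$ are all rectangular-diagonal, so is $\Db_t$, with entries $d_c := \Wbar_t[c,c]\,\svxx_c - \svyx_c$ for $c \in [k]$.

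Next, I would extract the SpecGD step direction from the compact SVD of $\Nab_t$. Since $\Db_t$ is rectangular-diagonal, one can write $\Db_t = \diag(\sign(d_c))\,\diag(|d_c|)\,[\Id_k,\mathbf{0}]$, so the polar factor of $\Nab_t$ is
\begin{align*}
\Ub_t\Vb_t^{\top} = \Ub\,\Ob_t\,\Vb^{\top}, \qquad \Ob_t := \bigl[\diag(\sign(d_c)),\,\mathbf{0}_{k\times(d-k)}\bigr] \in \R^{k\times d}.
\end{align*}
The SpecGD update $\W_{t+1} = \W_t - \eta\,\Ub_t\Vb_t^{\top}$ therefore preserves the $\Ub(\cdot)\Vb^{\top}$ structure with $\Wbbar_{t+1} = \Wbbar_t - \eta\,\Ob_t$, closing the induction and yielding the scalar recurrence
\begin{align*}
\Wbar_{t+1}[c,c] = \Wbar_t[c,c] - \eta\,\sign\!\bigl(\Wbar_t[c,c]\,\svxx_c - \svyx_c\bigr), \qquad \Wbar_0[c,c] = 0.
\end{align*}

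Finally, I would solve this scalar recursion. By \cref{lem:assumption}, $\svyx_c = \mu p_c \geq 0$, so the sign stays $-1$ while $\Wbar_t[c,c] < \svyx_c/\svxx_c$ and the entry grows as $\Wbar_t[c,c] = \eta t$ up to the crossing time $t_c^{\star} := \svyx_c/(\eta\,\svxx_c)$, after which $d_c = 0$ and the entry saturates at $\svyx_c/\svxx_c$. The main obstacle is the saturation phase in strictly discrete time: for generic $\eta$, the sign of $d_c$ alternates once the equilibrium is crossed and the iterate oscillates by $\pm\eta$ about $\svyx_c/\svxx_c$. The step-size assumption $\eta < \min_c \svyx_c/\svxx_c$ keeps this oscillation small relative to the equilibrium, and the clean piecewise formula in the statement captures the gradient-flow ($\eta \to 0$) behavior — the same convention already invoked for GD in \eqref{eq:gd-evo-w}.
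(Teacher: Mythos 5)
Your proposal is correct and follows essentially the same route as the paper's proof: show that the gradient keeps the form $\Ub(\Wbbar_t\Lambdab-\Sb)\Vb^{\top}$ with a rectangular-diagonal middle factor, read off the polar factor of its truncated SVD, and solve the resulting scalar recursion $\Wbar_{t+1}[c,c]=\Wbar_t[c,c]-\eta\,\sign{\Wbar_t[c,c]\svxx_c-\svyx_c}$ with $\Wbar_0[c,c]=0$. If anything, you are slightly more careful than the paper, whose proof writes the update as simply freezing every component with $\Wbar_t[c,c]\svxx_c\geq\svyx_c$ and thus silently assumes exact saturation, whereas you explicitly flag the $\pm\eta$ oscillation that arises in strictly discrete time when $\svyx_c/(\eta\svxx_c)$ is not an integer and note that the clean piecewise formula is the idealized (gradient-flow) description, matching the convention the paper already uses for GD.
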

\vsn\vsn
Comparing the above two displays, which contrast GD's and SpecGD's iterate evolution, shows the following. Although both methods asymptotically converge to the same solution, their trajectories differ significantly. GD learns component $c$ at a rate proportional to $\svxx_c$, meaning more dominant spectral components are learned faster. In contrast, SpecGD learns all components at the same rate until each individual value saturates and converges to its terminal value. This difference is also demonstrated in \cref{fig:dynamics} (top), which tracks $\Wbar_t[c,c]$ for GD and SpecGD in a class-imbalanced setting (data model \eqref{eq:dm}) with $\eta=0.01$, $d=k=3$, $\mu=1$, $\sigma_x^2=0.125$, and $p_1,p_2,p_3$ set as $0.5,0.3,0.2$, respectively. 

In \cref{app:linear_res}, we compare the dynamics of GD and SpecGD derived under Condition \ref{ass:diag} to a more general setting using the same data model but with finite samples and random initialization, and show that the theoretically derived dynamics closely match those observed empirically (see \cref{fig:thvsexpt}). Additionally, in \cref{app:linear_res}, we also compare the dynamics of Muon (with $\beta=0.9$) to those of SpecGD, both when Condition \ref{ass:diag} holds (see \cref{fig:signgd-linear}) and in the finite sample setting (see \cref{fig:specgdvsmuon}). In both cases, we find that similar to SpecGD, Muon promotes learning different spectral components at a similar rate. 
\vsn
\vsn
\paragraph{Generalization of SpecGD vs. (N)GD.} We now show that this property of SpecGD to learn concepts at equal rate translates to superior generalization compared to GD in an imbalanced setting where least-significant spectral components of the moment matrices are associated to minority classes. Concretely, under data model~\eqref{eq:dm}, the $k$ first eigenvectors of $\Sigmab_{\xb\xb}$ align with the class-mean vectors, ordered in decreasing class prior probability (see proof of Lemma \ref{lem:assumption}). Intuitively, then, learning spectral components earlier during training should translate to generalization gains. Indeed, in \cref{fig:dynamics} (bottom), we observe that SpecGD attains lower minority-class and class-balanced loss compared to GD. The following theorem formalizes this intuition. We use continuous-time versions of GD and SpecGD here, \textit{i.e.}, gradient flow (GF) and spectral GF (SpecGF). The dynamics for GF are characterized by the approximation in \cref{eq:gd-evo-w}, %exponential equation described in \cref{sec:theory}, 
while the SpecGF dynamics follow \cref{th:evo-w}, both with $\eta = 1$ (see \cref{app:evo-w} for details). We use $\W(t)$ and $\Wbbar(t)$ to denote the continuous-time iterates and the corresponding singular value matrix, respectively. 
\vsn

\begin{theorem}[Simplified]\label{th:gen}
Assume data model \eqref{eq:dm} and zero initialization. Let
\(
   % t^{\star}=\tfrac{\alpha_m}{\eta}
   t^{\star}=\tfrac{\svyx_m}{\svxx_m}
\)
be the first time SpecGF fits the minority class $m$. Further, assume that $\mu \geq 1, k\geq 3\mu$, and $p_m\leq \tfrac{1}{5\snrv+6k}$. Then, for every
$t\in(0,t^{\star}]$, SpecGF outperforms GF with minority-class and balanced loss gaps between the algorithms growing linearly with time as follows:
\begin{align*}
    \Lc_m^{\emph{GF}}(t)-\Lc_m^{\emph{Spec}}(t)\geq  \mu t/4, \quad \Lcbal^{\emph{GF}}(t)-\Lcbal^{\emph{Spec}}(t)\geq  \mu t/2.
\end{align*} 
\end{theorem}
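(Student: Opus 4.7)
I reduce both losses to functions of the diagonal entries $w_c := \Wbar(t)[c,c]$ of $\Wbbar(t) := \Ub^\top \W(t) \Vb$, then compute the gaps directly from the closed-form dynamics. By \cref{lem:assumption}, the one-hot labels $\eb_c$ are columns of $\Ub$ and $\mub_c = \mu\vb_c$ (with $\vb_c$ the $c$-th column of $\Vb$), so $\W\mub_c = \mu w_c \eb_c$ and a direct computation gives
\begin{align*}
\Lc_c(\W) = \tfrac{1}{2}\bigl[(1-\mu w_c)^2 + \sigma_x^2 \textstyle\sum_{c'\in[k]} w_{c'}^2\bigr],\quad \Lcbal(\W) = \tfrac{1}{2} - \tfrac{\mu A}{k} + \tfrac{\mu^2+k\sigma_x^2}{2k}\, B,
\end{align*}
where $A := \sum_c w_c$ and $B := \sum_c w_c^2$. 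For $t \le t^\star = \min_c \svyx_c/\svxx_c$, \cref{th:evo-w} (with $\eta=1$) yields $w_c^{\text{Spec}}(t) = t$ for every $c\in[k]$, while GF gives $w_c^{\text{GF}}(t) = (\svyx_c/\svxx_c)(1-e^{-\svxx_c t})$.

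The workhorse inequality is $w_c^{\text{GF}}(t) \le \svyx_c\, t$ (from $1-e^{-x}\le x$), which gives $A^{\text{GF}} \le \mu t$ since $\sum_c \svyx_c = \mu$. Combined with $p_m \le 1/(5\snrv+6k)$ and $k \ge 3\mu$, routine algebra produces three auxiliary bounds that I will invoke repeatedly: (i) $\svyx_m = \mu p_m \le 1/18$; (ii) $\mu t^\star = \mu^2 p_m/(\mu^2 p_m + \sigma_x^2) \le 1/6$; and (iii) $(\mu^2+k\sigma_x^2)\, t^\star \le \mu/5$.

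For the minority gap, I factor the signal piece as $(1-\mu w_m^{\text{GF}})^2 - (1-\mu t)^2 = \mu(t-w_m^{\text{GF}})(2-\mu(t+w_m^{\text{GF}}))$ and lower bound each factor using (i)--(ii): $t - w_m^{\text{GF}} \ge (1-\svyx_m)t \ge 17t/18$ and $2-\mu(t+w_m^{\text{GF}})\ge 2 - 2\mu t^\star \ge 5/3$, giving a signal contribution of at least $85\mu t/108$. The noise part $\tfrac{\sigma_x^2}{2}(B^{\text{GF}}-kt^2) \ge -\sigma_x^2 kt^2/2$ is $\ge -\mu t/12$ by (iii) (since $\sigma_x^2 k t^\star \le \mu/6$), so $\Lc_m^{\text{GF}}-\Lc_m^{\text{Spec}}\ge 76\mu t/108 > \mu t/4$.

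For the balanced gap, the second identity yields
$\Lcbal^{\text{GF}}(t) - \Lcbal^{\text{Spec}}(t) = \tfrac{\mu}{k}(kt - A^{\text{GF}}) + \tfrac{\mu^2+k\sigma_x^2}{2k}(B^{\text{GF}}-kt^2)$.
Since $A^{\text{GF}} \le \mu t$ and $k \ge 3\mu$, the first term is $\ge \mu t(1-\mu/k) \ge 2\mu t/3$; and $B^{\text{GF}}\ge 0$ with (iii) gives the second term $\ge -(\mu^2+k\sigma_x^2)t^2/2 \ge -\mu t/10$, so the balanced gap is $\ge 17\mu t/30 > \mu t/2$. The main subtlety is this last step: a naive per-class lower bound on $\Lcbal^{\text{GF}}-\Lcbal^{\text{Spec}}$ only yields $\Theta(\mu^2 t/k)$, which vanishes for large $k$. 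The $A$-$B$ decomposition makes visible that the operative comparison is between $kt$ (from SpecGF learning all components equally) and $\sum_c w_c^{\text{GF}}\le\mu t$ (from GF prioritizing the $\sum_c \svyx_c=\mu$ total ``mass'' differently across components), producing the correct $\Theta(\mu t)$ scaling; the quantitative hypothesis on $p_m$ is tuned precisely to keep the second-order noise correction subdominant.
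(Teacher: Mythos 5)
Your proposal is correct, and it takes a mildly but genuinely different route from the paper. The paper proves a stronger ``full'' version (Theorem~\ref{th:gen-full}, under weaker conditions on $p_m$, from which the simplified statement follows) by differentiating the loss gaps in time: it lower-bounds $\Delta\dot\Lc_m(t)$ and $\Delta\Lcdotbal(t)$ by constants ($\mu/4$, $\mu/2$) using an add--subtract manipulation of the signal term and the bounds $\sigmadotGD_c(t)\le \mu p_c$, $\sigmadotSpec_c(t)=1$, and then integrates over $(0,t]$. You instead bound the gaps directly in integrated form from the closed-form trajectories: your workhorse $w_c^{\text{GF}}(t)\le \svyx_c\,t$ is exactly the integrated version of the paper's $\sigmadotGD_c\le \mu p_c$, your difference-of-squares factorization of $(1-\mu w_m^{\text{GF}})^2-(1-\mu t)^2$ replaces the paper's Term-1 manipulation, and your $A$--$B$ (sum and sum-of-squares) decomposition of $\Lcbal$ plays the role of the paper's summed derivative bound. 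All of your intermediate claims check out under the simplified hypotheses ($\mu p_m\le 1/18$ from $k\ge 3\mu$; $\mu t^\star\le 1/6$ from $p_m\le 1/(5\snrv)$; $(\mu^2+k\sigma_x^2)t^\star\le \mu/5$ from $p_m\le 1/(4\snrv+5k)$), and the final constants $76/108$ and $17/30$ indeed exceed $1/4$ and $1/2$. One cosmetic slip: you cite (iii) for $\sigma_x^2 k t^\star\le \mu/6$, but (iii) only gives $\le \mu/5$ directly; the $\mu/6$ bound does hold (it follows straight from $p_m\le 1/(5\snrv+6k)$), and even the weaker $\mu/5$ leaves your minority-gap conclusion intact, so this is not a gap. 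What your route buys is avoidance of the derivative/integration bookkeeping and a cleaner view of why the balanced gap is $\Theta(\mu t)$ rather than $\Theta(\mu^2 t/k)$ (the $kt$ vs.\ $A^{\text{GF}}\le\mu t$ comparison); what the paper's route buys is the sharper, more general parameter regime of the full theorem (e.g., $p_m\le \tfrac{1}{3\snrv+4k}$ for the minority gap, without requiring $k\ge 3\mu$).
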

\vsn
\vsn
For clarity, we state the above theorem under some simplifications on the condition on minority class prior $p_m$. See \cref{app:early_stop_proof} for the complete version and the proof. 

Next, we investigate whether these gains of SpecGD arise because it uses normalized updates while GD does not. To account for this difference, we compare SpecGD with NGD, the normalized version of GD. We first simulate the dynamics of NGD in the setting of \cref{fig:dynamics}, and find that although NGD converges much faster than GD, its iterates $\Wbbar_t$ follow the same trajectory as GD (\cref{fig:dynamics} (top)). Note that at the end of training, all three optimizers eventually converge to the same solution. However, importantly, we find that early on in training, SpecGD still outperforms NGD in terms of both minority-class and class-balanced loss (\cref{fig:dynamics} (bottom)). We formalize and prove this observation in the theorem below. See \cref{app:early_stop_proof_ngd} for the complete version and the proof.
%\vsn
\begin{theorem}[Simplified]\label{th:gen-ngd}
Assume data model \eqref{eq:dm}, zero initialization, and gradient flow algorithms NGF and SpecGF. Further, assume that $p_m\leq \tfrac{1}{2\snrv+4k}$, and $k\geq \tfrac{9}{p_M-p_m}$. 
%$p_M-p_m\geq \tfrac{2\,p_m\,(p_m\snrv+1)^2}{\left(1 - p_m(\snrv+2k)\right)}, p_m < \tfrac{1}{\snrv + 2k}$ 
Then, for every $t\in(0,t^*]$, SpecGF outperforms NGF with minority-class and balanced loss gaps between the algorithms growing linearly with time as follows: 
\begin{align*}
    \Lc_m^{\emph{NGF}}(t)-\Lc_m^{\emph{Spec}}(t)\geq \mu t/2, \quad \Lcbal^{\emph{NGF}}(t)-\Lcbal^{\emph{Spec}}(t)\geq \mu t/2.
\end{align*}
\end{theorem}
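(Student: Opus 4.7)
The strategy is to derive NGF's closed-form dynamics analogous to \cref{th:evo-w}, use them to dominate NGF by SpecGF componentwise, and then extract the loss gap from the population-loss form implied by \eqref{eq:dm} and \cref{ass:diag}. Under \cref{ass:diag} and zero initialization, the diagonalization argument used for \cref{th:evo-w} carries over to NGF because the map $\Nab\mapsto \Nab/\norm{\Nab}_F$ preserves the $\Ub,\Vb$-basis structure. Writing $a_c(t):=\Wbar^{\text{NGF}}(t)[c,c]$ and residuals $v_c(t):=\svyx_c-\svxx_c a_c(t)$, the NGF ODE decouples into $\dot v_c=-\svxx_c v_c/\norm{v}_2$, and introducing the monotone reparameterization $\tau(t):=\int_0^t\norm{v(s)}_2^{-1}ds$ yields
\begin{equation*}
a_c^{\text{NGF}}(t)=\tfrac{\svyx_c}{\svxx_c}\bigl(1-e^{-\svxx_c\tau(t)}\bigr),
\end{equation*}
so NGF traces the GF trajectory with $t$ replaced by $\tau(t)$. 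Since $\norm{v(t)}_2\ge v_c(t)$ coordinatewise, $\dot a_c^{\text{NGF}}(t)\le 1=\dot a_c^{\text{Spec}}(t)$, and integrating gives the uniform envelope $a_c^{\text{NGF}}(t)\le t=\Wbar^{\text{Spec}}(t)[c,c]$ for every $c\in[k]$ and every $t\in[0,t^\star]$. This componentwise comparison drives everything that follows.

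\paragraph{Step 2: Loss decomposition and bounding the minority lag.}
Orthogonality of the class means under \eqref{eq:dm}, together with \cref{ass:diag}, yields the compact form $\Lc_c(\W)=\tfrac12(1-\mu\Wbar[c,c])^2+\tfrac{\sigma_x^2}{2}\norm{\Wbbar}_F^2$. Substituting the SpecGF and NGF values and subtracting,
\begin{equation*}
\Lc_m^{\text{NGF}}(t)-\Lc_m^{\text{Spec}}(t)=\tfrac\mu2\bigl(t-a_m^{\text{NGF}}\bigr)\bigl(2-\mu(a_m^{\text{NGF}}+t)\bigr)\;-\;\tfrac{\sigma_x^2}{2}\sum_c\bigl(t^2-(a_c^{\text{NGF}})^2\bigr),
\end{equation*}
with a non-negative ``fit'' term (by Step 1) capturing NGF's slower minority learning and a subtracted ``regularization'' term reflecting NGF's smaller Frobenius norm. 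To push the fit contribution past $\mu t/2$, I bound $a_m^{\text{NGF}}(t)$ substantially below $t$: from $1-e^{-x}\le x$, $a_m^{\text{NGF}}(t)\le \svyx_m\tau(t)$, reducing the task to an upper bound on $\tau(t)$. Combining the lower bound $\norm{v(s)}_2\ge \mu\sqrt{\sum_c p_c^2}\,e^{-\svxx_M s}$ with $1-e^{-x}\ge x/2$ on $[0,1]$ gives $t\ge\tfrac{\mu\sqrt{\sum_c p_c^2}}{2}\tau(t)$ in the self-consistent regime $\svxx_M\tau(t)\le 1$; the hypothesis $p_m\le 1/(2\snrv+4k)$ is calibrated to keep $t^\star$ small enough for this regime to close on $[0,t^\star]$, producing $a_m^{\text{NGF}}(t)\le C\,t$ with $C$ small. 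The regularization term is simultaneously bounded by $\tfrac{\sigma_x^2 kt^2}{2}\le \mu t\cdot\tfrac{kp_m}{2}\cdot\tfrac{\sigma_x^2}{\svxx_m}\le \tfrac{\mu t}{8}$ under the same hypothesis, yielding the minority bound $\Lc_m^{\text{NGF}}-\Lc_m^{\text{Spec}}\ge\mu t/2$.

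\paragraph{Step 3: Balanced loss and main obstacle.}
The balanced gap has the same regularization term and a fit contribution $\tfrac{1}{2k}\sum_c\mu(t-a_c^{\text{NGF}})(2-\mu(a_c^{\text{NGF}}+t))$. Here I apply Cauchy--Schwarz to the unit-Frobenius update: since $\norm{a^{\text{NGF}}(t)}_2\le t$ (the NGF step has unit Frobenius norm), $\norm{a^{\text{NGF}}(t)}_1\le \sqrt{k}\,t$, so $\sum_c(t-a_c^{\text{NGF}})\ge(k-\sqrt{k})t$; combined with $k\ge 9/(p_M-p_m)$, this forces the averaged fit term past the same $\mu t/2$ threshold after subtracting the regularization penalty. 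The main obstacle is the tight upper bound on $\tau(t)$ in Step 2: one must certify the self-consistency $\svxx_M\tau(t)\le 1$ throughout $[0,t^\star]$, which amounts to showing that NGF has not yet saturated the majority spectral components by time $t^\star$. This is precisely where the numerical interplay between $p_m$, $\snrv$, and $k$ in the hypotheses enters, and any loosening propagates into a weaker constant $C$ in $a_m^{\text{NGF}}(t)\le C\,t$ and hence a weaker loss gap.
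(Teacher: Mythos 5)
Your route is genuinely different from the paper's and, once the constants are filled in, it does go through. The paper never solves the NGF dynamics: it differentiates the loss gaps in time and bounds each NGF rate $\sigmadotNGD_c(t)=r_c(t)/R(t)$ pointwise, the key step being the lower bound $R(t)\ge r_M(t)\ge \svyx_M-\svxx_M\alpha_m=\mu\sigma_x^2\tfrac{p_M-p_m}{p_m\mu^2+\sigma_x^2}$ — the still-unfit majority residual keeps the Frobenius normalizer large on $(0,t^\star]$ — which is exactly where $p_M-p_m$ enters; the derivative bound $\ge\mu/2$ is then integrated. You instead solve NGF exactly via the reparametrization $\tau(t)=\int_0^t R(s)^{-1}ds$ (a nice structural observation that matches the paper's empirical remark that NGD traces the GD trajectory), control $\tau$ by a bootstrap, and work with the integrated gap plus Cauchy--Schwarz. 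Two corrections/completions to your sketch: (i) your lower bound on $\norm{v}_2$ should carry $e^{-\svxx_M\tau(s)}$, not $e^{-\svxx_M s}$; the correct derivation integrates $dt=R\,d\tau$ to get $t\ge\tfrac{\mu}{2}\sqrt{\textstyle\sum_c p_c^2}\,\tau(t)$ whenever $\svxx_M\tau(t)\le1$; (ii) the self-consistency you flag as the main obstacle does close: within the regime, $\svxx_M\tau(t)\le 2p_m(\snrv+1/p_M)\le\tfrac{2(\snrv+k)}{2\snrv+4k}<1$ under $p_m\le\tfrac{1}{2\snrv+4k}$, so a continuity argument certifies it on all of $[0,t^\star]$ and yields $a_m^{\text{NGF}}(t)\le 2p_m\sqrt{k}\,t\le t/(2\sqrt k)$, which is enough for the minority bound after your $\mu t/8$ regularization estimate. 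For the balanced loss, expanding $(t-a)(2-\mu t-\mu a)=2t-\mu t^2-2a+\mu a^2$ and using $\norm{a^{\text{NGF}}(t)}_1\le\sqrt k\,t$ gives $\Delta\Lcbal(t)\ge\mu t-\tfrac{\mu^2t^2}{2}-\tfrac{\mu t}{\sqrt k}-\tfrac{\sigma_x^2kt^2}{2}$, and the requirement at $t=\alpha_m$ reduces to $\tfrac{1}{\sqrt k}\le\tfrac{1-kp_m}{2(1+p_m\snrv)}$, whose right side exceeds $\tfrac13$ under the $p_m$ hypothesis; so your argument needs exactly $k>9$, which the hypothesis $k\ge 9/(p_M-p_m)$ supplies — but note the margin is razor thin, and $p_M-p_m$ plays no role in your proof beyond forcing $k>9$, whereas in the paper's proof it is the central quantitative mechanism. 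In short: same skeleton (diagonal dynamics, per-class loss formula, compare to the SpecGF envelope $t$), but a different engine — exact reparametrized trajectory plus an integrated Cauchy--Schwarz bound versus the paper's majority-residual control of $R(t)$ applied to the loss-gap derivative; yours is arguably more self-contained, the paper's gives more slack and explains why the class-prior gap matters.
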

\vsn
\vsn
\vsn
\paragraph{Proof Sketch.} We now provide a brief proof sketch for \cref{th:gen} highlighting the key ideas. From \cref{th:evo-w}, since every SpecGF iterate $\W(t)$ remains in the same basis $\Ub, \Vb$, i.e., $\W(t) = \Ub \Wbbar(t) \Vb$, the singular values in $\Wbbar(t)$ are the key quantities of interest for bounding the loss. Concretely, for any class $c \in [k]$, we can relate its loss to the $c^\text{th}$ singular value $\Wbbar(t)[c, c] =: \alpha_c(t)$:
\begin{align}
\Lc_c(t) = \tfrac{1}{2}\left((1 - \mu\sigmagen_c(t))^2 + \sigma_x^2 \textstyle\sum_{j=1}^K \sigmagen_j^2(t)\right).
\end{align}
For any given $t$, the class-balanced loss gap is $
    \Delta \Lcbal(t):=\sum_c\Lc_c^{\text{GD}}(t)-\Lc_c^{\text{Spec}}(t)$. 
    
    In order to bound this gap, we start by computing its derivative
\begin{align*}
\Delta\Lcdotbal(t)&=-\tfrac{1}{k}\textstyle\sum_c(1-\mu\sigmaGD_c(t))\mu\sigmadotGD_c(t)+\sigma_x^2\textstyle\sum_c\sigmaGD_c(t)\sigmadotGD_c(t)\\&+\tfrac{1}{k}\textstyle\sum_c(1-\mu\sigmaSpec_c(t))\mu\sigmadotSpec_c(t)-\sigma_x^2\textstyle\sum_c\sigmaSpec_c(t)\sigmadotSpec_c(t).
\end{align*}
Next, from the expression of $\sigmaGD_c(t)$ in \cref{eq:gd-evo-w} and $\sigmadotGD_m(t) =\mu p_m e^{-\svxx_m t}$, we use $\sigmaGD_m(t)\sigmadotGD_m(t) \ge 0$, and $\sigmadotGD_m(t)\!\leq\! \mu p_m$. Combining with $\sigmaSpec_m(t)$ from \cref{th:evo-w} and $\sigmadotSpec_m(t)\!=\!1$ for $t \!\leq\! t^*$, we have:
\begin{align*}
  \Delta\Lcdotbal(t) =   -\tfrac{1}{k}\mu^2 + (1-\mu t)\mu -\sigma_x^2 k t.
\end{align*}
Next, we use $t \leq t^* = \alpha_m$, and using the assumption on $p_m$ in the definition of $\alpha_m$, we get $\Delta\Lcdotbal(t) \ge \mu/2$. This is then integrated over $t$ to obtain the final result.

The proof of \cref{th:gen-ngd} is based on a similar idea. The key distinction is that we don't have the precise dynamics of NGD(F) iterates. Specifically, the NGF update for $c \in [k]$ is written as
\begin{equation*}
%\label{eq:ngd-ode}
\sigmadotNGD_c(t)=\tfrac{r_c(t)}{R(t)},\quad \text{where }r_c(t):=\svyx_c-\svxx_c\sigmaNGD_c(t)\text{ and } 
R(t):=\sqrt{\textstyle\sum_{c=1}^k r_c^2(t)}.
\end{equation*}

Then, we have that $0\leq r_c(t)\leq \svyx_c$, and using this with $R(t)\geq \max_c r_c(t)$ gives ${\sigmadotNGD_c}(t)\leq 1$, and hence $\sigmaNGD_c(t)\leq t$. Using these we lower bound $\Delta \Lc_m'(t)$ and $\Delta \Lcbal'(t)$, and integrate over $t$.

\vsn
\subsection{Effect of Depth}\label{sec:effect_of_depth}
\vsn
To investigate how increasing model depth $L$ affects SpecGD dynamics, consider deep linear model $\W:=\prod_{i=0}^{L-1}\W^{L-1-i}$ with total loss  $\Ellc(\W^0,\dots,\W^{L-1}):=\tfrac{1}{2}\E\norm{\y-\prod_{i=0}^{L-1}\W^{L-1-i}\xb}^2_2$. 
\vsn
\paragraph{Bilinear Model.} We begin with the case $L=2$, which is particularly instructive due to its connection to the Unconstrained Features Model (UFM). The UFM is widely used to study the geometry of learned representations in deep and nonlinear networks under finite-sample regimes \citep{yang2018breakingsoftmaxbottleneckhighrank,mixon2020neural}. It serves as a proxy for architectures with sufficient capacity to freely optimize both the feature representation ($\W^0\x$) and the classification head ($\W^1$). The bilinear model recovers this framework when the input features $\xb$ are orthogonal, making it a natural setting for analyzing the interplay between feature learning and classifier dynamics.

 The following result characterizes the SpecGD dynamics of $\W_t=\W^1_t\W^0_t$. 
\begin{proposition}[Simplified]\label{th:evo-w-2}
    %Let the population  loss $\Ellc(\W^0,\W^1)=\tfrac{1}{2}\E\norm{\y-\W^1\W^0\xb}^2$. 
    Suppose Condition \ref{ass:diag} holds. Let the weights be initialized as %. Let $\W^0 \in \R^{d_1 \times d}$ and $\W^1 \in \R^{k \times d_1}$ be initialized as 
    $\W^0_0=e^{-\delta}\Q[\Ib_{d_1} \, \, \mathbf{0}_{d-d_1}]\Vb^\top$ and $\W^1_0=e^{-\delta}\Ub[\Ib_k \, \,\mathbf{0}_{d_1-k}]\Q^{\top}$, where $\Q\in\R^{d_1\times d_1}$ is an orthonormal matrix, $k<d_1<d$, and $\delta>0$ is a constant. Then, 
    for SpecGD with, at each iteration, $\W^0_t:=\Q\Wbbar^0_t\Vb^{\top}$, $\W_t^{1}=\Ub\Wbbar^{1}_t\Q^\top$, where $\Wbbar^0_t,\Wbbar^1_t$ are zero except their main diagonal along which, in the limit $\delta\rightarrow\infty$, they evolve as follows for $c\in[k]$:
    \begin{align*}
        \Wbar^0_{t}[c,c] \!=\! \Wbar^1_{t}[c,c] \!=\! \eta t  \ind{t \le \tfrac{1}{\eta}\sqrt{\tfrac{\svyx_c}{\svxx_c}}} +
  \sqrt{\tfrac{\svyx_c}{\svxx_c}}
  \ind{t > \tfrac{1}{\eta}\sqrt{\tfrac{\svyx_c}{\svxx_c}}}.
    \end{align*}
\end{proposition}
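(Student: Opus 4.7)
The plan is to lift the two-layer dynamics into the bases supplied by Condition~\ref{ass:diag} and to show that SpecGD preserves rectangular diagonality there, thereby reducing the matrix update rule to an uncoupled one-dimensional ODE per spectral index $c\in[k]$, which can then be integrated in closed form and continued through the $\delta\to\infty$ limit.

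First I would change variables to $\Wbbar^0_t:=\Q^\top\W^0_t\Vb\in\R^{d_1\times d}$ and $\Wbbar^1_t:=\Ub^\top\W^1_t\Q\in\R^{k\times d_1}$. Substituting $\Sigmab_{\xb\xb}=\Vb\Lambdab\Vb^\top$ and $\Sigmab_{\yb\xb}=\Ub\Sb\Vb^\top$ into the chain rule for the population squared loss yields
\[
\Q^\top\Nab^0_t\Vb=(\Wbbar^1_t)^\top\bigl(\Wbbar^1_t\Wbbar^0_t\Lambdab-\Sb\bigr),\qquad \Ub^\top\Nab^1_t\Q=\bigl(\Wbbar^1_t\Wbbar^0_t\Lambdab-\Sb\bigr)(\Wbbar^0_t)^\top.
\]
If $\Wbbar^0_t,\Wbbar^1_t$ are rectangular diagonal, so are both rotated gradients, since $\Sb,\Lambdab$ are diagonal by Condition~\ref{ass:diag}. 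The thin SVD of any rectangular diagonal matrix $G$ with main-diagonal entries $g_c$ factors as $\operatorname{diag}(\operatorname{sign}(g_c))\cdot\operatorname{diag}(|g_c|)\cdot I$, so the SpecGD direction $\Ub_g\Vb_g^\top=\operatorname{diag}(\operatorname{sign}(g_c))$ is itself diagonal (using the convention $\operatorname{sign}(0)=0$, i.e., zero singular directions contribute zero update). Because $(\Q,\Vb)$ and $(\Ub,\Q)$ are orthonormal and the initialization $\Wbbar^0_0=e^{-\delta}[\Ib_{d_1}\ \mathbf{0}]$, $\Wbbar^1_0=e^{-\delta}[\Ib_k\ \mathbf{0}]$ is diagonal, the flow preserves diagonality for all $t$.

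With diagonality in hand, the system decouples. Letting $a_c(t):=\Wbar^0_t[c,c]$ and $b_c(t):=\Wbar^1_t[c,c]$ for $c\in[k]$, the $(c,c)$-entries of the rotated gradients are $b_c(a_c b_c\svxx_c-\svyx_c)$ and $a_c(a_c b_c\svxx_c-\svyx_c)$, so SpecGF yields
\[
\dot a_c=-\eta\operatorname{sign}\!\bigl(b_c(a_c b_c\svxx_c-\svyx_c)\bigr),\qquad \dot b_c=-\eta\operatorname{sign}\!\bigl(a_c(a_c b_c\svxx_c-\svyx_c)\bigr).
\]
Symmetric initialization $a_c(0)=b_c(0)=e^{-\delta}>0$ and the symmetry of the pair under $a_c\leftrightarrow b_c$ give the conservation law $a_c(t)=b_c(t)$, and since $a_c b_c>0$ with $a_c b_c\svxx_c<\svyx_c$ throughout the pre-saturation regime, both signs equal $-1$ and hence $\dot a_c=\dot b_c=\eta$. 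Integrating gives $a_c(t)=b_c(t)=e^{-\delta}+\eta t$ up to the saturation time determined by $a_c^2\svxx_c=\svyx_c$, i.e., $a_c=\sqrt{\svyx_c/\svxx_c}$. Taking $\delta\to\infty$ pushes the starting value to $0$, producing the announced pre-saturation trajectory $a_c(t)=\eta t$ and saturation time $t^\star_c=\tfrac{1}{\eta}\sqrt{\svyx_c/\svxx_c}$; after saturation the diagonal gradient entry vanishes, and by the zero-update convention the iterate stays at $\sqrt{\svyx_c/\svxx_c}$.

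The main obstacle I anticipate is the non-uniqueness of the SVD when gradient entries vanish, which occurs both post-saturation and, throughout training, for the ``parked'' coordinates $c\in(k,d_1]$ of $\Wbbar^0$ that do not couple to $\Wbbar^1$ because it has rank at most $k<d_1$. For such indices any choice of $(\Ub_g,\Vb_g)$ would be admissible, and a pathological choice could in principle mix off-diagonal directions and break the diagonal invariance that underlies the decoupling. Resolving this requires fixing the algorithmic convention---the standard Muon/SpecGD one in which the orthogonalization $\Ub_g\Vb_g^\top$ is computed from the compact SVD and extended by zero on the null space---under which diagonality is preserved, the parked coordinates remain at $e^{-\delta}$, and both drop out in the $\delta\to\infty$ limit. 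A secondary, routine step is justifying the interchange of the $\delta\to\infty$ limit with integration of the one-dimensional flow on the pre-saturation interval, which follows from continuity of the flow in its initial condition away from the saturation manifold.
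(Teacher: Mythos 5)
Your proposal is correct and follows essentially the same route as the paper's proof: rotate into the $(\Q,\Vb)$ and $(\Ub,\Q)$ bases, observe that the rotated gradients stay rectangular diagonal so the truncated-SVD update is $\operatorname{diag}(\operatorname{sign}(\cdot))$ and diagonality is preserved, then read off the decoupled per-coordinate increments of $\eta$ until saturation at $\sqrt{\svyx_c/\svxx_c}$ and send $\delta\to\infty$. Your extra care about the zero-singular-value convention and the parked coordinates $c\in(k,d_1]$ is consistent with the paper's truncated-SVD definition of SpecGD; just note the proposition is discrete-time SpecGD, so the "ODE" step is simply the constant per-step increment rather than a genuine flow argument.
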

\vsn
\begin{figure}[t]
    \centering
\includegraphics[width=0.9\linewidth]{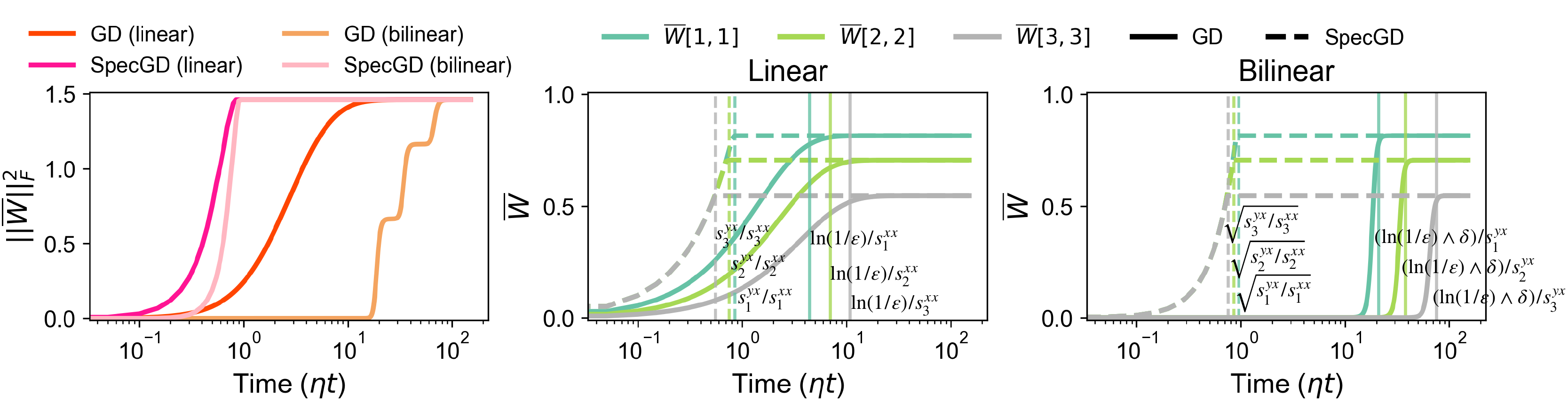}\vspace{-1ex}
    \caption{Comparison of the dynamics of the squared norm of singular value matrix of the iterates, and the individual singular values, for GD and SpecGD across linear ($L\!=\!1$) and bilinear ($L\!=\!2$) models for class-imbalanced data (priors $p_1\!>\!p_2\!>\!p_3$). In both cases, GD learns more dominant spectral components first, whereas SpecGD learns all components at the same rate. The gap between saturation of different components for SpecGD decreases as model depth increases from $1$ to $2$, as proved in \cref{th:evo-w-2}. %(and \cref{th:evo-w-depth} for $L>2$). 
    }
    \label{fig:linear-bilinear}
\end{figure}

\vsn
For clarity, we consider $\delta\to\infty$ in the above proposition to state the SpecGD dynamics. See \cref{app:bilinear} for the complete version (for any $\delta>0$) and the proof. 

We find that, in contrast to the linear model where SpecGD learns the $c^{\text{th}}$ component in time $t_c=\tfrac{1}{\eta}\tfrac{\svyx_c}{\svxx_c}$, for the bilinear model, it saturates in time $t_c\approx \tfrac{1}{\eta}\sqrt{\tfrac{\svyx_c}{\svxx_c}}$ (for large $\delta$). This is quite different from the effect of increasing model depth on the dynamics of GD. As discussed in \citet{saxe2013exact,gidel2019implicit}, for GD, the time to reach $\epsilon$-close to the optimal (for the $c^\text{th}$ component) is $t_c=\tfrac{\ln(\epsilon)}{\ln(1-\eta\svxx_c)}\approx \tfrac{\ln(1/\epsilon)}{\eta\svxx_c}$ (for small $\eta$) for a linear model, compared to $t_c\approx \tfrac{\ln\left(1/\epsilon-1\right)+\ln\left(\svxx_ce^\delta/\svyx_c-1\right)}{2\eta\svyx_c}\approx \tfrac{\ln(1/\epsilon)+\delta}{2\eta\svyx_c}$ (for small $\epsilon$ and large $\delta$) for a bilinear model. To make the comparison more concrete, recall from \cref{lem:assumption} that for our class-imbalanced setting with data model \eqref{eq:dm}, when $\mu=1$, $\svxx_c=p_c+\snrv^{-1}$ and $\svyx_c= p_c$. The GD dynamics for linear vs. bilinear model depend on the $\snrv$: when $\snrv$ is high, we observe the stage-wise learning dynamics in both cases, \textit{i.e.}, spectral components are learned in descending order of class priors. However, when $\snrv$ is low, these stage-wise dynamics may not be apparent for the linear model as $t_c\propto \tfrac{1}{p_c+\snrv^{-1}}\approx \snrv$. In contrast, for SpecGD, $t_c\propto \left(\tfrac{p_c}{p_c+\snrv^{-1}}\right)^{1/L}$, where $L \in \{1,2\}$. This means the dynamics are qualitatively similar for both models; however, the gap between saturation times on different components is smaller for the deeper bilinear model. 

We illustrate these differences between GD and SpecGD and the effect of increasing model depth in \cref{fig:linear-bilinear}, where similar to \cref{fig:dynamics}, we consider a class-imbalanced setting with $\eta=0.05$, $d=k=3$, $\mu=1$, $\sigma_x^2=0.125$, and $p_1,p_2,p_3$ set as $0.55,0.3,0.15$, respectively. We set $\epsilon=0.05,\delta=10$.

Next, we investigate whether for SpecGD, the gap between saturation on different components becomes smaller as model depth is increased further. 
\vsn
\vsn
\paragraph{Deep Linear Model.} We next consider deeper models with $L\!\geq\!2$. The dynamics of $\W_t=\prod_{i=0}^{L-1}\W_t^{L-1-i}$ for SpecGD can be characterized in a similar way as \cref{th:evo-w-2}. Specifically, $\{\Wbbar^l_t\}_{l=0}^{L-1}$ defined analogously remain diagonal with entries $c\in[k]$, in the limit $\delta\!\rightarrow\!\infty$, evolving as:  
\vsn
\begin{equation*}
 \Wbar^l_{t}[c,c] = \eta t \ind{t \le \tfrac{1}{\eta}\!\left(\tfrac{\svyx_c}{\svxx_c}\right)^{\tfrac{1}{L}}} +
  \left(\tfrac{\svyx_c}{\svxx_c}\right)^{\tfrac{1}{L}}
  \ind{t > \tfrac{1}{\eta}\!\left(\tfrac{\svyx_c}{\svxx_c}\right)^{\tfrac{1}{L}}}.
\end{equation*}
\vsn
\vsn

\begin{wrapfigure}[10]{R}{0.35\textwidth}
\vspace{-3ex}
  \begin{center}
    \includegraphics[width=\linewidth]{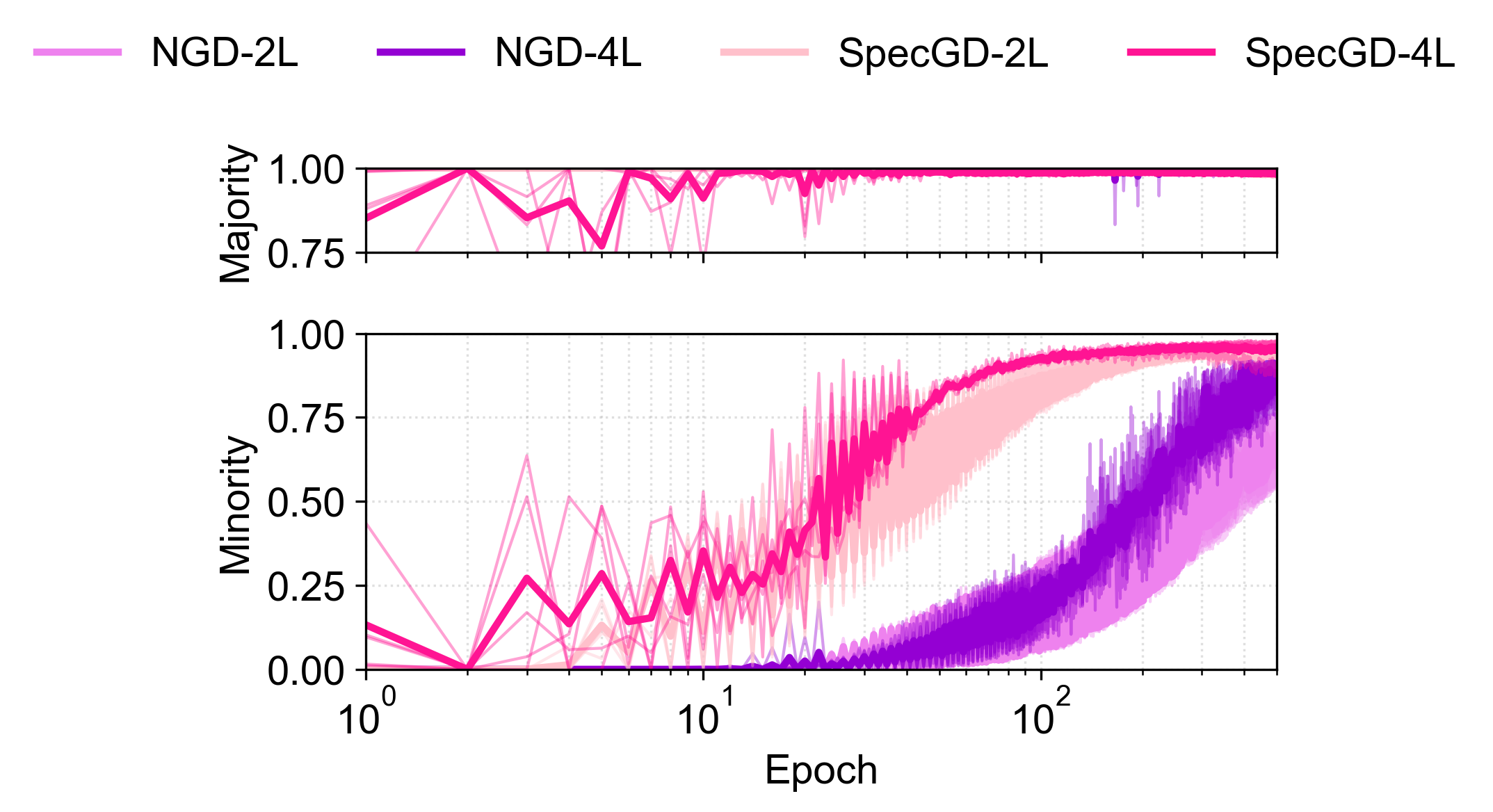}
  \end{center}
  \vspace{-2ex}
  \caption{Test accuracy dynamics of NGD and SpecGD for a 2-layer vs. 4-layer MLP on the Colored-MNIST dataset.}
  \label{fig:cmnist-depth}
\end{wrapfigure}
See \cref{app:deep-linear} for a formal statement and the proof. We can quantify the gap between saturation of the minority vs. majority component as $\Delta T \!:=\!\tfrac{t_{\max}-t_{\min}}{t_{\min}}\!=\!\left(\tfrac{\svyx_1/\svxx_1}{\svyx_k/\svxx_k}\right)^{1/L}\!-\!1$. For the class-imbalanced setting in \cref{eq:dm}, using \cref{lem:assumption}, $\Delta T=\left(\tfrac{\snrv+1/p_m}{\snrv+1/p_M}\right)^{1/L}-1$. 
 
We find that as $L$ increases, the gap between the saturation of the different components becomes smaller.  We empirically validate this result for SpecGD in \cref{fig:cmnist-depth} by comparing the test accuracy dynamics of $2$-layer and $4$-layer MLPs on the Colored-MNIST dataset. A similar trend is observed for NGD, where depth speeds up learning of the minority component. This is an interesting empirical parallel, given that precise theoretical dynamics for GD have primarily been characterized for depth $L = 2$ \citep{gidel2019implicit, saxe2013exact}.
 
\vsn
\vsn
\section{Experimental Results}
\label{sec:expts}
\vsn
\vsn
In this section, we present experimental results on datasets with group and class imbalance, comparing the performance of Muon to SGD. Our analysis is guided by the spectral perspective developed in \cref{sec:theory}. For class imbalance, recall that spectral components correspond to class priors. We extend this analogy to group imbalance, where inputs contain both core and spurious features, and groups are defined by different core-spurious feature combinations. As seen in our motivating Colored-MNIST example (\cref{fig:group_imbalance_mlp}), spurious features (e.g., color) act as the dominant spectral component, while core features (e.g., digit shape) are less dominant (see also \citep{ng2024understanding}, which demonstrates this for some other datasets with spurious correlations). Our experiments show that Muon promotes a more balanced learning of different spectral components, thereby improving generalization relative to SGD. Finally, while our theory focused on the dynamics of SpecGD versus (N)GD, our empirical study also includes Adam, offering a broader comparison with a widely used adaptive optimizer.

\vsn
\vsn
\paragraph{Group Imbalance Datasets.} Here, we consider three widely used datasets with group imbalance.

\textbf{\textit{Dominoes Dataset.}} We consider the MNIST-CIFAR dataset from the Dominoes benchmark \citep{shah2020pitfalls, pagliardini2022agreedisagreediversitydisagreement}, where each image is generated by stacking an MNIST digit %\citep{726791} 
from class $\{0,1\}$ on the top with a CIFAR-10 image from class $\{\text{automobile, truck}\}$ on the bottom. The MNIST part is spuriously correlated with the label $95\%$ of the time, while the CIFAR part is $100\%$ predictive of the label. Example images are shown in \cref{fig:mnist-cifar} (left).

\begin{wrapfigure}[13]{R}{0.6\textwidth}
\vspace{-2ex}
  \begin{center}
    \includegraphics[valign=c,width=0.19\linewidth]{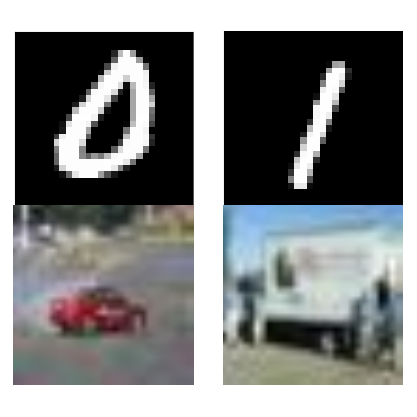}\hspace{0.5mm}
    \includegraphics[valign=c,width=0.79\linewidth]{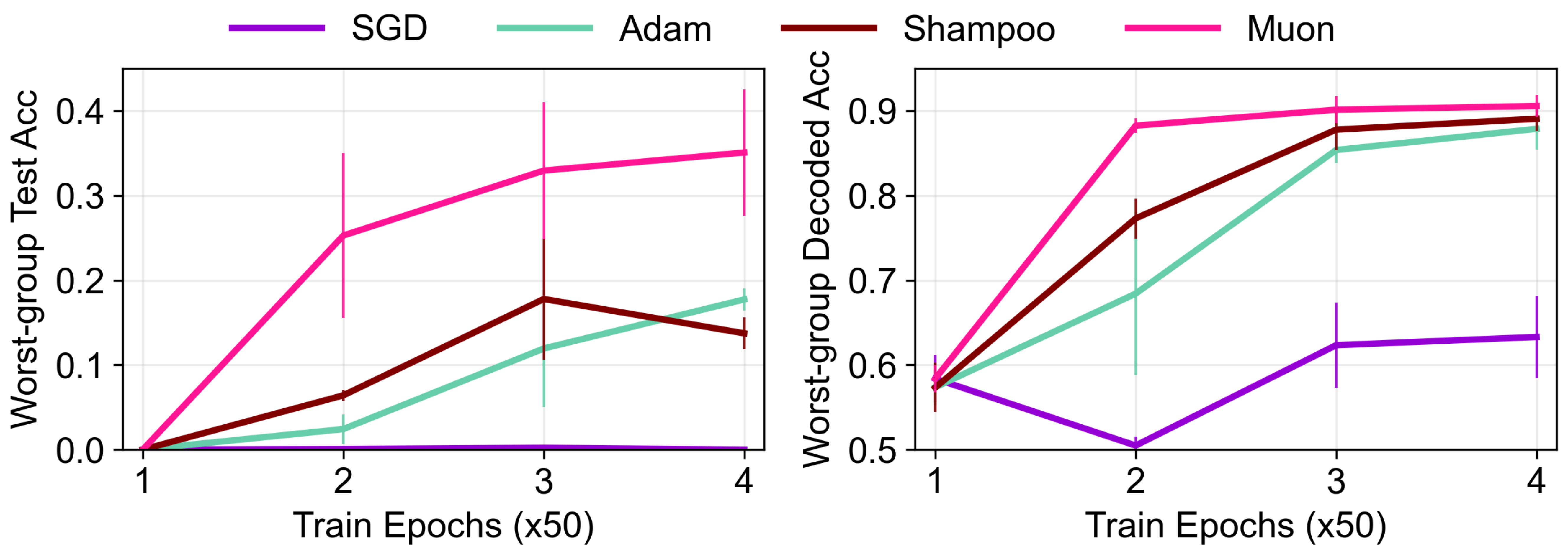}
  \end{center}
  \vspace{-2ex}
  \caption{ 
  Example images from the MNIST-CIFAR dataset (left), and comparison of worst-group test accuracy (middle) and worst-group decoded test accuracy (right) dynamics of SGD, Adam, Shampoo and Muon (see text for discussion). \textit{Muon learns core features faster and has superior worst-group performance  while SGD relies on spurious features.} 
  }
  \label{fig:mnist-cifar}
\end{wrapfigure}
We train a ResNet-34 model using SGD, Adam, Shampoo, and Muon and compare their generalization using two metrics on the test set: the worst-group accuracy and the decoded worst-group accuracy (see \cref{app:mnist-cifar} for details). The latter is obtained by freezing the learned representation of the trained model, re-training only the final linear layer with logistic regression on a group-balanced validation set, and then evaluating its performance on the test set. This measures the extent to which the model has learned the core feature information in its representations. From \cref{fig:mnist-cifar}, we see that Muon consistently outperforms SGD across both metrics throughout training. Compared to Adam, Muon shows greater gains early in training, eventually attaining a similar performance. Muon, as well as Shampoo and Adam, achieve much higher decoded accuracy than SGD, which suggests that they successfully learn the core features (which correspond to less dominant spectral components), whereas SGD struggles to do so.

\textbf{\textit{Subgroup Robustness Benchmarks.}} Next, we consider two benchmark subgroup robustness datasets: MultiNLI \citep{multinli} and CelebA \citep{celeba}. MultiNLI consists of sentence pairs, where the task is to classify the relationship of the second sentence to the first as \textit{entailment}, \textit{neutral}, or \textit{contradiction}. The spurious feature is the presence of \textit{negation words}, which often indicate contradiction. CelebA contains images of celebrity faces, with the task of predicting whether the hair color is \textit{blonde}/\textit{not blonde}, and the celebrity’s gender (\textit{male}/\textit{female}) serves as the spurious attribute.
\begin{figure}[h!]
%\vspace{-2ex}
  \begin{center}
  \includegraphics[width=0.95\linewidth]{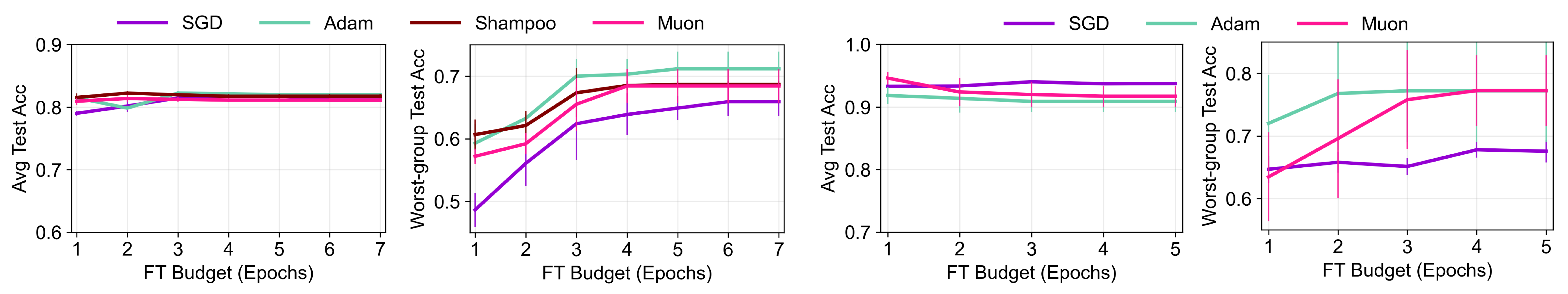}
  \end{center}
  \vspace{-2ex}
  \caption{Comparison of the worst-group and average test accuracy over the number of fine-tuning (FT) epoch budget (see text for details) for different optimizers on the (left) MultiNLI and (right) CelebA datasets. The average test accuracy is comparable across all optimizers. For the worst-group accuracy, spectrum-aware optimizers (Muon and Shampoo) consistently outperform SGD. However, as compared to Adam, for MultiNLI, Muon is slightly worse, while for CelebA, its performance becomes comparable for more FT epochs. 
  }\vspace{-1ex}
  \label{fig:mnli-cel}
\end{figure}

\vsn
\vsn
We fine-tune a pretrained BERT \tsc{bert-base-uncased} model \citep{devlin-bert} on the MultiNLI dataset, and an ImageNet-pretrained ResNet-50 model on CelebA, using different optimizers and evaluate the average (group-balanced) test accuracy and the worst-group accuracy over the number of finetuning epochs. To ensure our comparisons are robust, we conduct extensive hyperparameter sweeps for all optimizers (see \cref{app:group} for details). We select the best hyperparameter configuration for any given optimizer and number of fine-tuning epochs, $T$, based on the highest average worst-group accuracy achieved on the validation set, averaged across multiple initialization seeds, at any point up to $T$ epochs. We then report the performance for the optimal configuration for different FT epochs. From \cref{fig:mnli-cel}, we see that Muon outperforms SGD on both datasets. On MultiNLI, Adam attains better performance than Muon/Shampoo whereas on CelebA, its performance is comparable to Muon. These experiments show that Muon promotes more balanced learning of spectral components than SGD, thereby improving generalization. 

\vsn
\vsn
\paragraph{Class-imbalanced Datasets.}
Here, we extend our investigation of class imbalance to language modeling. While our previous experiments and theory focused on one-hot classification, we now shift our comparison of optimizers to a next-token prediction (NTP) task. This task presents a natural imbalance over the distribution of tokens, as word frequencies follow a long-tail distribution described by Zipf's law \citep{Piantadosi2014ZipfsWF}.

\begin{wrapfigure}[16]{r}{0.55\textwidth}
\vspace{-1.5ex}
\centering
  \includegraphics[width=\linewidth]{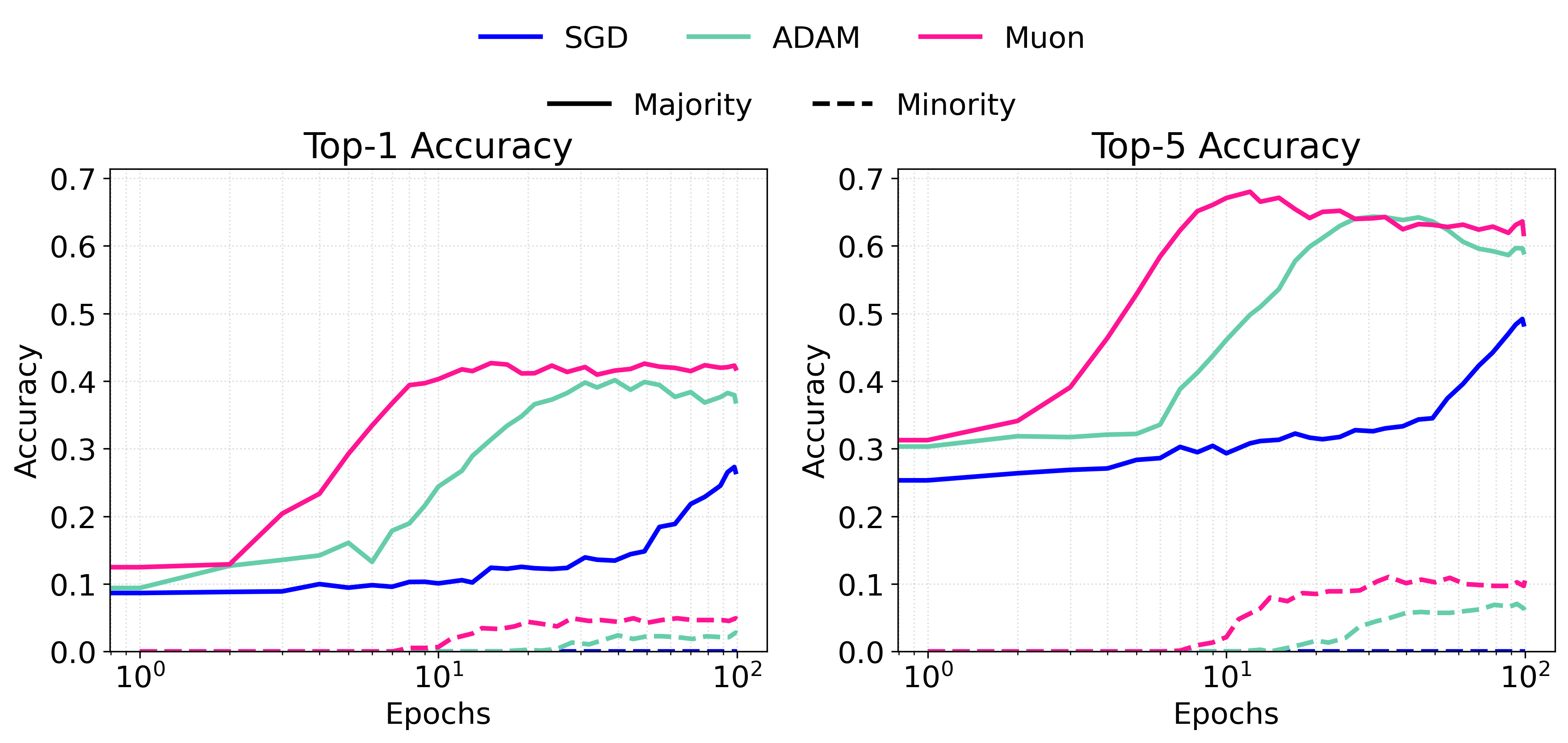} 
  \vspace{-1ex}
  \vspace{-2ex}
    \caption{ Performance comparison of SGD, Adam, and Muon on frequent (majority) vs. rare (minority) tokens, evaluated by Top-1 and Top-5 accuracy on a validation set of TinyStories. The results demonstrate that Muon achieves faster generalization on both frequent and rare tokens compared to SGD and Adam.}
    \label{fig:tinystories_results}
\end{wrapfigure}
\textbf{\textit{TinyStories Dataset.}} To examine how different optimizers compare on rare token learning, we train small Transformer models on the TinyStories \citep{eldan-li-2023-tinystories} corpus and compare the performance of Muon, SGD, and Adam. We group tokens into "frequent" and "rare" categories based on their corpus statistics and evaluate the token-level accuracy for each group separately on a held-out test set. Complete training details are available in App. \ref{app:tiny-stories}. The performance on rare and frequent tokens is reported in \cref{fig:tinystories_results}. Consistent with our findings in the one-hot class-imbalanced classification, we observe that Muon generalizes faster on both the majority (frequent) and minority (rare) components compared to SGD, and in this case, even compared to Adam. 

\textbf{\textit{Attribute-Organism Classification.}} To extend our findings from statistical imbalance to a setting with explicit conceptual structure, we conduct a toy experiment on hierarchical concept learning \citep{zhao2025geometrysemanticsnexttokenprediction}. In this task, a model must classify organisms (e.g., "dog") based on their attributes, which are shared at different levels of a taxonomic tree (e.g., "needs oxygen" for all animals). We observe distinct learning dynamics: GD learns the coarse, high-level categories (plant vs. animal) first, while Muon exhibits a more balanced learning progression across both coarse and fine-grained concepts (see \cref{app:att-obj} for details)

\vsn
\vsn
\vsn
%\vspace{-1mm}
\section{Discussion and Outlook}
\label{sec:discussion}
\vspace{-1mm}
\vsn
\vsn
Our work investigates how the spectral design of optimizers like Muon and Shampoo yields generalization benefits, using imbalanced data as a testbed. The core mechanism we identify is that their canonical form, SpecGD, implicitly learns all principal components of the data at an equal rate, in stark contrast to GD. 

A crucial test of this insight is how its practical variants, Muon and Shampoo, perform against strong, widely-used baselines like Adam. In our experiments in \cref{sec:expts}, we find that Adam's generalization performance is only slightly worse---or sometimes even better---than that of Muon and Shampoo. This is consistent with recent large-scale benchmarks for LLM pretraining, which show that well-tuned Adam variants remain highly competitive \citep{wen2025fantasticpretrainingoptimizers,semenov2025benchmarkingoptimizerslargelanguage}. Previous work has used different perspectives to explain Adam's superiority over (S)GD: i) \citet{kunstner2024heavy} show how heavy-tailed class-imbalanced settings lead to gains for Adam by focusing on their \emph{optimization} dynamics; ii) \citet{vasudeva2025richsimpleimplicitbias} show that Adam promotes richer feature learning than SGD in neural networks, which can lead to better generalization in group-imbalanced settings. Together, these works underscore Adam’s advantages in imbalanced regimes, albeit through different mechanisms. Our motivation for using this testbed stems from recognizing that spectral components of imbalanced data directly correspond to majority and minority classes or groups, and our analysis reveals that SpecGD achieves its gains through a fundamentally different and interpretable mechanism—balanced learning of principal components.

A natural question is how the implicit balancing effect of spectral methods compares to explicitly designed techniques like loss re-weighting \citep{byrd2019effect,TengyuMa, sagawa2019distributionally,Menon,pmlr-v139-liu21f}. In \cref{app:wce}, we show that while weighted cross-entropy (wCE) effectively encourages balanced learning of spectral components on NMD, Muon provides a similar, though weaker, effect "for free" without requiring explicit group priors. This suggests spectral methods could be useful when such information is unavailable for re-weighting.

Our work provides a framework to understand the generalization of spectrum-aware optimizers within a controlled setting using population statistics and squared loss. A natural next step is to build upon this foundation by extending the analysis to more complex regimes using finite samples and cross-entropy (CE) loss. To bridge this gap, we conduct an initial empirical analysis of rate of learning spectral components for nonlinear models trained with CE loss, by tracking singular values of the logit matrices during training (see \cref{app:spec_non_linear} for details). We outline future directions and their associated challenges in \cref{app:limitations}.

\section*{Acknowledgments}
The authors acknowledge use of the Discovery cluster by USC CARC and the
Sockeye cluster by UBC Advanced Research Computing. This work was partially funded by the NSERC Discovery Grant No. 2021-03677 and the Alliance Grant ALLRP 581098-22. PD and YZ were also supported by the UBC 4YF Doctoral Fellowship. This work was also supported in part by NSF CAREER Award CCF-2239265, an Amazon Research Award, a Google Research Scholar Award and
an Okawa Foundation Research Grant. 

\bibliography{refsCT}
%\bibliography{sample}
\newpage
\clearpage
\appendix
\section*{Appendix}
\startcontents[appendix]
%\chapter*{Appendix}
\addcontentsline{toc}{chapter}{Appendix}
\renewcommand{\thesection}{\Alph{section}} 

\printcontents[appendix]{}{1}{\setcounter{tocdepth}{3}}

% Reset section numbering to start from 1
\setcounter{section}{0}
\section{Optimizers}
\label{app:update_rules}

In this section, we list the update rules for all the optimizers considered in the paper for completeness. We start with the update rules of NGD, SignGD, and SpecGD, then list their momentum versions (NMD, Signum, and Muon), and then write the updates for Shampoo and Adam.

Using the notation introduced in \cref{sec:linear-expts}, updates are of the form

\begin{align}\label{eq:update_eq_app}
    \W_{t+1}=\W_t-\eta\Delb_t, \,\, \text{where} \,\, \Delb_t:=\argmax\nolimits_{\norm{\Delb}\leq 1}\,\langle \Nab_t,\Delb\rangle.
\end{align}
\paragraph{NGD update.} For NGD, the arg max in \cref{eq:update_eq_app} uses $\norm{\cdot}_F$, and we get $\Delb_t=\tfrac{\Nab_t}{\norm{\Nab_t}_F}$.

\paragraph{SignGD update.} For signGD, the arg max in \cref{eq:update_eq_app} uses $\max$, and we get $\Delb_t=\sign{\Nab_t}$, where $\sign{x}:=\tfrac{x}{\abs{x}}$ and $\sign{0}=0$, applied element-wise on the matrix $\Nab_t$.

\paragraph{SpecGD update.} For SpecGD, the arg max in \cref{eq:update_eq_app} uses the spectral norm. Let the truncated SVD of $\Nab_t$ be $\Ub_t\Sigmab_t\Vb_t^\top$, where $\Ub_t$ and $\Vb_t$ are orthonormal matrices and $\Sigmab_t$ is a diagonal matrix with positive diagonal entries. Using this, we have $\Delb_t=\Ub_t\Vb_t^\top$. 

For NMD, Signum, and Muon (with exact matrix operations), the updates mirror NGD, SignGD, and SpecGD, respectively, except that they use the momentum term $\M_t = \beta \M_{t-1}+(1-\beta)\Nab_t$ in place of the raw gradient.

\paragraph{Shampoo update.} For Shampoo, first define the preconditioning matrices
\[
\Lb_t=\beta_2\Lb_{t-1}+(1-\beta_2)\Nab_t\Nab_t^\top\quad\text{and}\quad\Rb_{t}=\beta_2\Rb_{t-1}+(1-\beta_2)\Nab_t^\top\Nab_t,
\]
where the parameter $\beta_2$ denotes the preconditioning accumulation parameter, and the momentum matrix $\M_t = \beta_1 \M_{t-1}+(1-\beta_1)\Nab_t$. Using these preconditioners, the update is $\Delb_t=\invq{\Lb_t}\M_t\invq{\Rb_t}$. 

It is easy to see that Shampoo with exact matrix operations reduces to SpecGD when we set $\beta_1 =\beta_2= 0$ as follows. Let the SVD of $\Nab_t$ be $\Ub_t\Sigmab_t\Vb_t^\top$. Then, since $\beta_2=0$, the preconditioners are $\Lb_t=\Ub_t\Sigmab_t^2\Ub_t^\top$ and $\Rb_t=\Vb_t\Sigmab_t^2\Vb_t^\top$. Using these, since $\beta_1=0$, the update is $\Delb_t=\Ub_t\Sigmab_t^{-1/2}(\Ub_t^\top \Ub_t)\Sigmab_t(\Vb_t^\top \Vb_t)\Sigmab_t^{-1/2}\Vb_t^\top=\Ub\Vb^\top$, which is same as the SpecGD update.

\paragraph{Adam update.} For Adam, let $\Mh_{t} = \tfrac{\M_{t+1}}{1 - \beta_1^{t+1}} = \tfrac{1}{1 - \beta_1^{t+1}} \Big(\beta_1 \M_{t} + (1-\beta_1) \Nab_{t} \Big)$ denote the bias-corrected first-moment estimate, and $\Vh_t = \tfrac{\Z_{t+1}}{1-\beta_2^{t+1}} = \tfrac{1}{1-\beta_2^{t+1}} \Big(\beta_2 \Z_{t} + (1-\beta_2) \Nab_{t} \odot \Nab_{t}\Big)$ denote the bias-corrected second (raw) moment estimate, where $\odot$ denotes the Hadamard product, and $\beta_1,\beta_2$ denote the momentum parameters.  

 Then, the update is $\Delb_t=(\Vh_t+\epsilon \vct{1}\vct{1}^\top)^{\circ -1/2}\odot\Mh_t$, where $(\cdot)^{\circ}$ denotes the Hadamard power, $\epsilon>0$ is the numerical precision parameter, and $\mathbf{1}$ denotes the all-ones vector. It is easy to see that Adam reduces to SignGD when we set $\beta_1=\beta_2=\epsilon=0$.

\section{Proofs}

\subsection{Proof of Lemma \ref{lem:assumption}}
\label{app:svd-ci}
    The covariance matrix is
\begin{align*}
\Sigb_\xb \; &:= \; \E\bigl[\xb \xb^{\top}\bigr] =   \E_{y,\epsb}\Bigl[(\mub_y+\epsb)(\mub_y+\epsb)^{\top}\Bigr] \\
     &=   \underbrace{\E_{y}\bigl[\mub_y\mub_y^{\top}\bigr]}_{=:\Sigb_{\mub}}
       \;+\;\E_{y}\bigl[\mub_y \bigr]\E_\epsb\bigl[\epsb^{\top}\bigr]
       \;+\;\E_\epsb\bigl[\epsb\bigr]\E_{y}\bigl[\mub_y^{\top}\bigr]
       \;+\;\E\bigl[\epsb\epsb^{\top}\bigr]                               = \Sigb_\mu \;+\;\sigma_x^2 \Ib_d ,
\end{align*}
since $\epsb$ and $y$ are independent and $\E[\epsb]=0$. Further,
\begin{align*}
\Sigb_{\mub} \;=\; \sum_{c=1}^k \pr_c\,\mub_c\mub_c^{\top}
            \;=\; \Mb \Pbb \Mb^{\top},
\qquad
\Mb := \bigl[\,\bar{\mub}_1\;\bar{\mub}_2\;\cdots\;\bar{\mub}_k\bigr]\in\R^{d\times K},\;
\Pbb := \mu^2\diag{\pr_1,\dots,\pr_k}.
\end{align*}
Then, we can write $\Sigb_\xb = \Vb \Lambdab \Vb^{\top}$, where
\begin{align}\label{eq:lem-lambda}
    \Lambdab &= \diag{\mu^2\pr_1+\sigma_x^2,\dots,\mu^2\pr_k+\sigma_x^2,
                 \underbrace{\sigma_x^2,\dots,\sigma_x^2}_{d-k\text{ times}}},\\
 \Vb &= \bigl[\,\Mb \; \Vb_\perp\bigr]
   = \bigl[\,\bar{\mub}_1,\dots,\bar{\mub}_k,\,\vb_{k+1},\dots,\vb_d\bigr]\in\R^{d\times d}, \quad \{\vb_i\}_{i=k+1}^d\perp\{\mub_c\}_{c=1}^k.\nonumber
\end{align}
Here, we used the assumption on orthonormality of the means.

The cross-covariance is
\begin{align*}
\Sigmab_{\yb\xb}
:= \E\bigl[\yb\,\xb^{\top}\bigr]
  = \sum_{c=1}^{k} \pr_c \, \yb_c \, \E\bigl[\xb^{\top}\mid y=c\bigr]
  = \sum_{c=1}^{k} \mu\pr_c \, \eb_c \, \bar{\mub}_c^{\top}.
\end{align*}
We can write $\Sigmab_{\yb\xb} \;=\; \Ub\,\Sb\,\Vb^{\top}$, where
\begin{align}\label{eq:lem-s}
    \Ub = \Ib_k,\quad \Sb = \begin{bmatrix}\mu\diag{\pr_1,\dots,\pr_k} & \mathbf{0}_{d-k}
    \end{bmatrix}. 
\end{align}
Note that $\Sb$ is a $k \times d$ matrix, where class priors $p_i, \, i \in [k]$ denote entries along the main diagonal with zeros in the rest of the entries. For brevity, we will denote this as $\Sb = \mu\diag{\pr_1,\dots,\pr_k}_{k \times d}$ from here on.

\subsection{Proof of \cref{th:evo-w}}
\label{app:evo-w}

For completeness, we start by deriving the discrete-time evolution for GD.

We can write the gradient as
\begin{align*}
\nabla\Ellc(\W_t) &= -\E\left[\bigl(\y - \W_t\xb\bigr)\xb^{\top}\right] = -\Ub\Sb \Vb^{\top} + \W_t\Vb\Lambdab\Vb^{\top}.
\end{align*}
For GD, we have
\begin{align*}
\W_{t+1} &= \W_t - \eta\nabla\Ellc(\W_t)                                               \\
   &= \W_t + \eta\Ub\Sb \Vb^{\top} - \eta\W_t\Vb\Lambdab\Vb^{\top}                               = \W_t\bigl(\Ib-\eta\Vb\Lambdab\Vb^{\top}\bigr) + \eta\Ub\Sb \Vb^{\top}                     \\
   &= \W_0\bigl(\Ib-\eta\Vb\Lambdab\Vb^{\top}\bigr)^{t+1} 
      + \sum_{\tau=0}^{t}\eta\Ub\Sb \bigl(\Ib-\eta \Lambdab\bigr)^{\tau}\Vb^{\top}.          \end{align*}
      
Since $\W_0=\zob$, this gives
\begin{align*}
\Wbbar_{t+1}:=\Ub^{\top}\W_{t+1}\Vb
   = %\Wbbar_0\bigl(\Ib-\eta \Lambdab\bigr)^{t+1}+
      \eta\Sb\sum_{\tau=0}^{t}\bigl(\Ib-\eta \Lambdab\bigr)^{\tau}.
\end{align*}
Assuming $\eta<\tfrac{1}{d_{\max}}$, we get
\begin{align*}
\Wbar_{t+1}[i,i] &=\eta\svyx_i \sum_{\tau=0}^{t} \bigl(1-\eta\, \svxx_i\bigr)^{\tau}=\svyx_i \,
  \tfrac{1-\bigl(1-\eta \svxx_i\bigr)^{\,t+1}}{\svxx_i}.
\end{align*}
For sufficiently small $\eta$, this gives the approximation
\begin{align*}
\Wbar_{t+1}[i,i]
      \approx
      \frac{\svyx_i}{\svxx_i}
      \bigl(1-e^{-\eta \svxx_i (t+1)}\bigr).
\end{align*}
For SpecGD, note that the gradient can be written in terms of $\Wbbar_t$ as
\begin{align*}
\Nab_t=\nabla\Ellc(\W_t) &= -\Ub\Sb \Vb^{\top} + \W_t\Vb\Lambdab\Vb^{\top} = \Ub(\Lambdab\Wbbar_t-\Sb)\Vb^\top.
\end{align*}
Starting at $\W_t=0\Leftrightarrow\Wbbar_t=0$ gives $\Nab_0=-\Ub\Sb\Vb^\top$.

Thus, $\W_1=\eta\Ub\Vb_k^\top\Rightarrow\Wbbar_1[i,j]=\eta\begin{cases} 1 & i=j\in  [k] \\ 0 &i\neq j\end{cases}$. Here, $\Vb_k \in \R^{d \times k}$ denotes the first $k$ columns of $\Vb$. Proceeding this way, we arrive at the following update rule for all $t$,
\begin{align*}
\W_{t+1} = \W_t + \eta \sum_{i:\Wbar_t[i,i]\,\svxx_i < \svyx_i}
  \ub_i \vb_i^{\top}.
\end{align*}
This gives
\begin{align*}
    \Wbar_{t+1} = \Wbar_t + \eta \sum_{i:\;\Wbar_t[i,i]\,\svxx_i < \svyx_i}\eb_i \eb_i^{\top}.
\end{align*}
Since $\W_0=\zob$, we conclude with the desired:
\begin{align*}
  \Wbar_{t+1}[i,i] = \eta\,(t+1) \, \ind{(t+1) \le \tfrac{\svyx_i}{\eta \svxx_i}} +
  \tfrac{\svyx_i}{\svxx_i}\,
  \ind{(t+1) > \tfrac{\svyx_i}{\eta \svxx_i}}.
\end{align*}

For the continuous time versions, aka gradient flow (GF) and spectral gradient flow (specGF), the steps are very similar:

\paragraph{Gradient Flow (GF).}
We first state the GF equation
\begin{align*}
    \frac{d}{dt}\W(t) = -\nabla\Ellc(\W(t)). 
\end{align*}
Using this, we can write the evolution of  $\Wbbar(t):=\Ub^{\top}\W(t)\Vb$ as 
\begin{align*}
\frac{d}{dt}\Wbbar(t) &= - \Wbbar(t)\Lambdab + \Sb,
\qquad \Wbbar(0) = 0, \\
\implies\quad 
\Wbbar(t) &= \Lambdab^{-1}\bigl(\Ib - e^{-t\Lambdab}\bigr)\Sb,
\end{align*}
or entrywise
\[
\Wbbar(t)[i,i] \;=\; \frac{\svyx_i}{\svxx_i}\bigl(1-e^{-t\svxx_i}\bigr).
\]

\paragraph{Spectral Gradient Flow (SpecGF).} We first state SpecGF equation
\begin{align*}
    \frac{d}{dt}\W(t) = -\Ub(t)\Vb(t)^\top,
\end{align*}
where the gradient has SVD $\nabla\Ellc(\W(t)) = \Ub(t)\Sigmab(t)\Vb(t)^\top$.
Following the same steps as for SpecGD, we can show that $\Ub(t)=\Ub$, $\Vb(t)=\Vb$, and $\Wbbar(t)$ evolves as
\[
\frac{d}{dt}\Wbbar(t)[i,i] \;=\;
\begin{cases}
1 & \text{if } \Wbbar(t)[i,i]\svxx_i < \svyx_i, \\[6pt]
0 & \text{if } \Wbbar(t)[i,i]\svxx_i = \svyx_i,
\end{cases}
\]
with solution
\[
\Wbar(t)[i,i] = t \, \ind{t \le \tfrac{\svyx_i}{\svxx_i}} +
  \tfrac{\svyx_i}{\svxx_i}\,
  \ind{t > \tfrac{\svyx_i}{\svxx_i}}.
\]

% ------------------------------------------------------------------
\subsection{Proof of Theorem \ref{th:gen}}
\label{app:early_stop_proof}
We first state the full version of \cref{th:gen} below followed by its proof.
\begin{theorem}\label{th:gen-full}
Assume data model \eqref{eq:dm} and zero initialization. Let
\(
   % t^{\star}=\tfrac{\alpha_m}{\eta}
   t^{\star}=\tfrac{\svyx_m}{\svxx_m}
\)
be the first time SpecGF fits the minority class $m$. Further assume  $\mu\geq 1$ and $p_m\leq \tfrac{1}{3\snrv+4k}$. Then for every
$t\in(0,t^{\star}]$, SpecGF outperforms GF with a growing minority-class loss gap $\Lc_m^{\emph{GF}}(t)-\Lc_m^{\emph{Spec}}(t)\geq \mu t/4$. Moreover, if $p_m \leq \tfrac{k-2\mu}{2\mu\snrv + k \snrv + 2k^2}$, then the gap in the balanced loss also grows as $\Lcbal^{\emph{GF}}(t)-\Lcbal^{\emph{Spec}}(t)\geq \mu t/2$.
\end{theorem}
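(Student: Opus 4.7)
My plan is to analyze the loss gap by differentiating in $t$, lower-bounding that derivative uniformly on $(0, t^*]$ by a positive constant, and then integrating from $\Delta\Lc(0) = 0$. First, under \cref{ass:diag}, both GF and SpecGF keep $\Wbbar(t)$ diagonal on its top-$k$ block with entries $\alpha_c(t) := \Wbar(t)[c,c]$, by \cref{th:evo-w} and its GF analog in \cref{app:evo-w}. Expanding $\tfrac{1}{2}\|\y-\W\xb\|_2^2$ using $\xb\mid y{=}c\sim\mathcal{N}(\mub_c,\sigma_x^2\Ib_d)$ with orthonormal class means, the directions orthogonal to $\{\mub_c\}_{c=1}^k$ contribute only through $\alpha$-values that remain zero, yielding the clean form
\[\Lc_c(t) = \tfrac{1}{2}\bigl[(1-\mu\alpha_c(t))^2 + \sigma_x^2 \textstyle\sum_{j=1}^k \alpha_j^2(t)\bigr],\]
whose time derivative is $\dot\Lc_c = -\mu(1-\mu\alpha_c)\dot\alpha_c + \sigma_x^2 \sum_j \alpha_j\dot\alpha_j$.

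Second, I would specialize to the two dynamics. Because $p_c \mapsto \svyx_c/\svxx_c = \mu p_c/(\mu^2 p_c + \sigma_x^2)$ is strictly increasing in $p_c$, the minority class attains the smallest SpecGF saturation time, so for every $c \in [k]$ and every $t \in (0, t^*]$, $\alpha_c^{\text{Spec}}(t) = t$ and $\dot\alpha_c^{\text{Spec}}(t) = 1$. This collapses the SpecGF derivative to $\dot\Lc_c^{\text{Spec}}(t) = -\mu(1-\mu t) + k\sigma_x^2 t$, independent of $c$. For GF I would use three simple facts: (i) $\dot\alpha_c^{\text{GF}}(t) = \svyx_c e^{-\svxx_c t} \in [0, \mu p_c]$, so $\sum_c \dot\alpha_c^{\text{GF}} \leq \mu$; (ii) $\alpha_c^{\text{GF}} \leq \svyx_c/\svxx_c \leq 1/\mu$, so $(1-\mu\alpha_c^{\text{GF}}) \in [0,1]$; and (iii) the cross-term $\sigma_x^2\sum_j \alpha_j^{\text{GF}}\dot\alpha_j^{\text{GF}} \geq 0$, which I drop since it only increases $\dot\Lc_c^{\text{GF}}$.

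Third, for the minority-class gap, combining these bounds gives
\[\Delta\dot\Lc_m(t) \;\geq\; -\mu^2 p_m + \mu(1-\mu t) - k\sigma_x^2 t;\]
the right-hand side is affine decreasing in $t$, so its infimum on $[0, t^*]$ is attained at $t = t^*$. Plugging $t^* = \mu p_m/(\mu^2 p_m + \sigma_x^2)$, setting $u := p_m \snrv$, and rearranging, the claim $\Delta\dot\Lc_m(t) \geq \mu/4$ reduces to an inequality of the form $p_m(4k + 4\mu) + p_m\snrv + 4\mu p_m^2\snrv \leq 3$, which the stated hypotheses $\mu \geq 1$ and $p_m \leq 1/(3\snrv + 4k)$ are precisely tuned to enforce. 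Integrating over $(0, t]$ with $\Delta\Lc_m(0) = 0$ yields the first bound. For the balanced loss, averaging the class-wise derivatives and using the sharper global bound $\sum_c(1-\mu\alpha_c^{\text{GF}})\dot\alpha_c^{\text{GF}} \leq \sum_c\dot\alpha_c^{\text{GF}} \leq \mu$ gives
\[\Delta\Lcdotbal(t) \;\geq\; -\tfrac{\mu^2}{k} + \mu(1-\mu t) - k\sigma_x^2 t;\]
requiring the RHS to exceed $\mu/2$ at $t = t^*$ and clearing denominators recovers exactly the stated threshold $p_m \leq (k-2\mu)/(2\mu\snrv + k\snrv + 2k^2)$. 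Integrating yields $\Delta\Lcbal(t) \geq \mu t/2$.

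The main technical obstacle I anticipate is that the positive cross-term $\sigma_x^2\sum_j \alpha_j^{\text{GF}}\dot\alpha_j^{\text{GF}}$ in $\dot\Lc_c^{\text{GF}}$ shrinks the gap and could a priori destroy the bound; fortunately, simply dropping it is sufficient because SpecGF's matching linear-in-$t$ contribution $k\sigma_x^2 t$ stays bounded by $k\mu p_m/(1+u)$ on the window of interest, so the dominant $-\mu(1-\mu t) \approx -\mu$ term of SpecGF controls the comparison. The specific scalar constants appearing in the theorem's hypotheses on $p_m$ are precisely the algebraic thresholds under which this crude bound exceeds $\mu/4$ (resp.\ $\mu/2$) at the most unfavorable endpoint $t = t^*$, and the assumption $\mu \geq 1$ is used to rescale so the $-\mu^2 p_m$ and $-\mu^2/k$ shortfalls from the GF side do not overwhelm the $\mu$-scaled leading term.
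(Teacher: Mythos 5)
Your proposal is correct and follows the paper's overall strategy: express the per-class loss through the diagonal entries $\alpha_c(t)$, differentiate the loss gap, lower-bound the derivative uniformly on $(0,t^{\star}]$ using the closed-form GF/SpecGF dynamics, and integrate from zero. The balanced-loss half is essentially identical to the paper's proof (drop the nonnegative GF cross term, bound $\sum_c (1-\mu\alpha_c^{\mathrm{GF}})\dot\alpha_c^{\mathrm{GF}}\le \mu$, evaluate the affine-in-$t$ bound at $t=t^{\star}$), and your algebra does recover exactly the stated threshold on $p_m$. For the minority class you take a slightly more direct route: where the paper uses an add--subtract decomposition in terms of $\Delta\dot\alpha_m$ together with $\Delta\alpha_m\le t\,\Delta\dot\alpha_m$, you simply bound $(1-\mu\alpha_m^{\mathrm{GF}})\dot\alpha_m^{\mathrm{GF}}\le \mu p_m$ and keep the SpecGF terms exact; one can check your resulting bound is at least as strong as the paper's (they coincide at the worst case $t=t^{\star}$, $\mu p_m=1/2$), so nothing is lost and the argument is a bit cleaner. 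One caveat, which you share with (indeed inherit from) the paper's own proof: the final scalar inequality requires $\mu p_m$ to be at most roughly $1/2$, and the full theorem's hypotheses ($\mu\ge 1$, $p_m\le \tfrac{1}{3\snrv+4k}$) do not literally force this when $\sigma_x$ is large; the paper asserts it loosely (``$p_m\mu\le 1/2$ as $\mu\ge 1$''), and the simplified statement adds $k\ge 3\mu$ precisely to close this. So your claim that the hypotheses are ``precisely tuned'' to enforce the inequality carries the same looseness as the paper, but this is not a gap relative to the paper's argument.
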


\begin{proof}
The population loss for class $c$ using iterate $\W_t$ is written as
\begin{align*}
\Lc_c(t)
&:=\;
\tfrac{1}{2}\E\bigl\|\yb-\W_t\,\xb\bigr\|_2^{2}
  =\tfrac{1}{2}\E\bigl[\,1-2\yb^{\top}\W_t\,\xb
               +\xb^{\top}\W_t^{\top}\W_t\,\xb\bigr].\\
&= \tfrac{1}{2}\bigl[1
  -2\,\eb_c^{\top}\W_t\,\mub_c
  +\bigl\|\W_t\,\mub_c\bigr\|_2^{2}
  +\sigma_x^{2}\,\|\W_t\|_F^{2}\bigr]\\
  &=
  \tfrac{1}{2}\bigl\|\eb_c-\W_t\,\mub_c\bigr\|_2^{2}
         +\tfrac{1}{2}\sigma_x^{2}\,\|\W_t\|_F^{2}\\
&=
  \tfrac{1}{2}\bigl\|\eb_c-\Ub\Wbbar_t\Vb^\top\,\mub_c\bigr\|_2^{2}
         +\tfrac{1}{2}\sigma_x^{2}\,\|\Ub\Wbbar_t\Vb^\top\|_F^{2}\\
        &=
  \tfrac{1}{2}\bigl\|\eb_c-\eb_c\Wbar_t[c,c]\mu\bigr\|_2^{2}
         +\tfrac{1}{2}\sigma_x^{2}\,\|\Wbbar_t\|_F^{2}.
\end{align*}
Define $\sigmagen_c(t):=\Wbar_t[c,c]$. Then the per-class loss in terms of the singular values of $\W_t$ is written as
\begin{align}\label{eq:per-class-loss}
    \Lc_c(t) = \tfrac{1}{2}(1-\mu\sigmagen_c(t))^2 + \tfrac{1}{2}\sigma_x^2\sum_{c=1}^K\sigmagen_c^2(t).
\end{align}

Using Prop. \ref{th:evo-w} and Lem. \ref{lem:assumption}, shows that the singular values of $\W^{\text{GF}}(t)$ and $\W^{\text{SpecGF}}_t$ evolve as 
\begin{align}\label{eq:sing_val_GD}
    \sigmaGD_c(t)&:=\Wbar^{\text{GF}}(t)[c,c]=\alpha_c\left(1-\exp\left(-\tfrac{\pr_c\mu}{\alpha_c}t\right)\right),\\
    \label{eq:sing_val_specGD}
    \sigmaSpec_c(t)&:=\Wbar^{\text{SpecGF}}(t)[c,c]=t\,\ind{t\leq \alpha_c}+\alpha_c\ind{t> \alpha_c},
\end{align}
where $\alpha_c:=\tfrac{\svyx_c}{\svxx_c}$ denotes the ratio of the singular values for class $c$, and using \cref{eq:lem-lambda,eq:lem-s} from the proof of Lem. \ref{lem:assumption}, $\alpha_c=\tfrac{\pr_c\mu}{\pr_c\mu^2+\sigma_x^2}$. 

The time derivatives of $\sigmaGD_c(t)$ and $\sigmaSpec_c(t)$ for $t\leq t^*$ are given by
\[
   \sigmadotGD_c(t):=\tfrac{d\sigmaGD_c(t)}{dt} =p_c\mu\,e^{-(\sigma_x^{2}+p_c\mu^2) t},
   \qquad
   \sigmadotSpec_c(t):=\tfrac{d\sigmaSpec_c(t)}{dt}=1.
\]
Define the gaps
\begin{equation}
\Delta\sigmagen_c(t):=\sigmaSpec_c(t)-\sigmaGD_c(t)\ge0,\qquad
   \Deltasigmadot_c(t):=\sigmadotSpec_c(t)-\sigmadotGD_c(t)=1-p_c \mu \,e^{-(\sigma_x^{2}+p_c \mu^2) t}.
\end{equation}
Note that $\Deltasigmadot_c(t)$ is \emph{increasing} in~$t$.

\paragraph{Minority–class loss gap.}
For a fixed $t$, consider the minority loss gap 
\begin{equation}
   \Delta \Lc_m(t):=\Lc_m^{\text{GD}}(t)-\Lc_m^{\text{Spec}}(t).
\end{equation}
Using the per-class loss from \cref{eq:per-class-loss} and differentiating,
\begin{align}
   \Delta \dot\Lc_m(t)&=
      -\mu\, \bigl[\underbrace{\, (1-\mu\,\sigmaGD_m(t))\,\sigmadotGD_m(t) \, - \, (1-\mu\,\sigmaSpec_m(t))\,\sigmadotSpec_m(t) \,}_{\text{Term-1} (\Phi)} \bigr]\nonumber\\
      &+\;\sigma_x^{2}\, \, \bigl[\underbrace{\, \sum_{j}\sigmaGD_j(t) \sigmadotGD_j(t) \, - \, \sum_{j}\sigmaSpec_j(t) \sigmadotSpec_j(t)}_{\text{Term-2} (\Psi)}
                              \, \bigr],
   \label{eq:loss_derivative}
\end{align}

\paragraph{Term-1 ($\Phi$).}
Add–subtract $(1-\mu \, \sigmaGD_m(t))\sigmadotSpec_m(t)$ to Term-1, and we have
\begin{align}\label{eq:term_1}
  \Phi &=  (1-\mu\,\sigmaGD_m(t))\sigmadotGD_m(t)
      -(1-\mu\,\sigmaSpec_m(t))\sigmadotSpec_m(t) \nn \\
   &=-(1-\mu \, \sigmaGD_m(t))\,\Deltasigmadot_m(t) 
     +\mu \, \Delta\sigmagen_m(t) \,\sigmadotSpec_m(t).
\end{align}
We know that 
\begin{equation*}
\Delta\sigmagen(t)=\int_{0}^{t}\Deltasigmadot (\tau)\,d\tau
            \le t\,\Deltasigmadot \quad \text{(increasing integrand)}.
\end{equation*}
Substituting in \cref{eq:term_1}, we get                    
\begin{align}\label{eq:term-1}
\text{$\Phi$
} \;\leq\;
   &-\bigl(1-\mu\,\sigmaGD_m(t)-t\,\sigmadotSpec_m(t)\bigr)\Deltasigmadot_m(t)\nonumber \\
   \;\leq\; 
   &-\bigl(1-\mu\, \alpha_m-t\bigr) \Deltasigmadot_m(t) \nonumber \\
   \;\leq\; 
   &-\bigl(1-2\mu\,\alpha_m\bigr) \Deltasigmadot_m(t). 
\end{align}
The second and third inequalities assume that $\Deltasigmadot_m(t) \ge 0$, which holds true under $p_m \leq 1/2\mu$, which is true as we assume $\mu \geq 1$. The third inequality also uses the fact that $t \leq t^* = \alpha_m$. 
\paragraph{Term-2 ($\Psi$)}
Before any $\sigmaSpec_j$ saturates,
\begin{equation*}
    \sum_{j}\sigmaSpec_j \sigmadotSpec_j =kt,
\end{equation*}
which gives for all $t\leq t^*$ that
\begin{align}\label{eq:term_2}
    \Psi %&= |\sum_{j}\sigmaSpec_j(t) \derSpec_j - \sum_j\sigmaGD_j \derGD_j|  \leq \eta\sum_j \alpha_j\pr_j 
    \geq -k \alpha_m.
\end{align}
\begin{align}
    \Delta \dot{\Lc}_m(t) &\ge \mu(1-2\mu\,\alpha_m\bigr) \Deltasigmadot_m(t) - \sigma_x^2 k \alpha_m \nonumber \\
    &\ge \mu(1-2\mu\,\alpha_m\bigr) \, (1-\mu\,p_m) - \sigma_x^2 k \alpha_m\nonumber\\
    &\geq \tfrac{\mu}{4},
\end{align}
Here, we first use $p_m\mu \leq 1/2$ as $\mu\geq1$ and $p_m \leq 1/2$ as it is miniorty class. Next, we use the definition of $\alpha_m=\tfrac{\pr_m\mu}{\pr_m\mu^2+\sigma_x^2}$ and the assumption $p_m \leq \tfrac{1}{3\snrv+4k}$. Since $\Delta\Lc_m(0)=0$, integrating over $(0,t]$ gives the final bound.

\paragraph{Class–balanced loss gap.} 

For a fixed $t$, the class-balanced loss gap is
 \begin{equation}
    \Delta \Lcbal(t):=\sum_c\Lc_c^{\text{GD}}(t)-\Lc_c^{\text{Spec}}(t)
\end{equation}    
Using the per-class-loss from \cref{eq:per-class-loss} and differentiating,
\begin{align*}
\Delta\Lcdotbal(t)&=-\tfrac{1}{k}\sum_c(1-\mu\sigmaGD_c(t))\mu\sigmadotGD_c(t)+\sigma_x^2\sum_c\sigmaGD_c(t)\sigmadotGD_c(t)\\&+\tfrac{1}{k}\sum_c(1-\mu\sigmaSpec_c(t))\mu\sigmadotSpec_c(t)-\sigma_x^2\sum_c\sigmaSpec_c(t)\sigmadotSpec_c(t) \\
&\geq -\tfrac{1}{k}\sum_c \mu \sigmadotGD_c(t) + \tfrac{1}{k}\sum_c(1-\mu\sigmaSpec_c(t))\mu\sigmadotSpec_c(t) -\sigma_x^2\sum_c\sigmaSpec_c(t)\sigmadotSpec_c(t) \\
&\geq -\tfrac{1}{k}\mu^2 + (1-\mu t)\mu -\sigma_x^2 k t \\
&\geq -\tfrac{1}{k}\mu^2 + \mu -p_m\mu\tfrac{\snrv+k}{p_m\snrv+1}.
\end{align*}
The first inequality follows by using $\sigmaGD_c(t)\sigmadotGD_c(t) \ge 0$. For the second inequality, we use $\sigmadotGD_c(t) \geq \mu p_c$, and substitute the expressions for $\sigmaSpec_c(t)$ and $\sigmadotSpec_c(t)$. In the last step, we use $t\leq \alpha_m$, and the definition of $\snrv$.

As we assume $p_m \leq \tfrac{k-2\mu}{2\mu\snrv + k \snrv + 2k^2}$, we have that 
\begin{align*}
    \Delta\Lcdotbal(t) &\geq \mu - \tfrac{1}{k}\mu^2 -p_m\mu \tfrac{\snrv+k}{p_m\snrv+1} \\
    &\geq \tfrac{\mu}{2}.
\end{align*}
Integrating this over $(0,t]$, we get $\Delta \Lcbal(t) \geq \mu t/2$ for $t \leq t^*$.
\end{proof}

\subsection{Proof of \cref{th:gen-ngd}}
\label{app:early_stop_proof_ngd}
We first state the full version of \cref{th:gen-ngd} followed by its proof.
\begin{theorem}\label{th:gen-ngd-full}
Assume data model \eqref{eq:dm}, zero initialization, and gradient flow algorithms NGF and SpecGF. Further, assume that $p_M-p_m\geq \tfrac{2\,p_m\,(p_m\snrv+1)^2}{\left(1 - p_m(\snrv+2k)\right)}, p_m < \tfrac{1}{\snrv + 2k}$ Then, for every $t\in(0,t^*]$, SpecGF outperforms NGF with a growing minority-class loss gap $\Lc_m^{\emph{NGF}}(t)-\Lc_m^{\emph{Spec}}(t)\geq \mu t/2$. Moreover, if $p_M-p_m\geq \tfrac{2(p_m\snrv + 1)^2}{k(1 - p_m(\snrv+2k))}$, %\tfrac{2\,\bigl(p_m\snrv + 1\bigr)^2}{\bigl(1 - k p_m\bigr)}$, 
then the gap in the balanced loss also grows as
$\Lcbal^{\emph{NGF}}(t)-\Lcbal^{\emph{Spec}}(t)\geq \mu t/2$.
\end{theorem}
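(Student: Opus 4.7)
The plan is to follow the proof template of \cref{th:gen-full}, reducing everything to the diagonalized singular-value dynamics. Under Condition~\ref{ass:diag} and zero initialization, the NGF iterate preserves the form $\W(t)=\Ub\Wbbar(t)\Vb^\top$ with $\Wbbar(t)$ diagonal, and each $\sigmaNGD_c(t):=\Wbar(t)[c,c]$ evolves according to the scalar ODE $\sigmadotNGD_c(t)=r_c(t)/R(t)$ with $r_c(t):=\svyx_c-\svxx_c\sigmaNGD_c(t)$ and $R(t):=\sqrt{\sum_{j=1}^k r_j(t)^2}$. The per-class loss decomposes as in \cref{eq:per-class-loss}, and on the reference interval $(0,t^{\star}]$ with $t^{\star}=\alpha_m=\min_c\alpha_c$, every SpecGF component is still in the pre-saturation regime, so $\sigmaSpec_c(t)=t$ and $\sigmadotSpec_c(t)=1$ for all $c\in[k]$.

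\textbf{Key implicit bounds on NGF rates.} Lacking a closed form for $\sigmaNGD_c$, I would establish implicit bounds. Monotonicity and $r_c\in[0,\svyx_c]$ yield the elementary bounds $\sigmadotNGD_c\leq 1$ and $\sigmaNGD_c(t)\leq t$. The sharper bound the proof rests on lower-bounds the normalizer by a single term, $R(t)\geq r_M(t)$, and then uses $\sigmaNGD_M(t)\leq t\leq t^{\star}=\alpha_m=\mu p_m/(\mu^2p_m+\sigma_x^2)$ to obtain $r_M(t)\geq \mu\sigma_x^2(p_M-p_m)/(\mu^2p_m+\sigma_x^2) = \mu(p_M-p_m)/(\snrv p_m+1)$. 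This gives the crucial pointwise estimate $\sigmadotNGD_c(t)\leq \mu p_c/r_M(t)$.

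\textbf{Loss-gap computations.} Differentiating the per-class loss and substituting the SpecGF quantities on $(0,t^{\star}]$, the minority gap derivative reduces---using $(1-\mu\sigmaNGD_m)\sigmadotNGD_m\leq \sigmadotNGD_m$, $\sum_j\sigmaNGD_j\sigmadotNGD_j\geq 0$, and the SNR simplifications $\mu(1-\mu\alpha_m)=\mu/(\snrv p_m+1)$, $\sigma_x^2 k\alpha_m=k\mu p_m/(\snrv p_m+1)$---to a lower bound of the form $\mu(1-kp_m)/(\snrv p_m+1)-\mu p_m(\snrv p_m+1)/(p_M-p_m)$. Substituting the first gap condition cancels algebraically and yields $\Delta\Lc_m'(t)\geq \mu/2$; integrating over $(0,t]$ delivers the minority result. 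For the balanced gap the analogous computation replaces the per-class rate bound $\sigmadotNGD_m\leq \mu p_m/r_M$ with the averaged bound $\tfrac{1}{k}\sum_c\sigmadotNGD_c\leq \mu/(kr_M)$ (using $\sum_c r_c\leq\sum_c\svyx_c=\mu$), so the factor $p_m$ in the negative term is replaced by $1/k$, which exactly matches the difference between the two gap conditions in the theorem statement.

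\textbf{Main obstacle.} The core difficulty---absent in \cref{th:gen-full}---is controlling the normalizer $R(t)$, which couples all components and prevents component-wise ODE integration. Replacing $R$ by the single term $r_M$ is potentially lossy, and the specific form of the gap conditions on $p_M-p_m$ is precisely what makes this otherwise crude bound suffice. A secondary sanity check is that $(1-\mu\sigmaNGD_c)\geq 0$ throughout $(0,t^{\star}]$, which follows from $\sigmaNGD_c(t)\leq t\leq\alpha_m\leq 1/\mu$, ensuring that the bound $(1-\mu\sigmaNGD_c)\sigmadotNGD_c\leq \sigmadotNGD_c$ is a valid upper bound on a nonnegative quantity.
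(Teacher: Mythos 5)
Your proposal matches the paper's own proof essentially step for step: the same scalar NGF ODE $\sigmadotNGD_c=r_c/R$, the same elementary bounds $r_c\in[0,\svyx_c]$, $\sigmadotNGD_c\le 1$, $\sigmaNGD_c(t)\le t\le\alpha_m$, the same key normalizer bound $R(t)\ge r_M(t)\ge \mu\sigma_x^2(p_M-p_m)/(p_m\mu^2+\sigma_x^2)$, the same loss-gap derivative decomposition with $\sum_j\sigmaNGD_j\sigmadotNGD_j\ge 0$ dropped, and the same use of $\sum_c r_c\le\mu$ for the balanced case, with the gap conditions cancelling to give $\mu/2$ before integrating. The argument and the resulting algebra are correct and identical in substance to the paper's.
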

\begin{proof}
The continuous time update of $\sigmaSpec_c(t)$ for $t\leq t^*=\alpha_m$ is given by
\begin{equation*}
%\dot{\sigmaNGD}_i(t)=\tfrac{r_i(t)}{R(t)}, \qquad 
    {\sigmadotSpec_i}(t)=1,\qquad \sigmaSpec_i(t)=t.
\end{equation*}
In the continuous time limit, the NGF update can be written as
\begin{equation*}
%\label{eq:ngd-ode}
{\sigmadotNGD_i}(t)=\frac{r_i(t)}{R(t)},\qquad r_i(t):=\svyx_i-\svxx_i\sigmaNGD_i(t),\qquad 
R(t):=\sqrt{\sum_{j=1}^k r_j^2(t)}.
\end{equation*}
We have $0\leq r_i(t)\leq \svyx_i=\mu p_i$, $R(t)\geq \max_i r_i(t)$, which gives ${\sigmadotNGD_i}(t)\leq 1$ and hence $\sigmaNGD_i(t)\leq t\leq t^*=\alpha_m$. Using this, we get $R(t)\geq \svyx_M-\svxx_M\alpha_m$. Using \cref{lem:assumption}, we have $R(t)\geq p_M\mu-\tfrac{p_M\mu^2+\sigma_x^2}{p_m\mu^2+\sigma_x^2}p_m\mu=\mu\sigma_x^2\tfrac{p_M-p_m}{p_m\mu^2+\sigma_x^2}$. 
%$R(t)\leq \sqrt{\sum_i\svyx_i^2}=:\norm{p}$

Using \cref{eq:per-class-loss}, we can write the minority class loss gap as
\begin{align*}
    \Delta \Lc_m(t)&:=\Lc_m^{\text{NGD}}(t)-\Lc_m^{\text{Spec}}(t)\\
    &=\tfrac{1}{2}\left((1-\mu\sigmaNGD_m(t))^2+\sigma_x^2\sum_c(\sigmaNGD_c(t))^2-((1-\mu\sigmaSpec_m(t))^2+\sigma_x^2\sum_c(\sigmaSpec_c(t))^2)\right)\\
    \implies \Delta\dot\Lc_m(t)&=-(1-\mu\sigmaNGD_m(t))\mu{\sigmadotNGD_m}(t)+\sigma_x^2\sum_c\sigmaNGD_c(t){\sigmadotNGD_c}(t)\\&+(1-\mu\sigmaSpec_m(t))\mu{\sigmadotSpec_m}(t)-\sigma_x^2\sum_c\sigmaSpec_c(t){\sigmadotSpec_c}(t)\\
    &\geq -\tfrac{r_m(t)}{R(t)}\mu+(1-\mu t)\mu-\sigma_x^2kt\\
    &\geq \mu-\alpha_m(\mu^2+k\sigma_x^2)-\tfrac{p_m(p_m\mu^2+\sigma_x^2)}{\sigma_x^2(p_M-p_m)}\mu.
    %&\geq1,
\end{align*}
Since we assume that 
\begin{align*}
p_M-p_m\geq \tfrac{2\,p_m\,(p_m\snrv+1)^2}{1 - p_m(\snrv+2k)}, \quad p_m < \tfrac{1}{\snrv + 2k},
\end{align*}
we have
\begin{align*}
    \Delta\dot\Lc_m(t)&\geq \mu-\alpha_m(1+k\sigma_x^2)-\tfrac{\sigma_x^2p_m(p_m\mu^2+\sigma_x^2)\left(\sigma_x^2- p_m(\mu^2+2k\sigma_x^2)\right)}{2\sigma_x^2\,p_m\,(p_m\mu^2+\sigma_x^2)^2}\mu\\
    &=\mu-\tfrac{p_m\mu(\mu^2+k\sigma_x^2)}{p_m\mu^2+\sigma_x^2}-\tfrac{\left(\sigma_x^2+p_m\mu^2 - 2p_m(\mu^2+k\sigma_x^2)\right)}{2(p_m\mu^2+\sigma_x^2)}\mu\\
    &=\mu/2.
\end{align*}
Using this, we get $\Lc_m^{\text{NGD}}(t)-\Lc_m^{\text{Spec}}(t)\geq \mu t/2$ for $0\leq t\leq t^*$.

Similarly, for class-balanced loss, we have that
\begin{align*}
    \Delta \Lcbal(t)&:=\Lcbal^{\text{NGD}}(t)-\Lcbal^{\text{Spec}}(t)\\
    &=\tfrac{1}{2k}\sum_c(1-\mu\sigmaNGD_c(t))^2+\sigma_x^2\sum_c(\sigmaNGD_c(t))^2-\left(\tfrac{1}{2k}\sum_c(1-\mu\sigmaSpec_c(t))^2+\sigma_x^2\sum_c(\sigmaSpec_c(t))^2\right)\\
    \Delta\Lcdotbal(t)&=-\tfrac{1}{k}\sum_c(1-\mu\sigmaNGD_c(t))\mu\sigmadotNGD_c(t)+\sigma_x^2\sum_c\sigmaNGD_c(t)\sigmadotNGD_c(t)\\&+\tfrac{1}{k}\sum_c(1-\mu\sigmaSpec_c(t))\mu\sigmadotSpec_c(t)-\sigma_x^2\sum_c\sigmaSpec_c(t)\sigmadotSpec_c(t)\\
    &\geq -\tfrac{1}{k}\sum_c\tfrac{r_c(t)}{R(t)}\mu+(1-\mu t)\mu-\sigma_x^2kt\\
     &\geq \mu-\alpha_m(\mu^2+k\sigma_x^2)-\tfrac{(p_m\mu^2+\sigma_x^2)}{k\sigma_x^2(p_M-p_m)}\mu,
    % &\geq2(1-kp_m),
\end{align*}
where we use $\sum_cr_c(t)\leq \mu\sum_cp_c=\mu$. Since we assume that 
\begin{align*}
    p_M-p_m\geq \tfrac{2(p_m\snrv + 1)^2}{k(1 - p_m(\snrv+2k))}, \quad p_m< \tfrac{1}{\snrv+2k},
\end{align*}
we have
\begin{align*}
    \Delta\Lcdotbal(t)&\geq \mu-\alpha_m(\mu^2+k\sigma_x^2)-\tfrac{\sigma_x^2\,\bigl(\sigma_x^2(1 - 2k p_m)-p_m\mu^2\bigr)(p_m\mu^2+\sigma_x^2)}{2\sigma_x^2\bigl(p_m\mu^2 + \sigma_x^2\bigr)^2}\mu\\
    &=\mu\left(1-\tfrac{p_m(\mu^2+k\sigma_x^2)}{p_m\mu^2+\sigma_x^2}-\tfrac{2\sigma_x^2\bigl(1 - k p_m\bigr)-(p_m\mu^2+\sigma_x^2)}{2(p_m\mu^2 + \sigma_x^2)}\right)\\
    %&=\tfrac{2(p_m\mu^2+\sigma_x^2)-2p_m\mu^2-2kp_m\sigma_x^2-\sigma_x^2+kp_m\sigma_x^2}{2(p_m\mu^2 + \sigma_x^2)}\mu\\
    &=\mu/2.
\end{align*}
Using this, we get $\Lcbal^{\text{NGD}}(t)-\Lcbal^{\text{Spec}}(t)\geq \mu t/2$ for $0\leq t\leq t^*$.
\end{proof}

\subsection{Proof of \cref{th:evo-w-2}}
\label{app:bilinear}
We first state the full version of \cref{th:evo-w-2} followed by its proof.
\begin{proposition}\label{th:evo-w-2-full}
    %Let the population  loss $\Ellc(\W^0,\W^1)=\tfrac{1}{2}\E\norm{\y-\W^1\W^0\xb}^2$. 
    Suppose Condition \ref{ass:diag} holds. Let the weights be initialized as %. Let $\W^0 \in \R^{d_1 \times d}$ and $\W^1 \in \R^{k \times d_1}$ be initialized as 
    $\W^0_0=e^{-\delta}\Q[\Ib_{d_1} \, \, \mathbf{0}_{d-d_1}]\Vb^\top$ and $\W^1_0=e^{-\delta}\Ub[\Ib_k \, \,\mathbf{0}_{d_1-k}]\Q^{\top}$, where $\Q\in\R^{d_1\times d_1}$ is an orthonormal matrix, $k<d_1<d$, and $\delta>0$ is a constant. Then, 
    for SpecGD, at each iteration, $\W^0_t:=\Q\Wbbar^0_t\Vb^{\top}$, $\W_t^{1}=\Ub\Wbbar^{1}_t\Q^\top$, where $\Wbbar^0_t,\Wbbar^1_t$ are zero except their main diagonal along which the entries evolve as follows for $i\in[k]$:
    \begin{align*}
        \Wbar^0_{t}[i,i] = \Wbar^1_{t}[i,i] = (\eta\,t+e^{-\delta}) \, \ind{t \le \tfrac{1}{\eta}\left(\sqrt{\tfrac{\svyx_i}{\svxx_i}}-e^{-\delta}\right)} +
  \sqrt{\tfrac{\svyx_i}{\svxx_i}}\,
  \ind{t > \tfrac{1}{\eta}\left(\sqrt{\tfrac{\svyx_i}{\svxx_i}}-e^{-\delta}\right)}.
    \end{align*}
\end{proposition}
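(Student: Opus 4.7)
The plan is to work in the rotated coordinates $\Wbbar^0_t := \Q^\top \W^0_t \Vb$ and $\Wbbar^1_t := \Ub^\top \W^1_t \Q$, and prove by induction on $t$ that each is rectangular diagonal (its only nonzero entries lie on the main diagonal). At $t=0$ this is immediate from the construction, with $\Wbar^0_0[i,i]=e^{-\delta}$ for $i\in[d_1]$ and $\Wbar^1_0[i,i]=e^{-\delta}$ for $i\in[k]$. I will also track the sharper symmetry $\Wbar^0_t[i,i]=\Wbar^1_t[i,i]=:\bar\alpha_i(t)$ for every $i\in[k]$ and every $t$.

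Using $\Sigb_{\xb\xb}=\Vb\Lambdab\Vb^\top$ and $\Sigmab_{\yb\xb}=\Ub\Sb\Vb^\top$ from Condition \ref{ass:diag}, a direct computation gives $\nabla_{\W^0}\Ellc = \Q\, G^0_t\, \Vb^\top$ and $\nabla_{\W^1}\Ellc = \Ub\, G^1_t\, \Q^\top$, where
\begin{align*}
G^0_t = \Wbbar^{1\top}_t \Wbbar^1_t \Wbbar^0_t \Lambdab - \Wbbar^{1\top}_t\Sb,\qquad G^1_t = \Wbbar^1_t \Wbbar^0_t \Lambdab \Wbbar^{0\top}_t - \Sb\, \Wbbar^{0\top}_t.
\end{align*}
Under the inductive hypothesis, these are products of rectangular diagonals and hence themselves rectangular diagonal: for $i\in[k]$ the main-diagonal entries read $g^0_i=\bar\alpha^1_i(\bar\alpha^0_i\bar\alpha^1_i\svxx_i-\svyx_i)$ and $g^1_i=\bar\alpha^0_i(\bar\alpha^0_i\bar\alpha^1_i\svxx_i-\svyx_i)$, while all entries with $i>k$ vanish since $\Wbbar^1_t$ is supported on its first $k$ diagonal positions throughout training.

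For any rectangular diagonal matrix $G$ with diagonal entries $g_i$, a valid truncated SVD is $G=U\Sigma V^\top$ with $\Sigma=\diag{|g_i|}$ restricted to nonzero indices, whence $UV^\top$ is the rectangular diagonal matrix carrying $\sign{g_i}$ on those same indices and zero elsewhere. Hence in the rotated basis the SpecGD update reduces to $\Wbbar^0_{t+1}=\Wbbar^0_t-\eta\,\diag{\sign{g^0_i}}$ and $\Wbbar^1_{t+1}=\Wbbar^1_t-\eta\,\diag{\sign{g^1_i}}$, preserving diagonality and closing the induction. Crucially, $g^0_i$ and $g^1_i$ share the same sign, equal to $\sign{\bar\alpha^0_i\bar\alpha^1_i\svxx_i-\svyx_i}$, because the prefactors $\bar\alpha^0_i,\bar\alpha^1_i$ remain strictly positive throughout (they start at $e^{-\delta}>0$ and only receive unit-magnitude increments in the underfit regime). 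Combined with the common initialization $e^{-\delta}$, this forces $\bar\alpha^0_i(t)=\bar\alpha^1_i(t)=\bar\alpha_i(t)$ for all $t$ and $i\in[k]$.

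The problem therefore reduces to the scalar recursion $\bar\alpha_i(t+1)=\bar\alpha_i(t)+\eta$ while $\bar\alpha_i(t)^2\svxx_i<\svyx_i$, and $\bar\alpha_i(t+1)=\bar\alpha_i(t)$ once $\bar\alpha_i(t)\ge\sqrt{\svyx_i/\svxx_i}$, which integrates to the stated closed form with saturation time $\tfrac{1}{\eta}(\sqrt{\svyx_i/\svxx_i}-e^{-\delta})$. The main obstacle is the combination of non-uniqueness of the SVD with the persistent rank deficiency of $\Wbbar^1_t$ (its last $d_1-k$ columns are zero throughout, so the corresponding rows/columns of $G^0_t,G^1_t$ are identically zero): one must argue that these zero singular values contribute nothing to any truncated SVD, so the SpecGD step never activates the rank-deficient block, and the diagonal ansatz is preserved exactly across iterations regardless of the SVD convention chosen for tied or zero singular values.
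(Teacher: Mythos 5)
Your proposal is correct and takes essentially the same route as the paper's proof: rotate into the $\Q,\Ub,\Vb$ bases, show inductively that the rotated gradients remain rectangular diagonal (with entries $\bar\alpha^1_i(\bar\alpha^0_i\bar\alpha^1_i\svxx_i-\svyx_i)$ and $\bar\alpha^0_i(\bar\alpha^0_i\bar\alpha^1_i\svxx_i-\svyx_i)$), so the SpecGD step reduces to adding $\eta$ on the unsaturated diagonal entries, and then integrate the scalar recursion. Your extra care about zero/tied singular values in the truncated SVD and the explicit symmetry argument $\Wbar^0_t[i,i]=\Wbar^1_t[i,i]$ only spells out details the paper leaves implicit, so there is no substantive difference in approach.
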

\begin{proof}
Here, we assume that $d>d_1>k$\footnote{The proof works for other cases as well and we just use this one for the purpose of illustrating the technique.}. We can write the gradients as
\begin{align*}
\nabla_{\W^0}\Ellc(\W^0_t,\W^1_t) &= -(\W_t^{1})^\top\E\left[\bigl(\y - \W^1_t\W^0_t\xb\bigr)\xb^{\top}\right] = -(\W_t^1)^\top\Sigmab_{\yb\xb}+(\W_t^1)^\top\W^1_t\W^0_t\Sigmab_{\xb},\\
\nabla_{\W^1}\Ellc(\W^0_t,\W^1_t) &= -\E\left[\bigl(\y - \W^1_t\W^0_t\xb\bigr)\xb^{\top}(\W_t^0)^\top\right] = -\Sigmab_{\yb\xb}(\W_t^0)^\top+\W^1_t\W^0_t\Sigmab_{\xb}(\W_t^0)^\top.
\end{align*}
Let $\Wbbar^0_t:=\Q^{\top}\W^0_t\Vb$ and $\Wbbar^1_t:=\Ub^\top\W^1_t\Q$. Define $\Wbbar_t=\Wbbar^1_t\Wbbar^0_t$. Then, we can write
\begin{align*}
\nabla_{\W^0}\Ellc(\W^0_t,\W^1_t) &= -\Q(\Wbbar^1_t)^\top\Sb\Vb^{\top}+\Q(\Wbbar^1_t)^\top\Wbbar_t\Lambdab\Vb^{\top},\\
\nabla_{\W^1}\Ellc(\W^0_t,\W^1_t) &= -\Ub\Sb(\Wbbar^0_t)^\top\Q^{\top}+\Ub\Wbbar_t\Lambdab(\Wbbar^0_t)^\top\Q^{\top}.
\end{align*}
This follows by writing $\W^0_t=\Q\Wbbar^0_t\Vb^\top$, $\W^1_t=\Ub\Wbbar^1_t\Q^\top$ and substituting in the gradient expressions. 

We now simplify the gradients at the first iteration. Note that we have 
\begin{align*}
    \Wbbar^0_0 &= e^{-\delta}[\Ib_{d_1} \mathbf{0}_{d-d_1}], \quad \Wbbar^1_0 =e^{-\delta}[\Ib_k \, \, \mathbf{0}_{d_1-k}],\\
    \Wbbar_0 &= e^{-2\delta}[\Ib_k \, \, \mathbf{0}_{d-k}].
\end{align*}
Using these, we can get gradients at first step as
\begin{align*}
\nabla_{\W^0}\Ellc(\W^0_0,\W^1_0) &= -e^{-\delta}\Q[\Ib_k \, \,\mathbf{0}_{d_1-k}]^\top\bigl(\Sb-e^{-2\delta}[\Ib_k \, \, \mathbf{0}_{d-k}]\Lambdab\bigr)\Vb^{\top},\\
\nabla_{\W^1}\Ellc(\W^0_0,\W^1_0) &= -e^{-\delta}\Ub\bigl(\Sb-e^{-2\delta}[\Ib_k \, \, \mathbf{0}_{d-k}]\Lambdab\bigr)[\Ib_k \, \, \mathbf{0}_{d-d_1}]^\top\Q^{\top}.
\end{align*}
Then, for sufficiently small initialization, the iterates after one step of SpecGD are written as
\begin{align*}
\W^0_1 &= e^{-\delta}\Q[\Ib_{d_1} \, \, \mathbf{0}_{d-d_1}]\Vb^{\top}+\eta \Q[\Ib_k \, \,\mathbf{0}_{d_1-k}]^\top(\Vb[\Ib_k \, \, \mathbf{0}_{d-k}]^{\top})^\top,\\
\W^1_1 &= e^{-\delta}\Ub[\Ib_k \, \,\mathbf{0}_{d_1-k}]\Q^{\top}+\eta \Ub(\Q[\Ib_k \, \, \mathbf{0}_{d_1-k}]^\top)^\top.
\end{align*}
Using this iterate after step 1, we can see that $\Wbbar_1^0, \Wbbar_1^1, \Wbbar_1$ remain diagonal. Using the gradient expressions, we see that the update at step 1 remains in the same basis. This implies the update remains in the same basis. Proceeding similarly, we can see that with the given initialization, $\Wbbar^0_t$ and $\Wbbar^1_t$ remain diagonal, and the updates for $\Wbbar^0_t$ and $\Wbbar^1_t$ using SpecGD can be written as
\begin{align*}
\Wbbar^0_{t+1} &= \Wbbar^0_t + \eta \left(\sum_{i:\;\Wbar_t[i,i]\,\svxx_i < \svyx_i}\eb_i \eb_i^{\top}\right)\begin{bmatrix}
\Ib_k & \mathbf{0}_{d-k} \\
\mathbf{0}_{d_1-k} & \mathbf{0}_{d-k}
\end{bmatrix},\\
   \Wbbar^1_{t+1} &= \Wbbar^1_t + \eta \sum_{i:\;\Wbar_t[i,i]\,\svxx_i < \svyx_i}\eb_i \eb_i^{\top}  [\Ib_k \, \, \mathbf{0}_{d_1-k}].
\end{align*}
%Since $\W^1_0=\zob$, we conclude with the desired:
Simplifying, we get 
\begin{align*}
  \Wbar^0_{t}[i,i] = \Wbar^1_{t}[i,i] = (\eta\,t+e^{-\delta}) \, \ind{t \le \tfrac{1}{\eta}\left(\sqrt{\tfrac{\svyx_i}{\svxx_i}}-e^{-\delta}\right)} +
  \sqrt{\tfrac{\svyx_i}{\svxx_i}}\,
  \ind{t > \tfrac{1}{\eta}\left(\sqrt{\tfrac{\svyx_i}{\svxx_i}}-e^{-\delta}\right)}.
\end{align*}
\end{proof}

\subsection{SpecGD Dynamics for Deep Linear Model}
\label{app:deep-linear}
The following result characterizes the dynamics of SpecGD for a deep $L$-layer linear model ($L\geq 2$).
\begin{proposition}\label{th:evo-w-depth}
Assume $k=d$, $L\geq 2$, and suppose Ass. \ref{ass:diag} holds. Let weights %$\W^0$, $\{\W^l\}_{l=1}^{L-2}$, $\W^{L-1}$ 
be initialized as 
\begin{align*}
\W^0_0=e^{-\delta}\Q_{1}\Vb^\top,\quad \W^l_0=e^{-\delta}\Q_{l+1}\Q_{l}^\top, \quad \W^{L-1}_0=e^{-\delta}\Ub\Q_{L-1}^{\top},
\end{align*}
where $\{\Q_l\}_{l=1}^{L-1}$ are orthonormal matrices and $\delta>0$ is a constant. Then, 
    for SpecGD, at each iteration, $\W^0_t:=\Q_1\Wbbar^0_t\Vb^{\top}$, $\W_t^l:=\Q_{l+1}\Wbbar^l_t\Q_l^{\top}$, $\W_t^{L-1}=\Ub\Wbbar^{L-1}_t\Q_{L-1}^\top$, where $\Wbbar^l_t$ for $l\in[L]$ is zero except its main diagonal along which  entries evolve as follows for $i\in[k]$:
    \begin{align*}
 \Wbar^l_{t}[i,i] = (\eta t+e^{-\delta}) \, \ind{t \le \tfrac{1}{\eta}\left(\left(\tfrac{\svyx_i}{\svxx_i}\right)^{1/L}\!-e^{-\delta}\right)} +
  \left(\tfrac{\svyx_i}{\svxx_i}\right)^{1/L}
  \ind{t > \tfrac{1}{\eta}\left(\left(\tfrac{\svyx_i}{\svxx_i}\right)^{1/L}\!-e^{-\delta}\right)}.
\end{align*}
Further, the gap between saturation of the minority vs. majority component is $\Delta T \!:=\!\tfrac{t_{\max}-t_{\min}}{t_{\min}}\!=\!\left(\tfrac{\svyx_1/\svxx_1}{\svyx_k/\svxx_k}\right)^{1/L}\!-\!1$.
\end{proposition}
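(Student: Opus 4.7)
The plan is to generalize the proof of \cref{th:evo-w-2} by induction on the layer index $l\in\{0,\dots,L-1\}$, exploiting the telescoping structure of the initialization. The key starting observation is that with $\Q_0:=\Vb$ and $\Q_L:=\Ub$ (a convenient notational convention), the initialization can be written uniformly as $\W^l_0 = e^{-\delta}\Q_{l+1}\Q_l^\top$, so the product telescopes: $\W_0 = \prod_{l=0}^{L-1}\W^{L-1-l}_0 = e^{-L\delta}\Ub\Vb^\top$, which is diagonal in the $(\Ub,\Vb)$ basis. I would then formulate the inductive hypothesis that after $t$ iterations, $\W^l_t = \Q_{l+1}\Wbbar^l_t \Q_l^\top$ with each $\Wbbar^l_t$ diagonal and, crucially, with a \emph{common} diagonal profile $\Wbar^l_t[i,i]=\beta_i(t)$ independent of $l$.

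The second step is to verify that one SpecGD step preserves this structure. The per-layer gradient factors as
\begin{align*}
\nabla_{\W^l}\Ellc \;=\; \bigl(\W^{L-1}_t\cdots\W^{l+1}_t\bigr)^\top\bigl[-\Sigmab_{\yb\xb}+\W_t\Sigmab_{\xb\xb}\bigr]\bigl(\W^{l-1}_t\cdots\W^0_t\bigr)^\top,
\end{align*}
and under the inductive hypothesis the left factor equals $\Q_{l+1}\mathrm{diag}(\beta_i^{L-1-l})\Q_{L}^\top$-type expressions, while the right factor equals $\Q_0\mathrm{diag}(\beta_i^l)\Q_l^\top$-type; Condition~\ref{ass:diag} makes the bracketed middle term $\Ub(\Sb-\Wbbar_t\Lambdab)\Vb^\top$ diagonal in $(\Ub,\Vb)$. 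Multiplying through, $\nabla_{\W^l}\Ellc$ takes the form $\Q_{l+1}\mathrm{diag}(g_i(t))\Q_l^\top$ for some scalars $g_i(t)$, so its SVD is $\Q_{l+1}|\mathrm{diag}(g_i)|\mathrm{sign}(\mathrm{diag}(g_i))\Q_l^\top$. The SpecGD update $\Ub_t\Vb_t^\top$ therefore equals $\Q_{l+1} D \Q_l^\top$ with $D$ diagonal, preserving the assumed form of $\Wbbar^l_{t+1}$. Moreover, because the middle bracket and its diagonal sign pattern are the same for every layer $l$, the increment to every $\Wbar^l_t[i,i]$ is identical, which preserves the common-profile property $\beta_i(t)$.

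The third step is the scalar dynamics. For indices $i$ where the bracket has not yet vanished (i.e., $\beta_i(t)^L \svxx_i < \svyx_i$, since $\Wbar_t[i,i]=\beta_i(t)^L$), the SpecGD update contributes $+\eta$ to $\beta_i$; once $\beta_i(t)^L \svxx_i = \svyx_i$, the corresponding gradient direction is cancelled and $\beta_i$ freezes. Thus $\beta_i$ evolves linearly from $\beta_i(0)=e^{-\delta}$ with slope $\eta$ until it reaches the saturation value $(\svyx_i/\svxx_i)^{1/L}$, giving exactly the stated formula. The gap $\Delta T$ follows by plugging in the saturation time $t_i = \eta^{-1}\bigl((\svyx_i/\svxx_i)^{1/L}-e^{-\delta}\bigr)$ at $i=k$ (maximum) and $i=1$ (minimum); in the limit $\delta\to\infty$ or more directly from the ratio, the $e^{-\delta}$ terms cancel in the normalized gap, yielding $(\svyx_1/\svxx_1)^{1/L}/(\svyx_k/\svxx_k)^{1/L}-1$.

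The main obstacle is the alignment-preservation step: one must be careful that no gradient direction develops a nontrivial rotation that mixes the $\Q_{l+1}$ and $\Q_l$ bases, and that sign ambiguities in the SVD do not break the symmetry that equates $\Wbar^0_t[i,i],\dots,\Wbar^{L-1}_t[i,i]$. Since every factor in $\nabla_{\W^l}\Ellc$ inherits its column/row basis from the $\Q_\cdot$'s and the middle bracket is diagonal in $(\Ub,\Vb)$, the only sign issue is in $\mathrm{sign}(\svyx_i-\svxx_i\beta_i^L)$, which is initially positive and remains so until saturation; hence the signed SVD step $\Ub_t\Vb_t^\top$ coincides with $\Q_{l+1}\mathrm{diag}(\mathds{1}[\beta_i^L\svxx_i<\svyx_i])\Q_l^\top$, which is exactly the uniform update needed.
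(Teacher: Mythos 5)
Your proposal matches the paper's proof in essence: after the change of basis, each layer gradient takes the form $\Q_{l+1}\operatorname{diag}(g_i(t))\Q_l^\top$ with the middle bracket $\Sb-\Wbbar_t\Lambdab$ diagonal under Condition~\ref{ass:diag}, so the SpecGD step adds $\eta$ to every unsaturated diagonal entry of every layer simultaneously, yielding the common piecewise-linear profile — your explicit induction, common-profile hypothesis $\beta_i(t)$, and SVD sign bookkeeping just make precise what the paper's proof does implicitly. The only cosmetic caveat is the $\Delta T$ formula, where (as in the paper's own statement) the $e^{-\delta}$ offset in the saturation times is neglected, i.e., the stated expression is exact only in the limit $\delta\to\infty$.
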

\begin{proof}
We have that $\Wbbar^0=(\Q_1)^{\top}\W^0\Vb$, $\Wbbar^l=(\Q_{l+1})^{\top}\W^l\Q_l$ and 
$\Wbbar^{L-1}=\Ub^\top\W^{L-1}\Q_{L-1}$, where $l\in\{1,\dots,L-2\}$. Also define 
\begin{align*}
    \W=\prod_{i=0}^{L-1}\W^{L-1-i}, \quad \W^{>l}=\prod_{i=l+1}^{L-1}(\W^i)^\top, \quad \W^{<l}=\prod_{i=0}^{l-1}(\W^i)^\top.
\end{align*}
Then, we can write the gradients as
\begin{align*}
\nabla_{\W^l}\Ellc(\W^0_t,\dots,\W^{L-1}_t) &= -\W^{>l}_t\E\left[\bigl(\y - \W_t\xb\bigr)\xb^{\top}\right]\W^{<l}_t = -\W^{>l}_t\Sigmab_{\yb\xb}\W^{<l}_t+\W^{>l}_t\W_t\Sigmab_{\xb}\W^{<l}_t,
\end{align*}

Note that through change of basis, we can write
\begin{align*}
    \W=\Ub\prod_{i=0}^{L-1}\Wbbar^{L-1-i}\Vb^\top, \quad \W^{>l}=\Q_{l+1}\prod_{i=l+1}^{L-1}(\Wbbar^i)^\top\Ub^\top,\quad \W^{<l}=\Vb\prod_{i=0}^{l-1}(\Wbbar^i)^\top\Q_{l}^\top, 
\end{align*}
and consequently, for $1\leq l\leq L-2$,
\begin{align*}
\nabla_{\W^l}\Ellc(\W^0_t,\dots,\W^{L-1}_t) &= -\Q_{l+1}\prod_{i=l+1}^{L-1}(\Wbbar^i_t)^\top\left(\Sb-\prod_{i=0}^{L-1}\Wbbar_t^{L-1-i}\Lambdab\right)\prod_{i=0}^{l-1}(\Wbbar^i_t)^\top\Q_l^{\top}.
\end{align*}
We can see that $\Wbbar_0^l$ is diagonal at initialization. Then, if for $j\in[d]$, $\svyx_j>\prod_{i=0}^{L-1}\Wbar_0^{i}[j,j]\svxx_j$, the SpecGD update is 
\begin{align*}
    \W_1^l = \W_0^l + \eta \Q_{l+1}\Q_l^\top.
\end{align*}
Proceeding similarly, for any $l\in[L-1]$, we can write the SpecGD updates at any $t>0$ as
\begin{align*}
    \Wbbar_{t+1}^l = \Wbbar_t^l + \eta \sum_{j:\prod_{i=0}^{L-1}\Wbar_t^{i}[j,j]\svxx_j<\svyx_j}\eb_j\eb_j^\top.
\end{align*}
Simplifying, we get
\begin{align*}
 \Wbar^l_{t}[i,i] = (\eta\,t+e^{-\delta}) \, \ind{t \le \tfrac{1}{\eta}\left(\left(\tfrac{\svyx_i}{\svxx_i}\right)^{1/L}-e^{-\delta}\right)} +
  \left(\tfrac{\svyx_i}{\svxx_i}\right)^{1/L}\,
  \ind{t > \tfrac{1}{\eta}\left(\left(\tfrac{\svyx_i}{\svxx_i}\right)^{1/L}-e^{-\delta}\right)}.
\end{align*}
\end{proof}
\section{Additional Results in the Linear Setting}
\label{app:linear_res}
In this section, we present some additional experimental results for the synthetic setting with class imbalance considered in \cref{sec:linear-expts}, where we use a linear model and MSE loss. We consider the same setting and parameter values as in \cref{fig:dynamics}, unless stated otherwise.

% In \cref{fig:signgd-linear}, we compare the dynamics of signGD with GD, NGD and SpecGD. We find (from the rightmost plot) that in contrast to (N)GD and SpecGD iterates that remain in the $\Ub\Vb^\top$ eigenbasis (defined as in Condition \ref{ass:diag}), SignGD iterates take some time to align to that basis. We also find that SignGD learns the spectral components in a more balanced way compared to (N)GD and gets slightly better worst-class loss early on in training. However, since SpecGD learns the spectral components at an equal rate, it attains the best worst-class and class-balanced loss early on. All optimizers eventually lead to convergence to the same solution. 
% \begin{figure}[h!]
%     \centering
%     \includegraphics[width=0.8\linewidth]{imgs/signgd-dynamics-s1.png}
%     \caption{Dynamics of the iterates (left) and loss (middle) for GD, NGD, SignGD, and SpecGD on a linear model with class imbalance (priors $p_1\!>\!p_2\!>\!p_3$). SpecGD learns all spectral components at the same rate, whereas (N)GD prioritizes more dominant components. SignGD also seems to learn the spectral components at the same rate; however, its iterates take time to align to the $\Ub\Vb^\top$ eigenbasis (as seen from the residual of projected iterates (right)). Although all converge to the same solution, SpecGD significantly outperforms in worst-class and class-balanced loss early on in training.}
%     \label{fig:signgd-linear}
% \end{figure}

In \cref{fig:signgd-linear}, we compare the dynamics of SignGD, Adam (with $\beta_1=0.9$, $\beta_2=0.999$) and Muon (with $\beta=0.9$) with GD, NGD and SpecGD. We find (from the rightmost plot) that in contrast to (N)GD and SpecGD iterates that remain in the $\Ub\Vb^\top$ eigenbasis (defined as in Condition \ref{ass:diag}), SignGD and Adam iterates take some time to align to that basis. We also find that SignGD and Adam learn the spectral components in a more balanced way compared to (N)GD, and get slightly better worst-class loss early on in training. However, since SpecGD learns the spectral components at an equal rate, it attains the best worst-class and class-balanced loss early on. Notably, the dynamics of Muon closely follow those of SpecGD. All optimizers eventually lead to convergence to the same solution. 
\begin{figure}[h!]
    \centering
    \includegraphics[width=\linewidth]{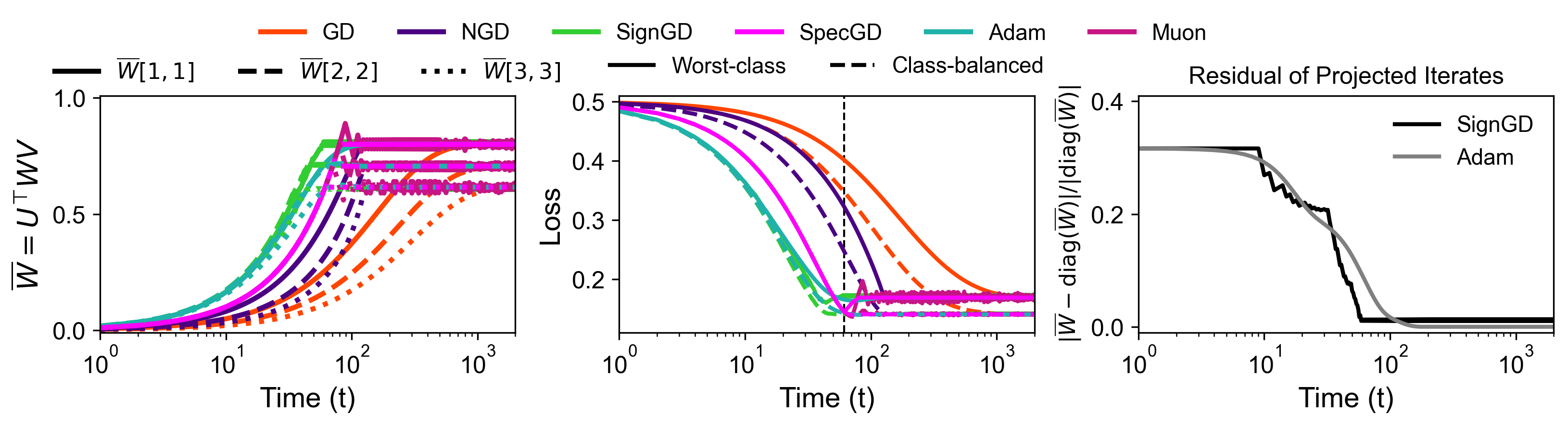}
    \caption{Dynamics of the iterates (left) and loss (middle) for GD, NGD, SignGD, SpecGD, Adam and Muon on a linear model with class imbalance (priors $p_1\!>\!p_2\!>\!p_3$). SpecGD and Muon learn all spectral components at the same rate, whereas (N)GD prioritizes more dominant components. SignGD and Adam also seem to learn the spectral components at the same rate; however, their iterates take time to align to the $\Ub\Vb^\top$ eigenbasis (as seen from the residual of projected iterates (right)). Although all converge to the same solution, SpecGD and Muon significantly outperform in worst-class and class-balanced loss early on in training.}
    \label{fig:signgd-linear}
\end{figure}

\begin{figure}[h!]
    \centering
    \includegraphics[width=0.8\linewidth]{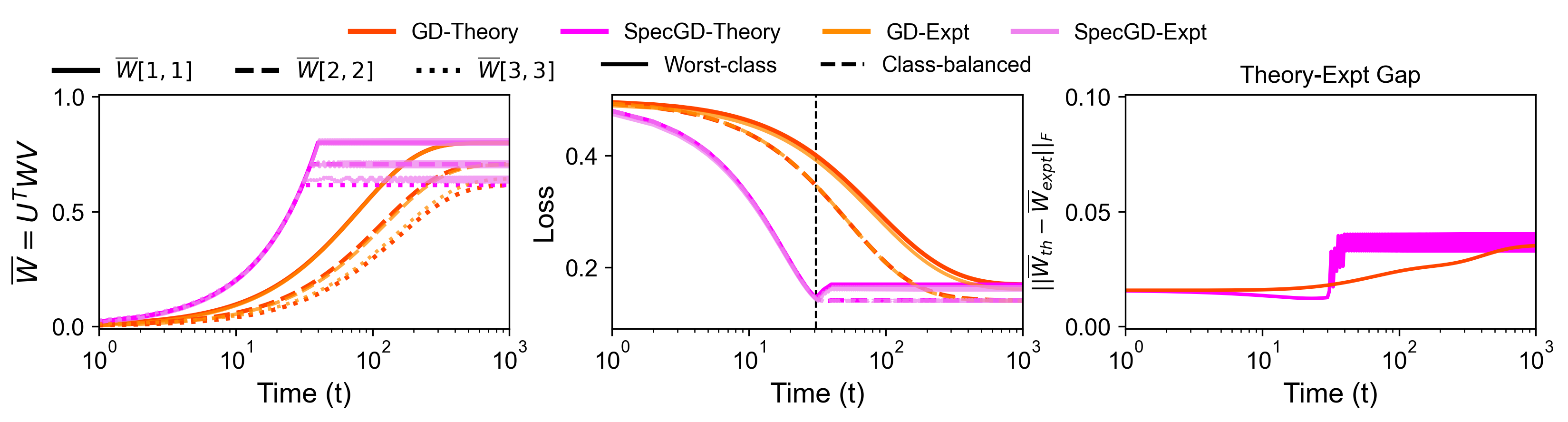}
    \caption{Comparison of the dynamics of iterates (left) and loss (middle) for GD and SpecGD on a linear model with class imbalance (priors $p_1\!>\!p_2\!>\!p_3$) in the population limit with zero initialization (theory) and in the finite sample setting with random initialization (expt), and norm of the theoretical/experimental difference between the iterates (right). We find that the dynamics characterizes by the theory closely match the experimental behaviour.}
    \label{fig:thvsexpt}
\end{figure}

\begin{figure}[h!]
    \centering
    \includegraphics[width=0.8\linewidth]{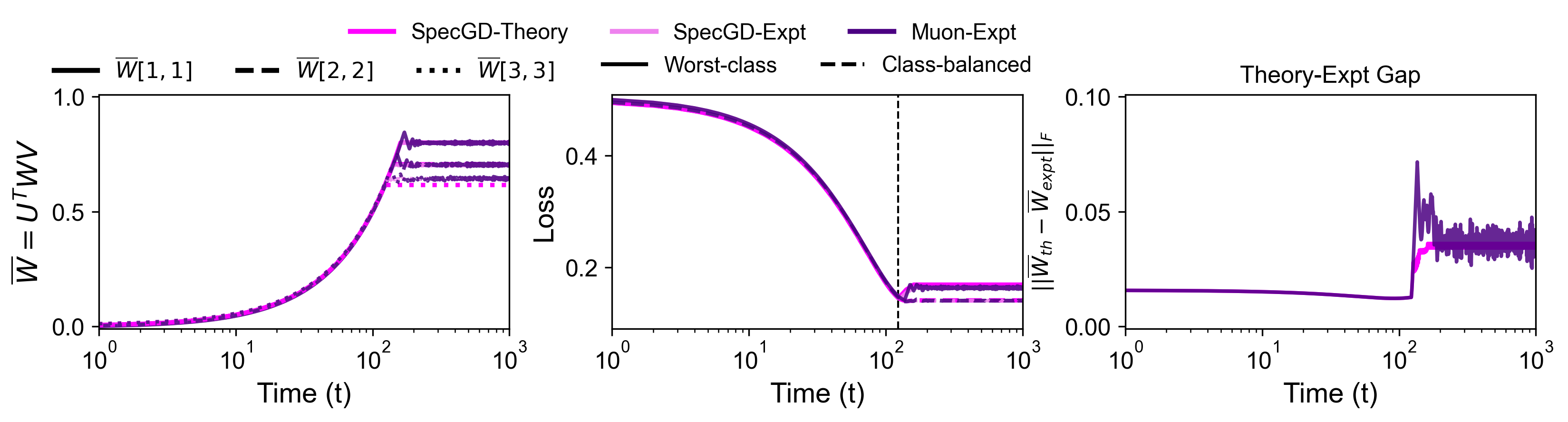}
    \caption{Comparison of the dynamics of iterates (left) and loss (middle) for SpecGD (both in the population limit with zero initialization (theory) and in the finite sample setting with random initialization (expt)) and Muon (expt) on a linear model with class imbalance (priors $p_1\!>\!p_2\!>\!p_3$), and norm of the difference between the iterates (SpecGD-theory vs SpecGD/Muon expt) (right). We find that the dynamics characterizes by the theory closely match the experimental behaviour.}
    \label{fig:specgdvsmuon}
\end{figure}

In \cref{fig:thvsexpt}, we compare the dynamics of GD and SpecGD under the theoretical setting (population limit and zero initialization) vs. the experimental setting (using $2000$ samples with zero-mean Gaussian initialization with variance $10^{-2}$). We use a learning rate of $0.02$ in this case. For comparing the iterates, we project them onto the $\Ub\Vb^\top$ eigenbasis from the population setting. We observe that the iterate and loss dynamics from theory closely predict the behaviour seen in the experimental setting.

In \cref{fig:specgdvsmuon}, we consider the same theoretical and experimental setting as in \cref{fig:thvsexpt}, and compare the dynamics of SpecGD in both cases to the dynamics of Muon (with $\beta=0.9$) in the experimental setting. We use a learning rate of $0.005$ in this case. We observe that the iterate and loss dynamics for Muon closely follow those predicted theoretically for SpecGD.

\begin{wrapfigure}[16]{r}{0.55\textwidth}
    \centering
    %\vspace{-2mm}
    \includegraphics[width=\linewidth]{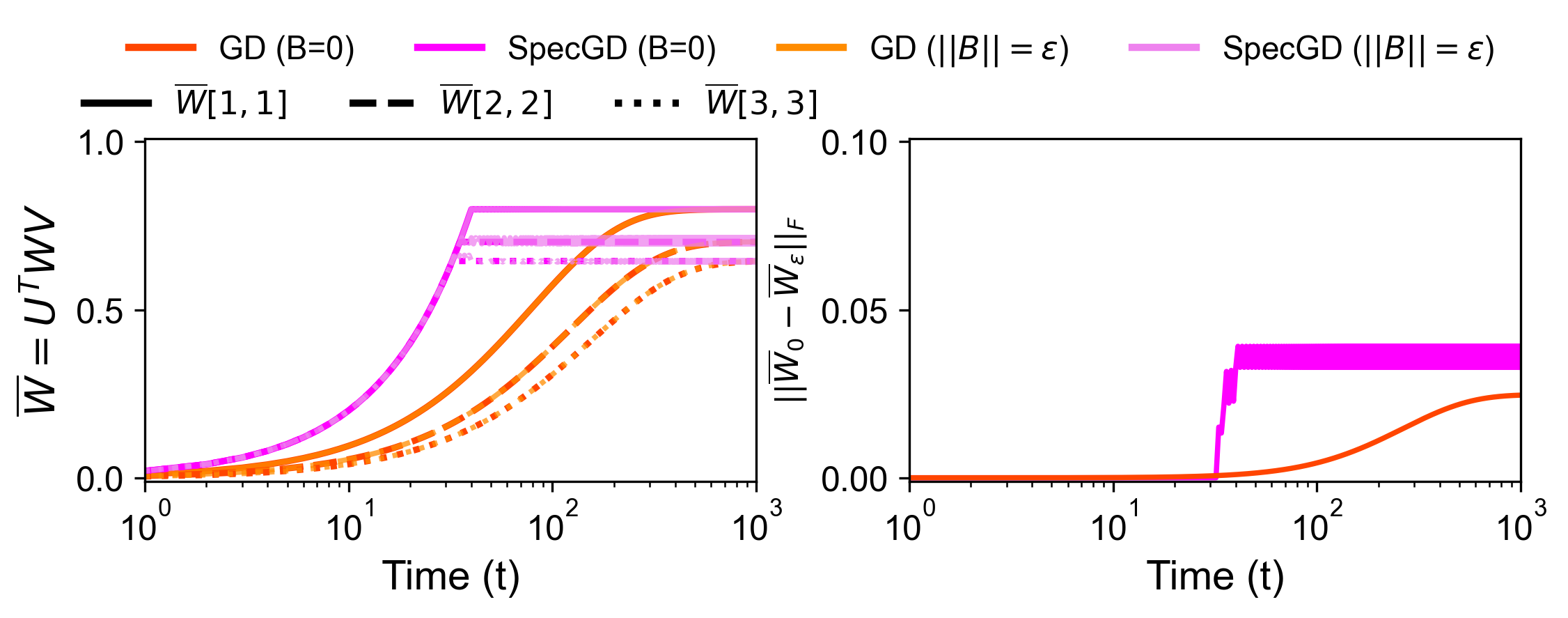}
    \caption{Comparison of the dynamics of iterates (left) for GD and SpecGD on a linear model with class imbalance (priors $p_1\!>\!p_2\!>\!p_3$) when $\Bb=\mathbf{0}$ (Condition~\ref{ass:diag} holds) vs. when $\norm{\Bb}=\epsilon$ (Condition~\ref{ass:diag} relaxed), and norm of the difference between the iterates (right). We find that the dynamics in the latter setting closely match those under Condition \ref{ass:diag}.}
    \label{fig:bdev}
\end{wrapfigure}
In the next experiment, we examine the effect of relaxing Condition \ref{ass:diag} on the dynamics of GD and SpecGD iterates. Specifically, in the relaxed version, the joint diagonalization holds with $\Lambdab+\Bb$, instead of just a diagonal matrix $\Lambdab$, such that $\norm{\Bb}\leq \epsilon$. We note that while Condition~\ref{ass:diag} is satisfied for our data model \eqref{eq:dm} in the population limit, only the relaxed version holds in the finite sample setting. For instance, under the setting considered in \cref{fig:thvsexpt}, $\norm{\Bb}\approx 0.013$. %Similarly, when the requirement of orthogonal means is relaxed (e.g., mean vectors are sampled from a random Gaussian, so under high dimensions, $|\bar{\mub}_i^\top\bar{\mub}_j|$ for $i\neq j$ is small with high probability), it can be shown that the population moment matrices in this case also satisfy the weaker condition for some $\epsilon>0$. 
In \cref{fig:bdev}, we compare the dynamics of iterates in the finite sample setting ($\norm{\Bb}=\epsilon$) to a baseline, where we use the $\Ub\Vb^\top$ eigenbasis from this setting but set $\Bb=\mathbf{0}$. We find that the iterates in the two settings closely follow each other. 

\vsn
\section{Additional Results and Details of Experimental Settings}\label{app:additionalresultsanddetailsofexpsettings}
\vsn
This section presents the details of experimental settings for our experiments with the Colored-MNIST dataset and those in \cref{sec:cifar10} and \cref{sec:expts}, as well as additional results.

For our experiments with Shampoo, we rely on the \texttt{distributed–shampoo} reference implementation of \citet{shi2023pytorchshampoo} and use %default $\beta$ parameters $(\beta_1,\beta_2)=(0.9,1.0)$, 
matrix inverse stabilization parameter $\varepsilon = 10^{-6}$, \texttt{precondition\_frequency} $=5$, \texttt{max\_preconditioner\_dim} $=8192$, unless stated otherwise. To stabilize training, we also use \texttt{SGDGraftingConfig} or \texttt{AdamGraftingConfig} for the update grafting. Here, $\beta_1$ governs the exponential moving average of the raw gradients, while $\beta_2$ governs the accumulation in the matrix preconditioners (see \cref{app:update_rules}). Further, $\varepsilon$ is a small diagonal ‘jitter’ that stabilises the matrix inverse, and SGD/Adam grafting simply rescales the Shampoo update so its overall magnitude matches that of a plain SGD/Adam step (with default $\beta$ parameters).

For our experiments with Muon, we use the implementation from \citet{jordan2024muon}. Here, $\beta$ represents the momentum parameter, and we perform one Newton-Schulz iteration in each step. By default, the Adam optimizer is used to update all vector parameters. For hyperparameter tuning, we set $\beta_1$ for Adam to be the same as $\beta$ for Muon. We use the default value of $\beta_2$ for Adam with training with Muon. 

For our experiments with Adam and SGD, we use the standard implementations from the PyTorch library \cite{pytorch}. For SGD $\beta$ refers to the momentum parameter whereas for Adam $\beta_1$ refers to momentum and $\beta_2$ is the exponential moving average parameter (see \cref{app:update_rules}).

Following recent works \citep{srećković2025batchsizeproblemrevisiting,marek2025smallbatchsizetraining}, which show that the performance gap between SGD and adaptive optimizers like Adam is larger for larger batch sizes, we use larger batch sizes in our experiments.
\vsn
\subsection{Colored-MNIST}
\label{app:cmnist}
\textit{Dataset.} In \cref{sec:intro}, we consider a variant of the Colored-MNIST dataset, a benchmark used commonly in the literature on spurious correlations \citep{irm,gs}. The task is to classify each digit as either $<5$ or $\geq 5$. The digits in each class are injected with a background color (red or green) that is correlated with the label and acts as a spurious feature. In our setting, the spurious correlation in the train set is $99\%$. 

\textit{Model Architecture and Training.} We train a four-layer ReLU MLP with hidden layer dimensions $512$, $128$, and $32$, using inputs from the standard Coloured-MNIST dataset. Each image is $28 \times 28$ with 3 channels (RGB). Training is performed for $500$ \texttt{epochs} in the full-batch setting. We don't use weight decay or a learning rate scheduler in this setting. We use preconditioner frequency $1$ for Shampoo.

\textit{Results.} \cref{fig:cmnist_tuning} shows the set of hyperparameters we tried for different optimizers and effect on majority- and minority-group test accuracy. All results are averaged over five independent runs. We select the best hyperparameter setting based on the best average group-balanced accuracy at the end of training. 

Building on our initial comparison of optimizers from the Frobenius, $\max$, and spectral norm families in \cref{fig:group_imbalance_mlp}, we present a broader comparison that includes additional optimizers in \cref{fig:cmnist-all}.

\begin{figure}[h!]
    \centering
    \includegraphics[width=\linewidth]{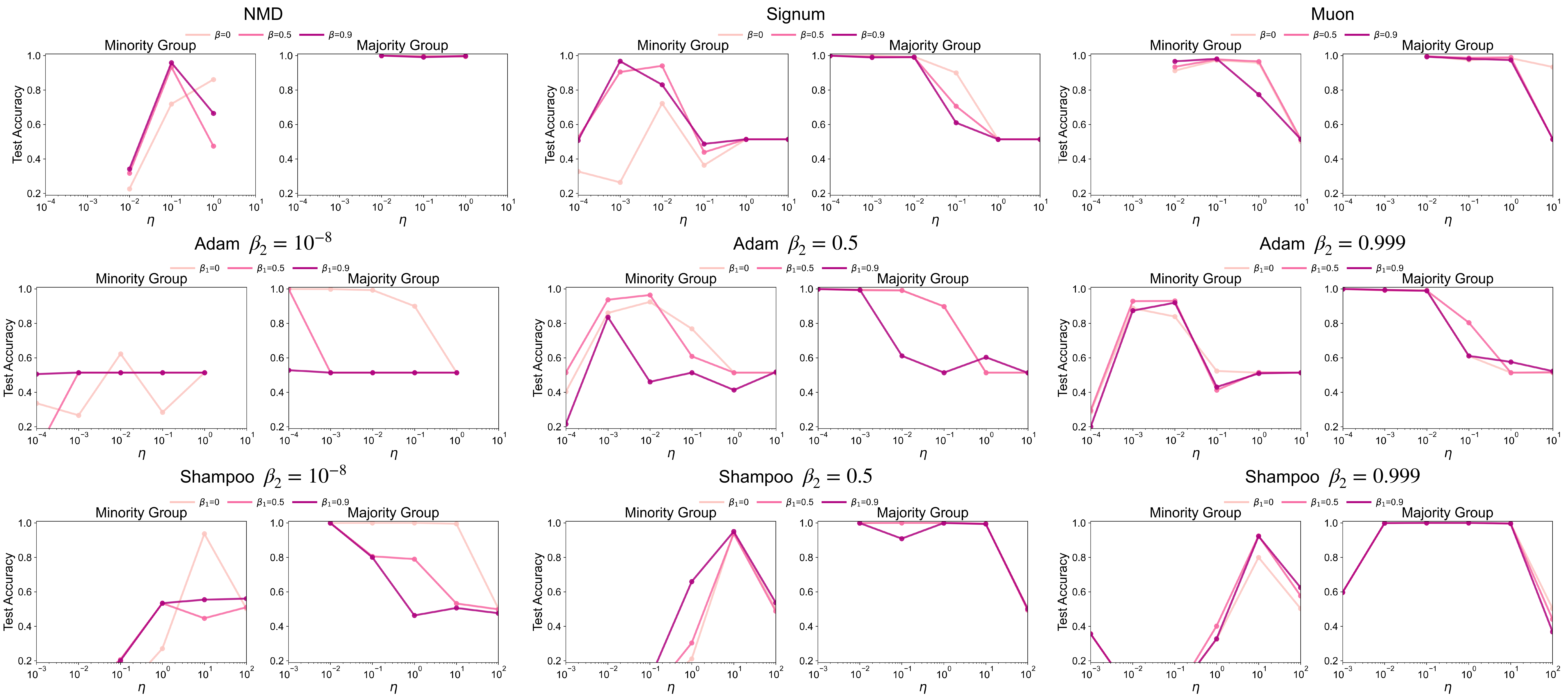}
    \caption{Hyperparameter sweeps for different optimizers on the Colored-MNIST dataset, showing the majority- and minority-group test accuracies at the end of training.}
    \label{fig:cmnist_tuning}
\end{figure}

\begin{figure}[h!]
    \centering
    \includegraphics[width=0.55\linewidth]{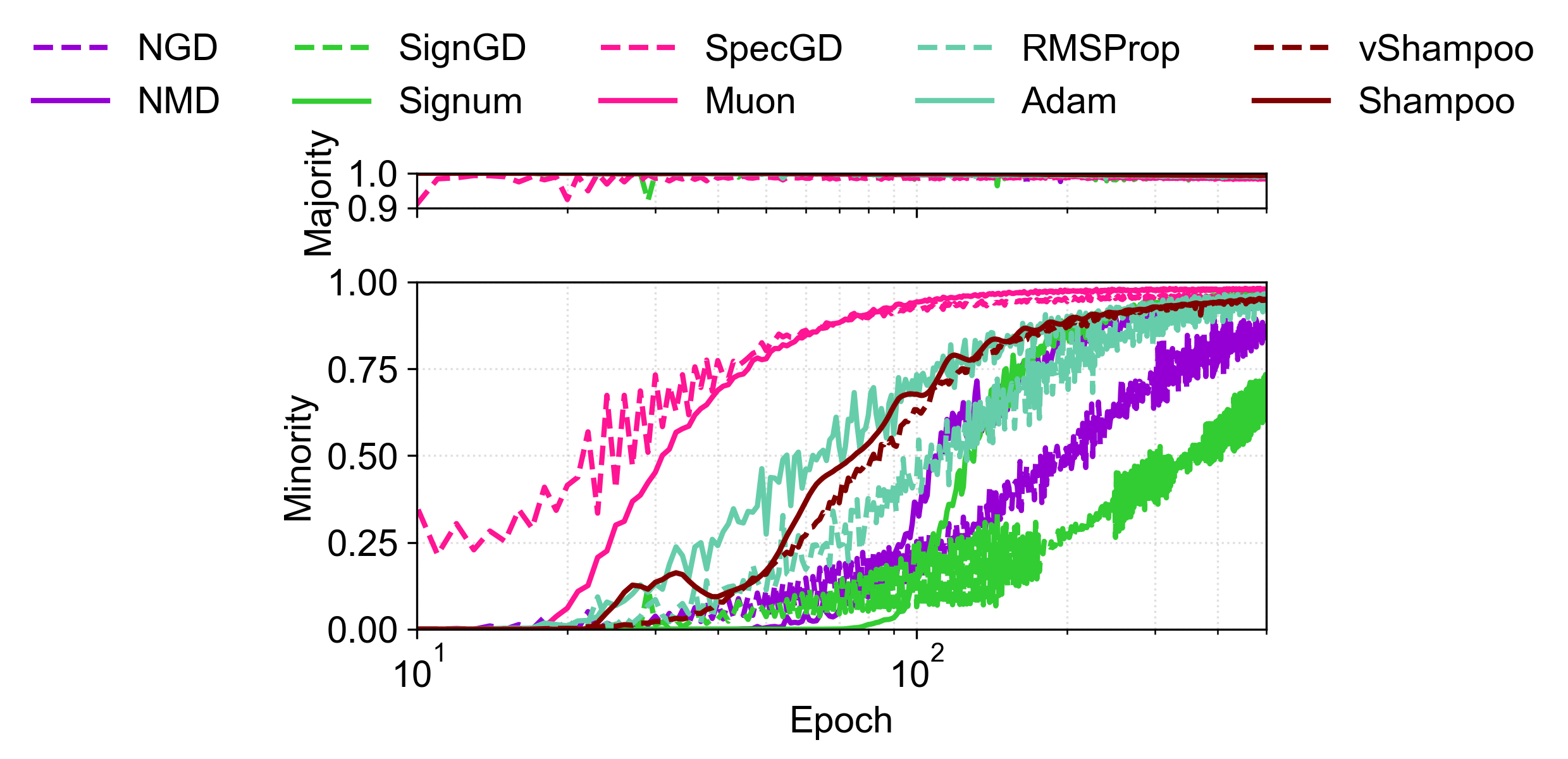}
    \caption{Comparison of optimizer test accuracy on the Colored-MNIST dataset. The top and bottom panels show performance on the majority and minority groups, respectively. We compare several families of optimizers, starting with \textit{Normalized Steepest Descent} methods—NGD, SignGD, and SpecGD—which normalize the gradient using the Frobenius, $\max$, and spectral norms, respectively. We then show their corresponding \textit{Normalized Mometum Descent} variants: NMD, Signum, and Muon. Finally, we compare other adaptive methods, contrasting Adam (element-wise preconditioning) with Shampoo (full-matrix preconditioning), along with their non-momentum analogs, RMSProp and vShampoo, to isolate the effect of preconditioning. We observe that Muon and SpecGD significantly outperform their Frobenius (NMD, NGD) and $\max$ (SignGD, Signum) counterparts in terms of test accuracy on minority groups. However, we do not observe a discernible performance gap when comparing Shampoo with Adam or vShampoo with RMSProp.}
    \label{fig:cmnist-all}
\end{figure}

\vsn
\subsection{CIFAR-10/100 with Class Imbalance}\label{app:cifar10}
\textit{Dataset and Preprocessing.}
We construct class-imbalanced versions of CIFAR-10 and CIFAR-100~\citep{krizhevsky2009learning} using step imbalance with ratio $R = 20$. For CIFAR-10, we sample  half the classes (5) to be majority classes containing $20\times$ more samples (5000) than the rest of the (minority) classes (250). The same procedure is applied to CIFAR-100 using all 100 classes. 

We apply standard data augmentation including random horizontal flips, random crops, and normalization using dataset-specific channel means and standard deviations. The datasets are split into training and test sets following the original CIFAR splits. 

\begin{figure}[h!]
\centering
\begin{subfigure}[t]{0.23\textwidth}
  \centering
  \includegraphics[width=\linewidth]{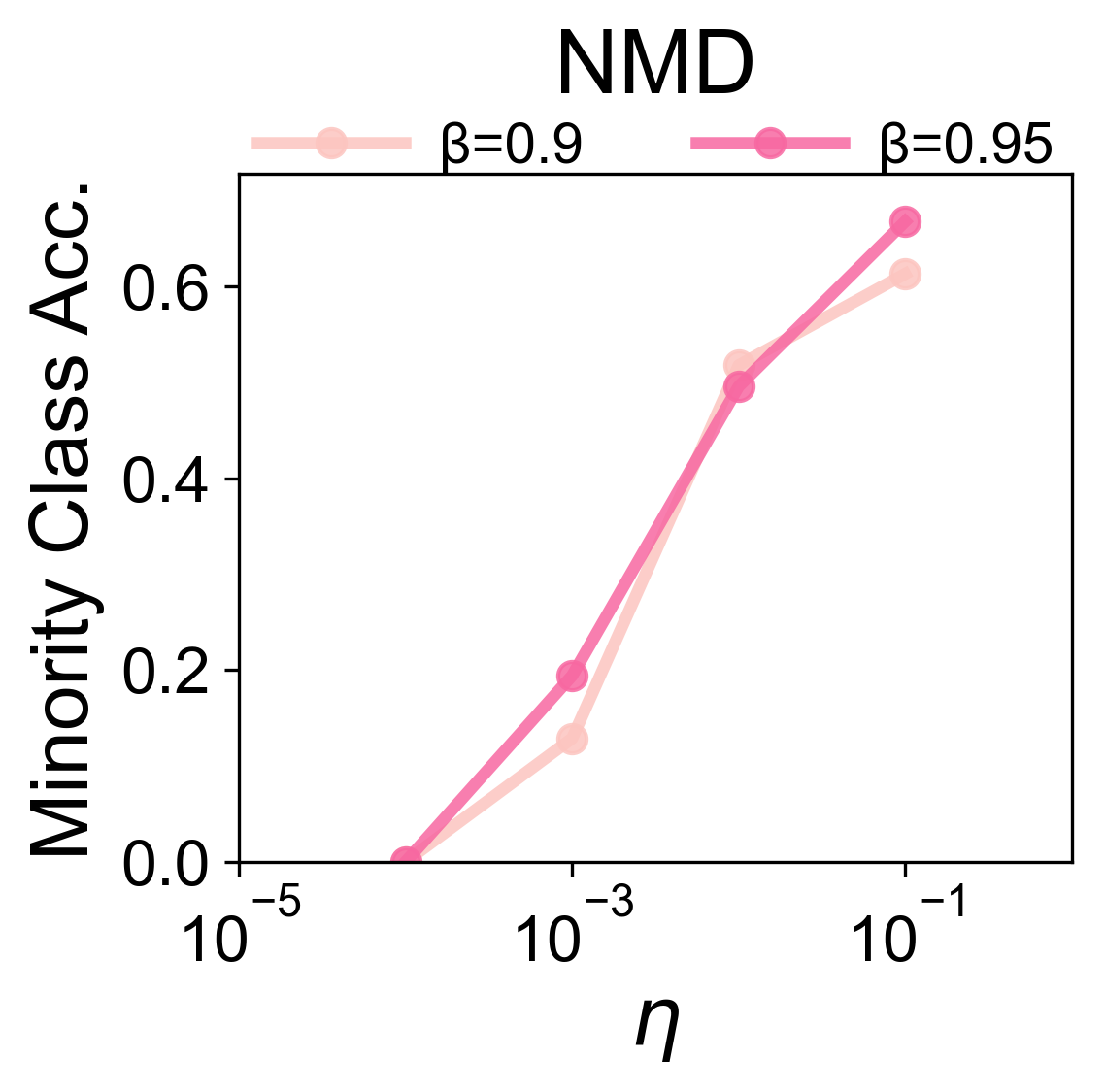}
\end{subfigure}\hfill
\begin{subfigure}[t]{0.23\textwidth}
  \centering
  \includegraphics[width=\linewidth]{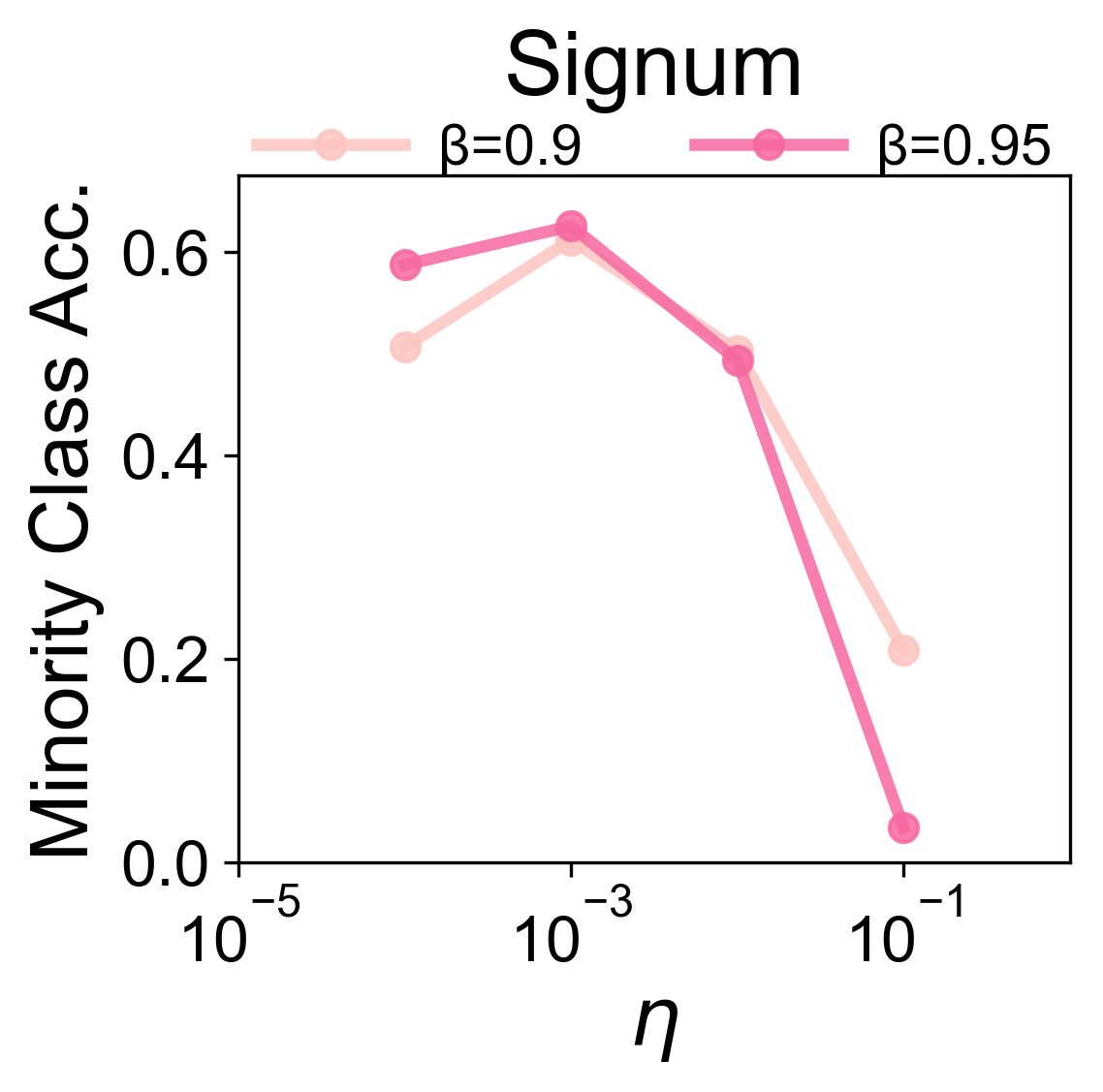}
\end{subfigure}\hfill
\begin{subfigure}[t]{0.23\textwidth}
  \centering
  \includegraphics[width=\linewidth]{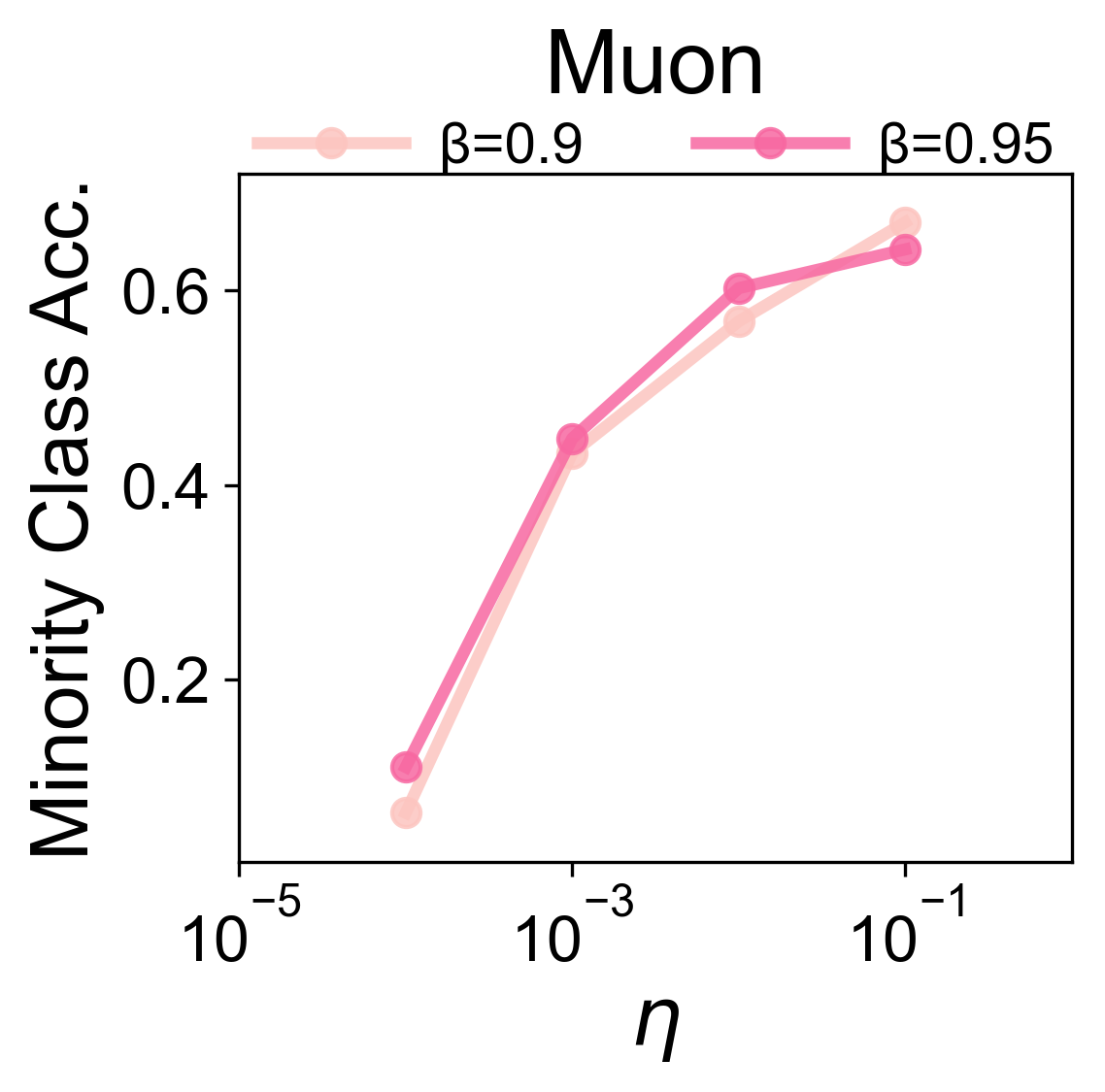}
\end{subfigure}\hfill
\begin{subfigure}[t]{0.23\textwidth}
  \centering
  \includegraphics[width=\linewidth]{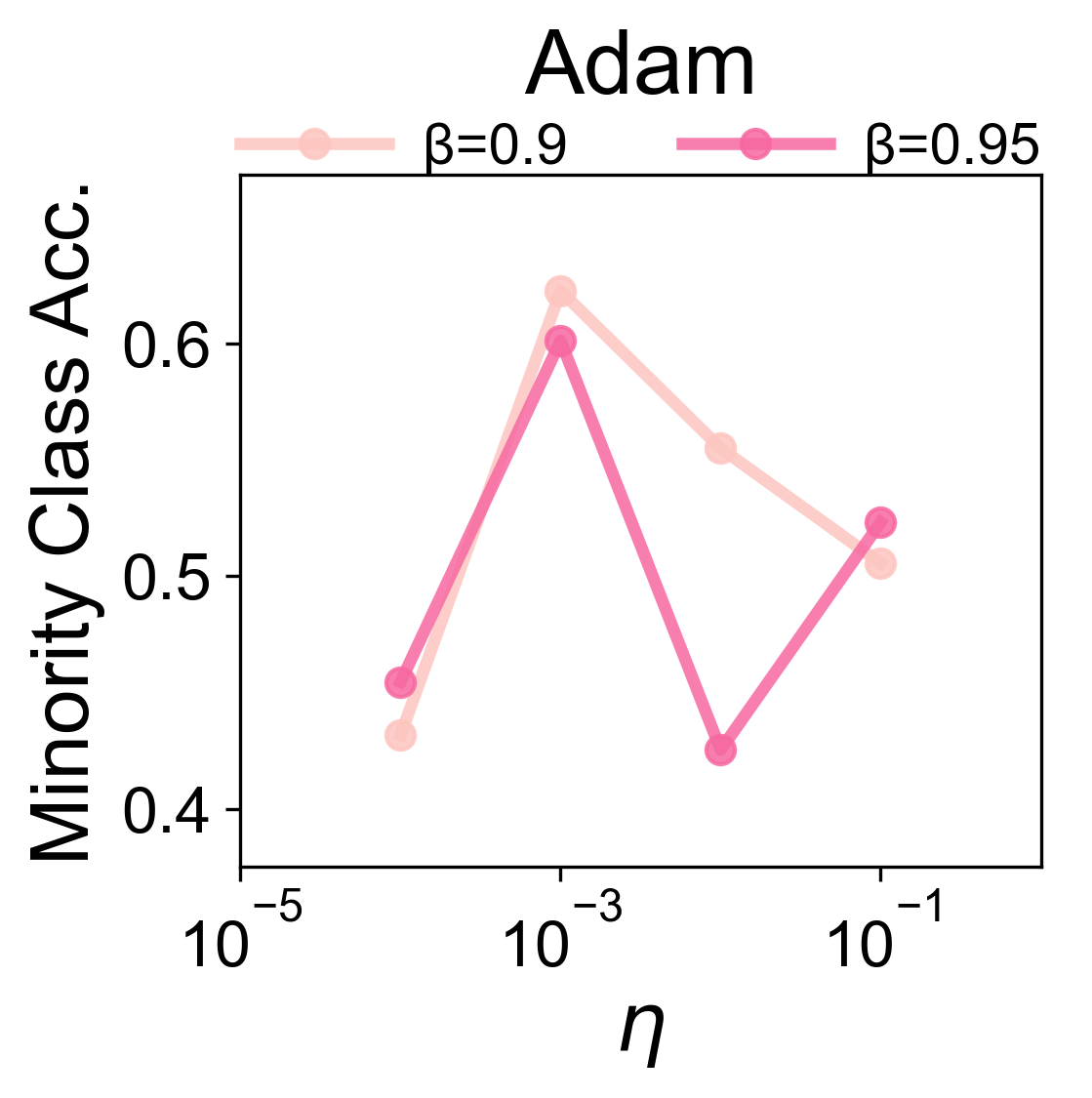}
\end{subfigure}
\caption{Hyperparameter sweeps for different optimizers on the CIFAR-10 dataset, showing minority-class test accuracies at the end of training. 
}
\label{fig:sweep_cifar10}
\vsn
\end{figure}

\begin{figure}[h!]
\centering
\begin{subfigure}[t]{0.23\textwidth}
  \centering
  \includegraphics[width=\linewidth]{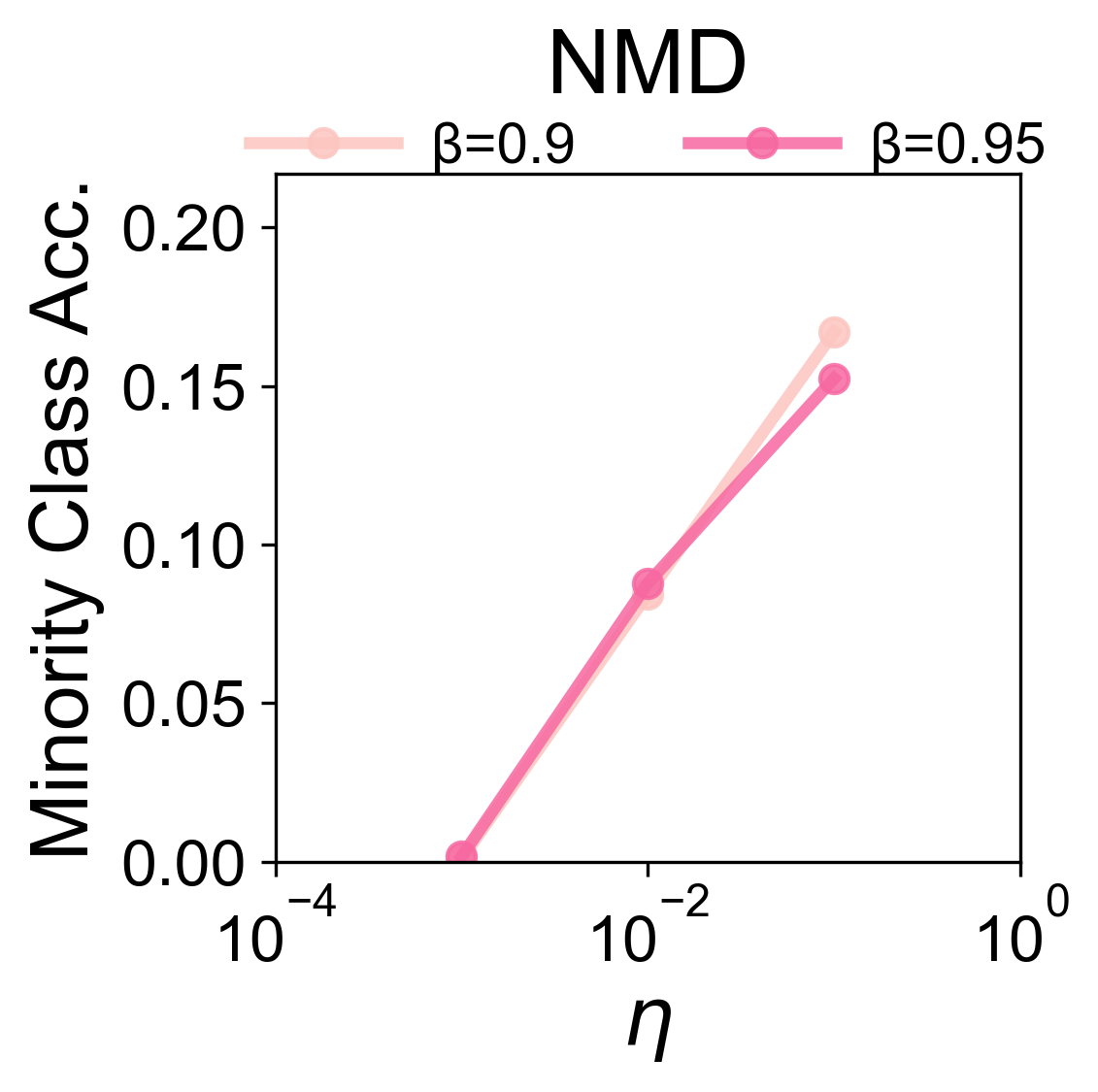}
\end{subfigure}\hfill
\begin{subfigure}[t]{0.23\textwidth}
  \centering
  \includegraphics[width=\linewidth]{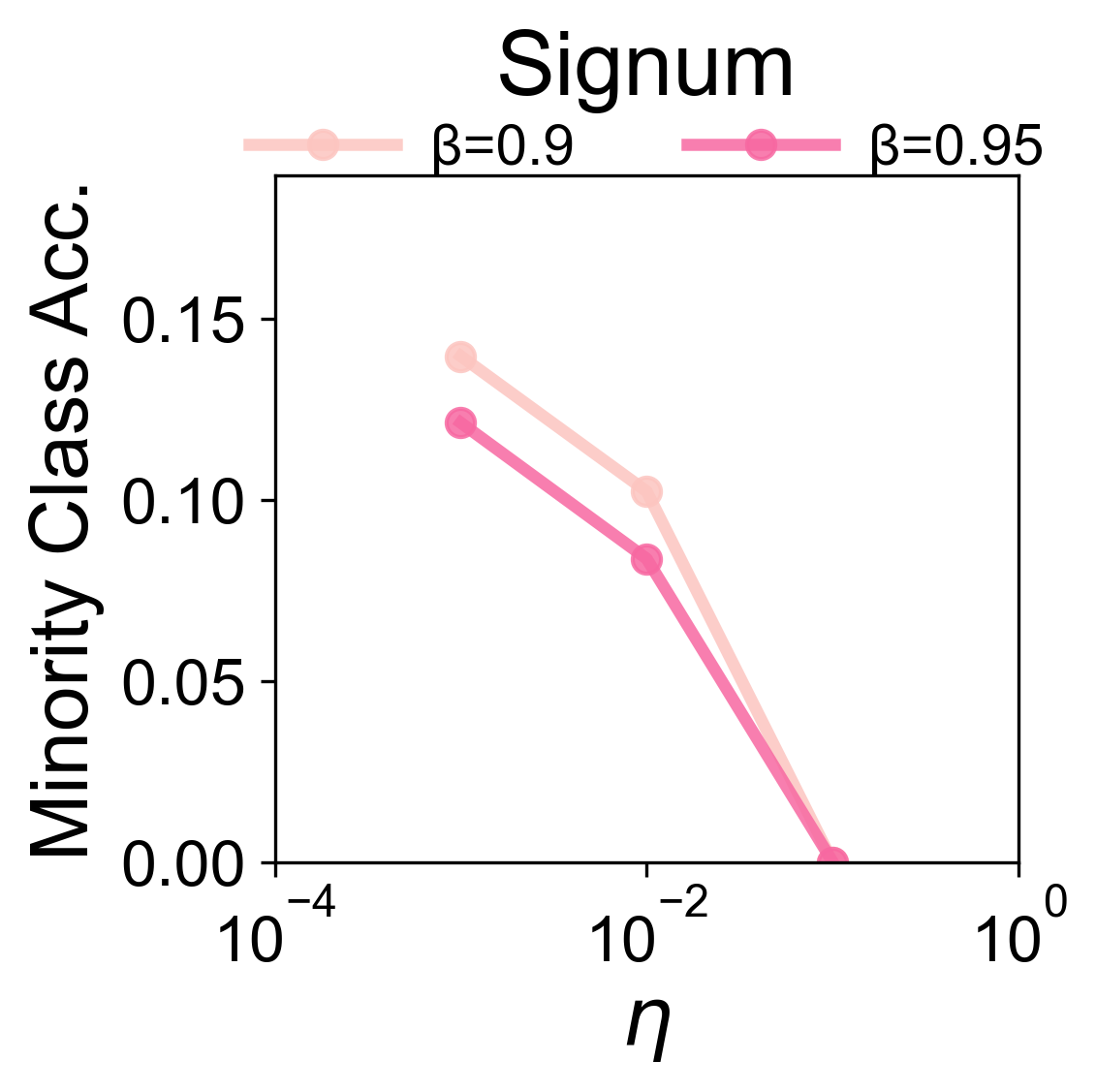}
\end{subfigure}\hfill
\begin{subfigure}[t]{0.23\textwidth}
  \centering
  \includegraphics[width=\linewidth]{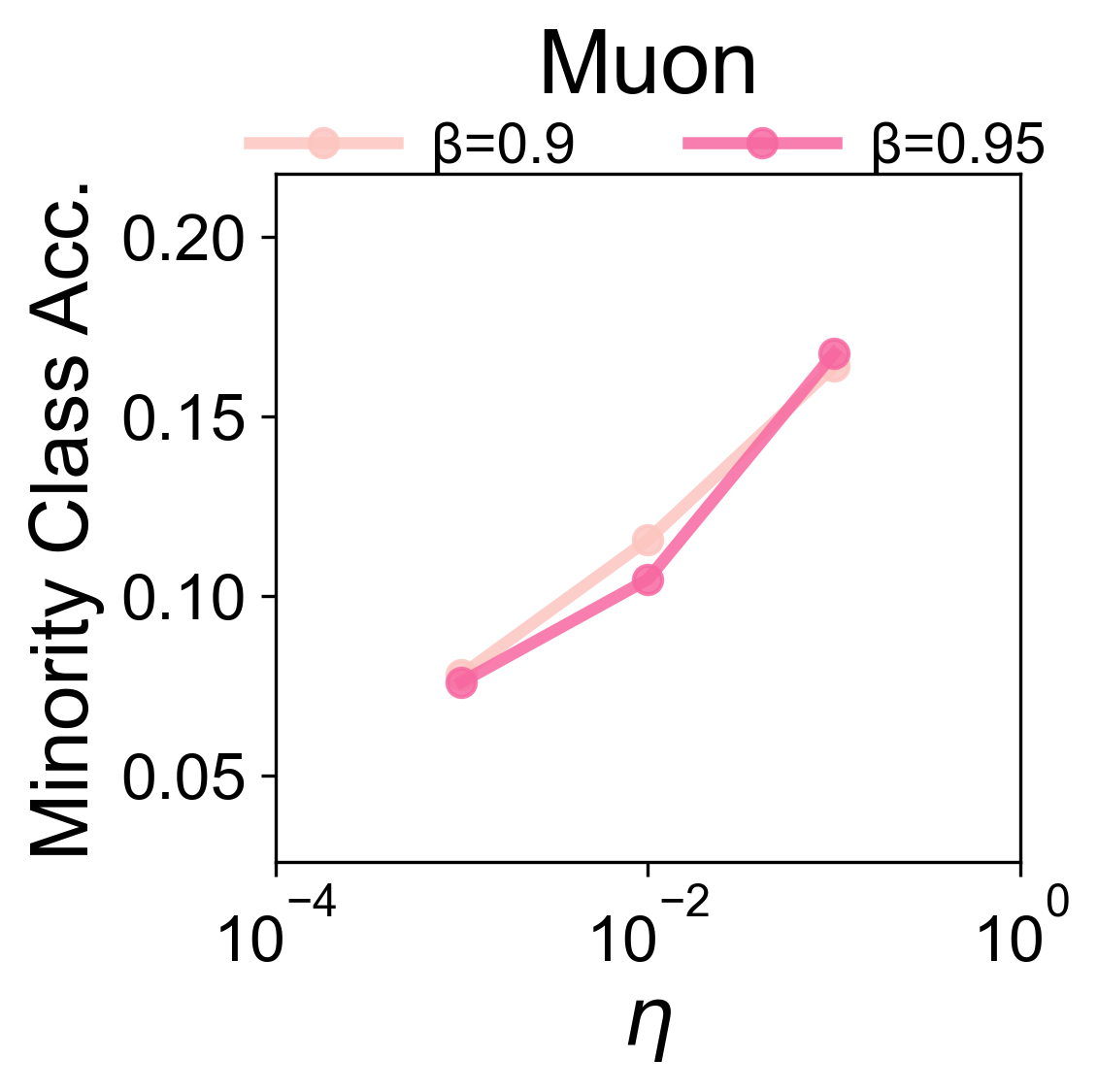}
\end{subfigure}\hfill
\begin{subfigure}[t]{0.23\textwidth}
  \centering
  \includegraphics[width=\linewidth]{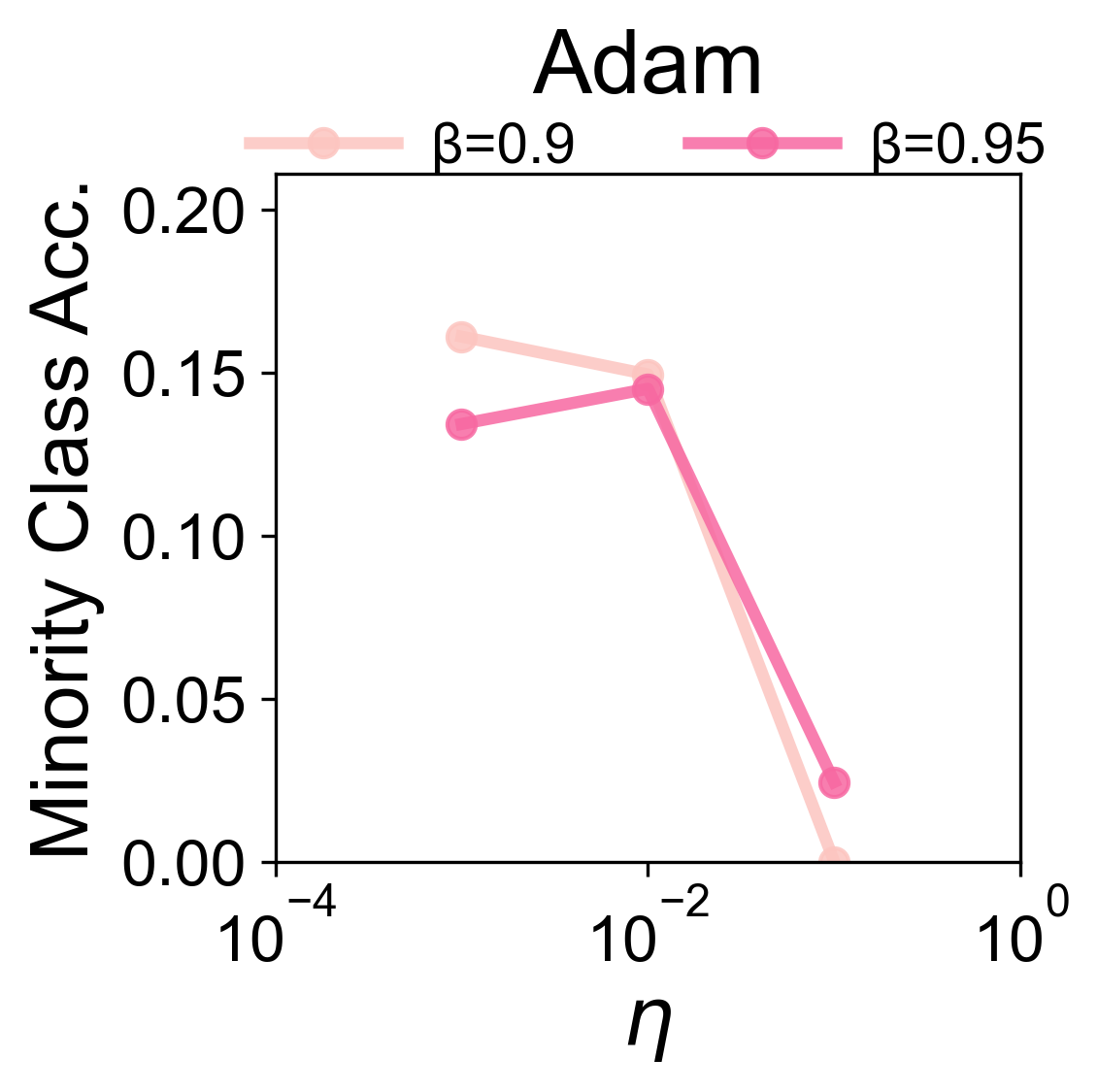}
\end{subfigure}
\caption{Hyperparameter sweeps for different optimizers on the CIFAR-100 dataset, showing minority-class test accuracies at the end of training. 
}
\label{fig:sweep_cifar100}
\vsn
\end{figure}

\textit{Model Architecture and Training.}
We train ResNet-18 on CIFAR-10 and ResNet-50 on CIFAR-100 following standard architectures~\citep{he2016deep}. All models use random initialization.

Training is conducted with batch size $2048$ for $200$ epochs. We compare four optimizers representing different norm-based approaches, namely NMD, Signum and Adam, and Muon. We use weight decay~$0$ for all optimizers. All optimizers use cosine learning rate scheduler. We select the learning rate and momentum by a grid search (see Fig~\ref{fig:sweep_cifar10}, Fig~\ref{fig:sweep_cifar100}). The hyperparameters with the highest minority-class accuracy are used to report the final results. We use the following learning rate ($\eta$) and momentum ($\beta$ or $\beta_1$) configurations:

\textit{CIFAR-10:} NMD ($\eta=0.1$, $\beta=0.95$), Signum ($\eta=0.001$, $\beta=0.95$), Adam ($\eta=0.001$, $\beta_1=0.9$), Muon ($\eta=0.1$, $\beta=0.9$).

\textit{CIFAR-100:} NMD ($\eta=0.1$, $\beta=0.9$), Signum ($\eta=0.001$, $\beta=0.9$), Adam ($\eta=0.001$, $\beta_1=0.9$), Muon ($\eta=0.1$, $\beta=0.95$).

\textit{Results.} As shown in Fig.~\ref{fig:cifar10_details} (CIFAR-10) and Fig.~\ref{fig:cifar100_details} (CIFAR-100), Muon consistently outperforms other optimizers on both majority- and minority-class accuracies for both experiments, with the performance advantage being most pronounced during the early stages of training. 

\begin{figure}[h!]
    \centering
\includegraphics[width=0.75\linewidth]{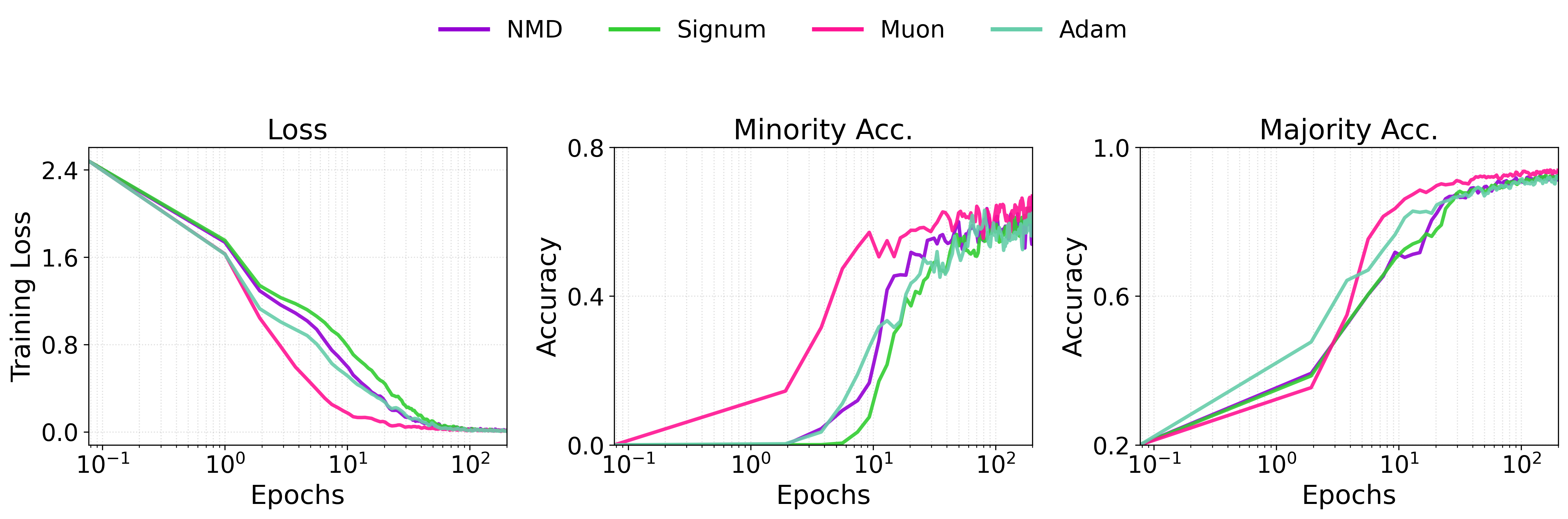}
    \caption{Comparison of training loss, and minority- and majority-class test accuracies when training a ResNet-18 model on CIFAR-10 using different optimizers. Muon achieves superior minority-class test accuracy while maintaining competitive majority-class performance, especially early on in training.}
\label{fig:cifar10_details}
\vsn\vsn
\end{figure}
\begin{figure}[h!]
    \centering
\includegraphics[width=0.75\linewidth]{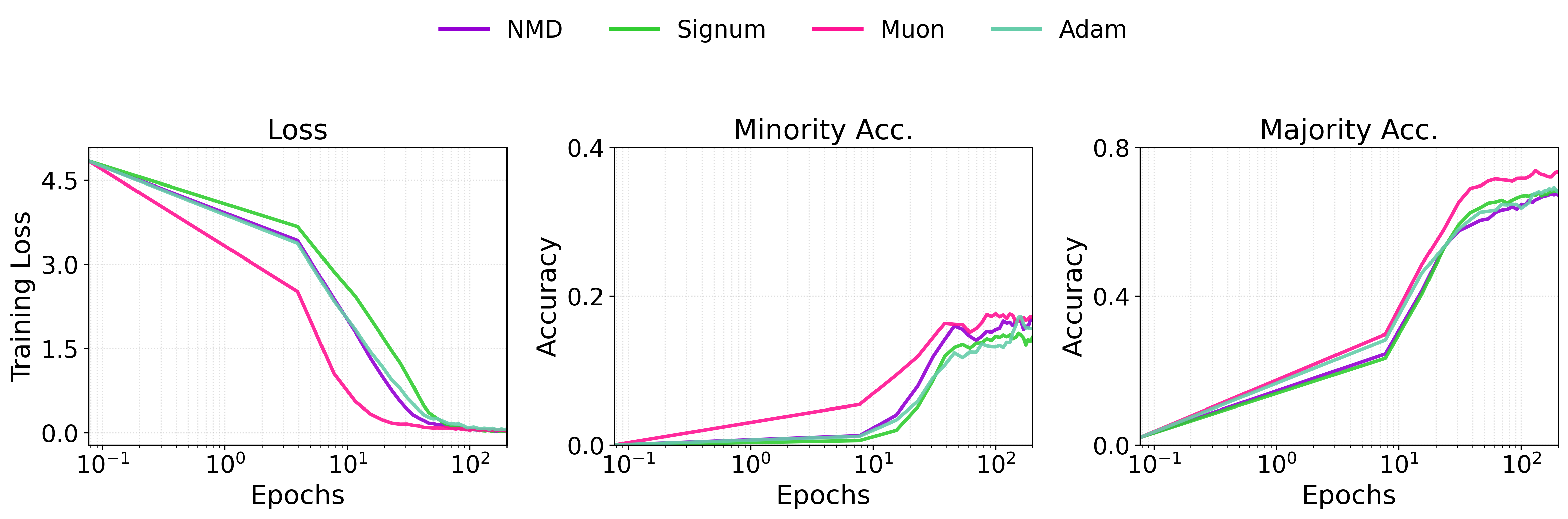}
    \caption{Comparison of training loss, and minority- and majority-class test accuracies when training a ResNet-50 model on CIFAR-100 using different optimizers. Muon achieves superior minority-class test accuracy while maintaining competitive majority-class performance, especially early on in training.}
    \label{fig:cifar100_details}\vsn
\end{figure}
\begin{figure}[h!]
    \centering
\includegraphics[width=0.9\textwidth]{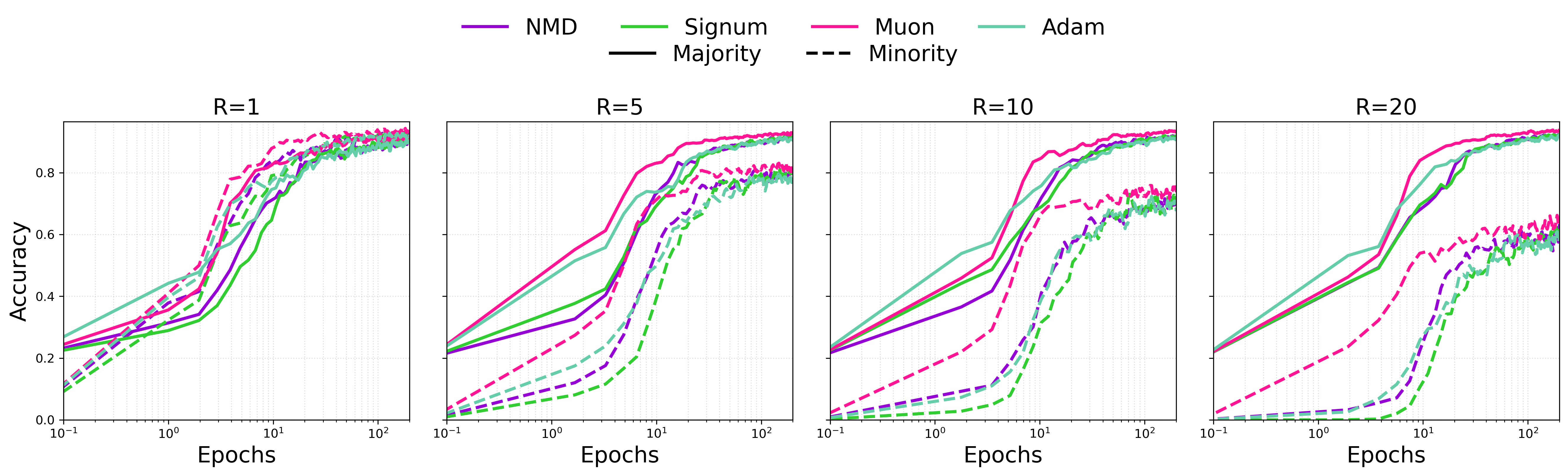}
    \caption{Test accuracy dynamics on CIFAR-10 with ResNet-18 across varying imbalance ratios $R$. Muon consistently outperforms other optimizers on minority classes, with the performance gap becoming more pronounced as $R$ increases.} 
\label{fig:cifar10_imbalance} 
\vsn
\end{figure}
\begin{figure}[h!]
    \centering
    \includegraphics[width=0.9\textwidth]{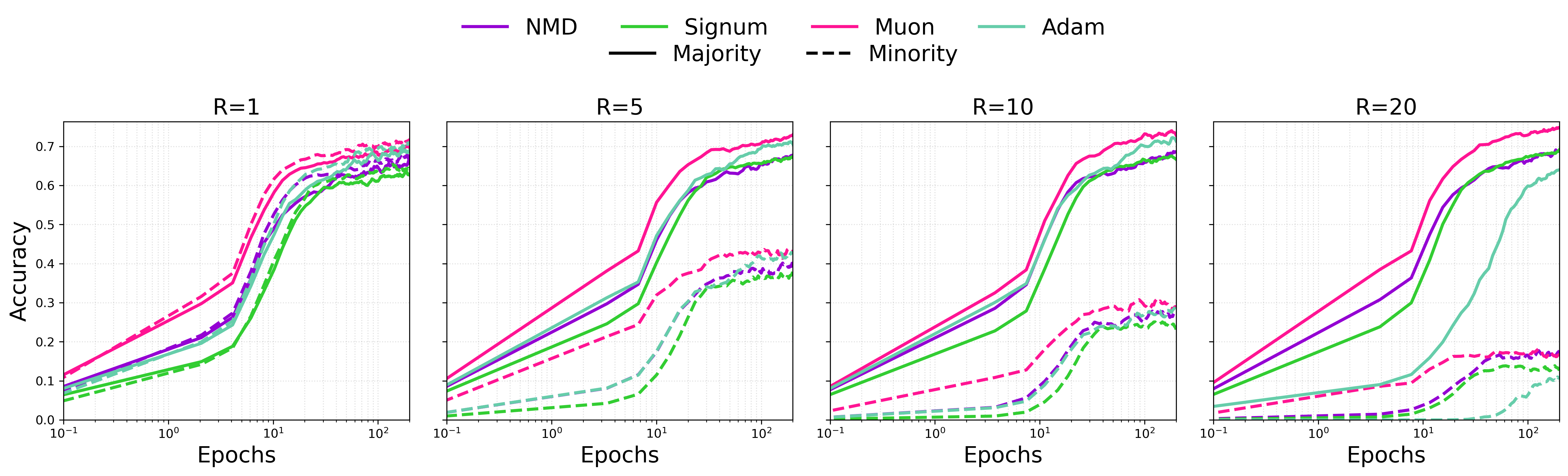}
    \caption{Test accuracy dynamics on CIFAR-100 with ResNet-50 across varying imbalance ratios $R$. In contrast to the behaviour on CIFAR-10, we find that in this setting Muon maintains superior or comparable performance across all imbalance levels.}
\label{fig:cifar100_imbalance}
    \vsn
\end{figure}

\subsubsection{Effect of Class Imbalance Ratio}
To investigate how the severity of class imbalance affects the relative performance of different optimizers, we conduct experiments varying the imbalance ratio $R \in \{1, 5, 10, 20\}$ on both CIFAR-10 and CIFAR-100.

\textit{Experimental Setup.} We use the same step-imbalanced construction as in \cref{sec:cifar10}, but vary the imbalance ratio $R$, which denotes the majority-to-minority sample ratio. $R = 1$ corresponds to a balanced dataset, while higher values indicate progressively more severe imbalance. For CIFAR-10, we train ResNet-18, and for CIFAR-100, we train ResNet-50, following the same data augmentation and preprocessing pipeline described in \cref{app:cifar10}.

We use the optimal hyperparameters identified in our $R = 20$ experiments (\cref{app:cifar10}) for each optimizer across all imbalance ratios to ensure fair comparison. All models are trained for $200$ epochs with batch size $2048$ and cosine learning rate scheduling.

\textit{Results.} Figures~\ref{fig:cifar10_imbalance} and \ref{fig:cifar100_imbalance} show test accuracy dynamics across varying imbalance ratios. On CIFAR-10, Muon's advantage grows with imbalance severity. This gap is particularly pronounced early in training: while all optimizers perform similarly on balanced data ($R=1$), at $R=20$ Muon achieves about $20\%$ minority-class accuracy compared to $<10\%$ for NMD and Signum at the end of epoch $1$. On CIFAR-100, the effect of changing $R$ is less pronounced and Muon maintains the best performance across all settings.

\subsubsection{Fine-tuning from ImageNet Pretraining}

In this section, we investigate whether the observations for (pre)training the model from random initialization are consistent with the case where we fine-tune an ImageNet-pretrained model.

\textit{Experimental Setup.} We fine-tune a ResNet-18 model initialized with ImageNet-trained weights (IMAGENET1K\_V1) on CIFAR-10 with step imbalance ($R=20$). While retaining the pretrained weights, we modify three components for CIFAR-10: the first convolutional layer is replaced with a new layer adapted for $32\times 32$ inputs (kernel size 3, stride 1, padding 1), the maxpooling layer is removed (replaced with identity), and the final fully connected layer is replaced for 10-class output. All layers remain trainable during fine-tuning. Models are fine-tuned for $200$ epochs with batch size $2048$ and cosine learning rate scheduling.

Following the protocol specified earlier, the hyperparameters used to report the final results are selected based on the best minority-class accuracy. We compare NMD ($\eta = 10^{-2}$, $\beta = 0.99$), Signum ($\eta = 10^{-4}$, $\beta = 0.9$), Adam ($\eta = 10^{-3}$, $\beta_1 = 0.9$, $\beta_2 = 0.999$), and Muon ($\eta = 10^{-2}$, $\beta = 0.9$). All optimizers use zero weight decay. 

\textit{Results.} As shown in Fig.~\ref{fig:imagenet_finetune}, when fine-tuning from ImageNet initialization, the optimizer dynamics differ from training from scratch (\cref{fig:cifar10_details}). Adam, Muon, and Signum achieve comparable training loss and majority-class accuracy, while NMD exhibits  slower convergence. For minority-class accuracy, Adam demonstrates the best performance, followed by Muon and Signum, while NMD struggles to learn minority classes quickly. 
\begin{figure}[h!] 
\centering \includegraphics[width=0.75\textwidth]{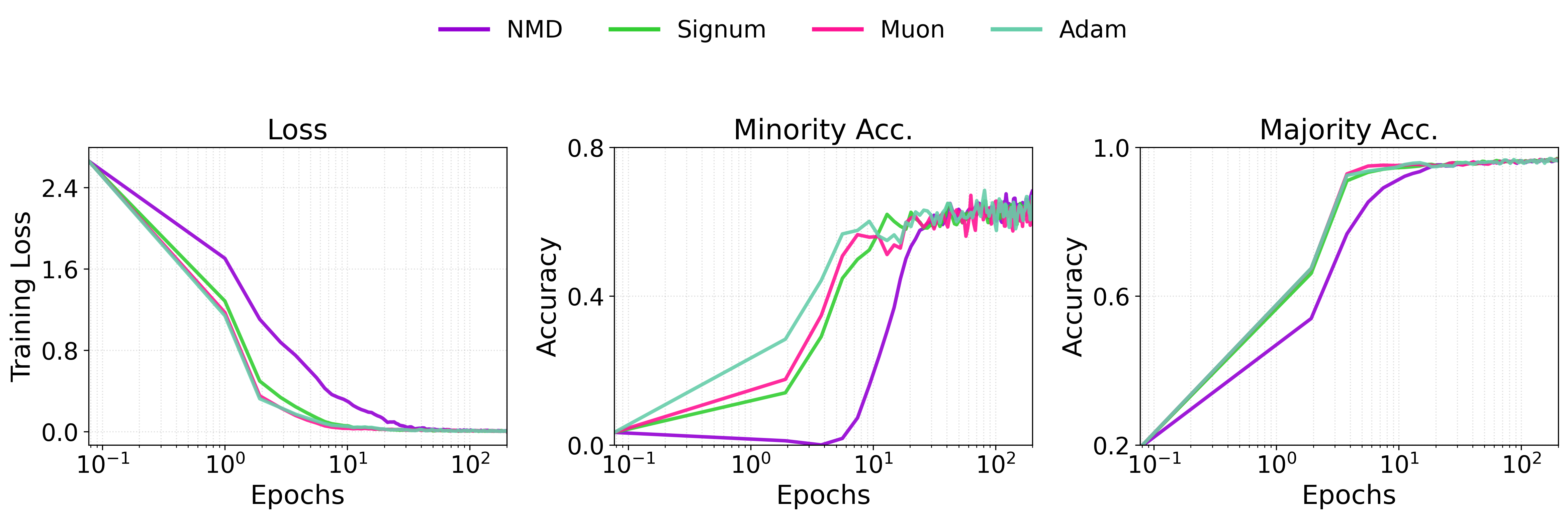} \caption{Fine-tuning ImageNet-pretrained ResNet-18 on imbalanced CIFAR-10 ($R=20$). While all optimizers except NMD achieve similar majority-class accuracy, Adam demonstrates superior minority-class performance, followed by Muon and Signum. NMD shows significantly delayed learning on both majority and minority classes.} \label{fig:imagenet_finetune} \end{figure}

\subsection{MNIST-CIFAR Dataset}
\label{app:mnist-cifar}
We train the model for $500$ epochs or until convergence (loss reaches $0.01$). We use a batch size of $2048$ and a cosine annealing learning rate scheduler. We use a fixed weight decay of $10^{-5}$.

For SGD and Muon, we test learning rates $0.01, 0.1$, and momentum values $10^{-8},0.5,0.9$. For Adam and Shampoo, we test learning rates $10^{-4},10^{-3}$, $\beta_1$ values $10^{-8},0.5,0.9$ and $\beta_2$ values $10^{-8},0.5,0.999$. The results are averaged over three independent runs. The results are reported for the best hyperparameter setting for each optimizer, selected based on the best seed-averaged worst-group accuracy on the validation set.

\subsection{Subgroup Robustness Benchmarks}
\label{app:group}
For MultiNLI, we use a batch size of $512$ and the results are averaged over four independent runs. For CelebA, we use a batch size of $1024$ and the results are averaged over five independent runs.

%\pdcomment{move to appendix but clearly mention and point to that section}
To ensure our comparisons are robust, we conduct extensive hyperparameter sweeps for all optimizers, as shown for the MultiNLI dataset in \cref{fig:mnli_tuning} and CelebA in \cref{fig:cel_tuning}. This selection protocol confirms that the consistent superior performance of spectral optimizers like Muon and Shampoo over SGD is not an artifact of a specific hyperparameter choice. 

\begin{figure}[h!]
    \centering
    \includegraphics[width=\linewidth]{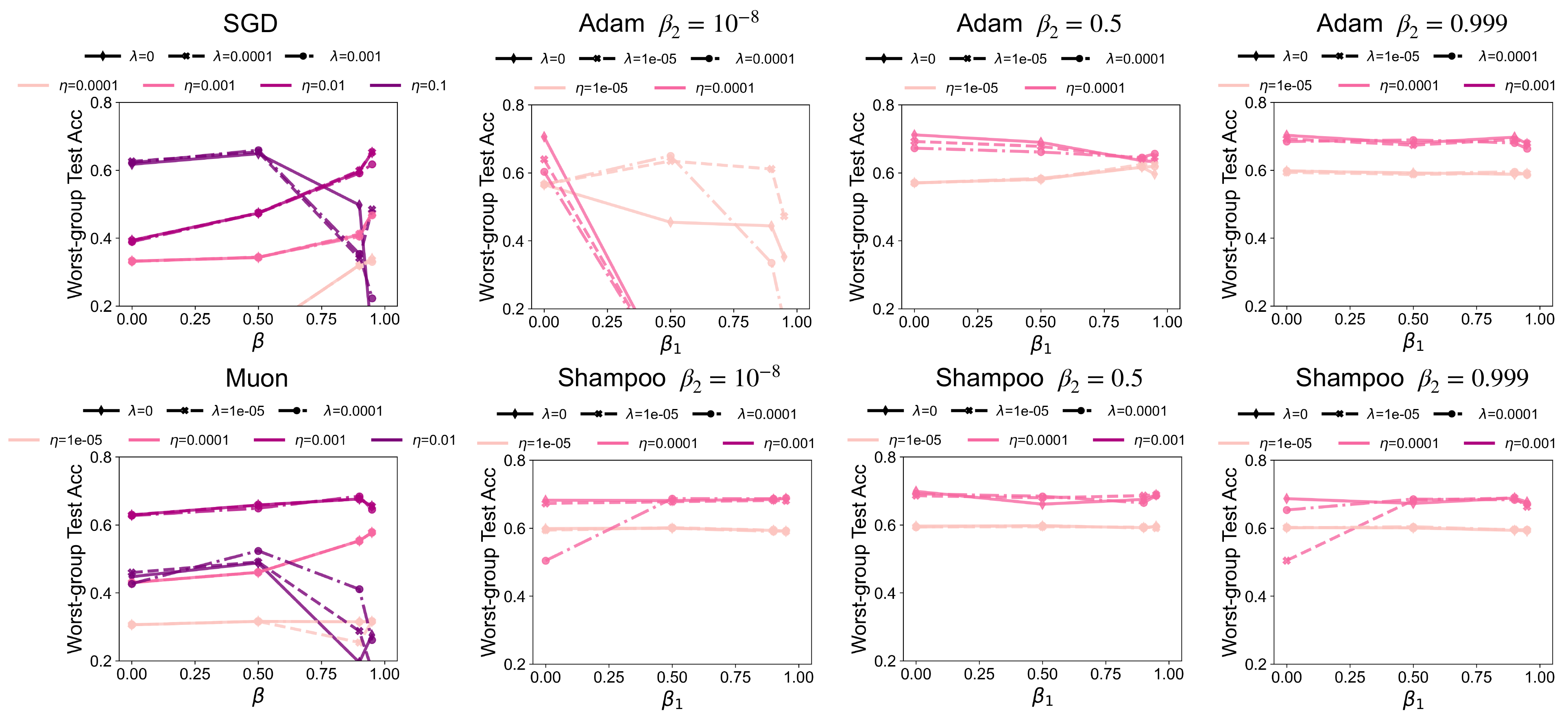}
    \caption{Hyperparameter sweep for different optimizers on the MultiNLI dataset, showing worst-group test accuracy at the last fine-tuning epoch. The sweep varies the momentum parameters ($\beta$ for SGD/Muon, $\beta_1,\beta_2$ for Adam/Shampoo), learning rate ($\eta$), and weight decay ($\lambda$). The results indicate that the best performance is not necessarily obtained at the default momentum values for SGD ($\beta=0.9$) or Adam ($\beta_1=0.9$, $\beta_2=0.99$). For Muon, a higher $\beta$ is generally better for each learning rate, although performance tends to fall for $\beta > 0.9$, except in the case of small learning rates like $\eta=10^{-5}, 10^{-4}$ where performance remains high. The performance of Shampoo is highly stable across different $\beta_1$ and $\beta_2$ values for any learning rate $\eta$, in sharp contrast to Adam.}
    \label{fig:mnli_tuning}
\end{figure}
\begin{figure}[h!]
    \centering
    \includegraphics[width=\linewidth]{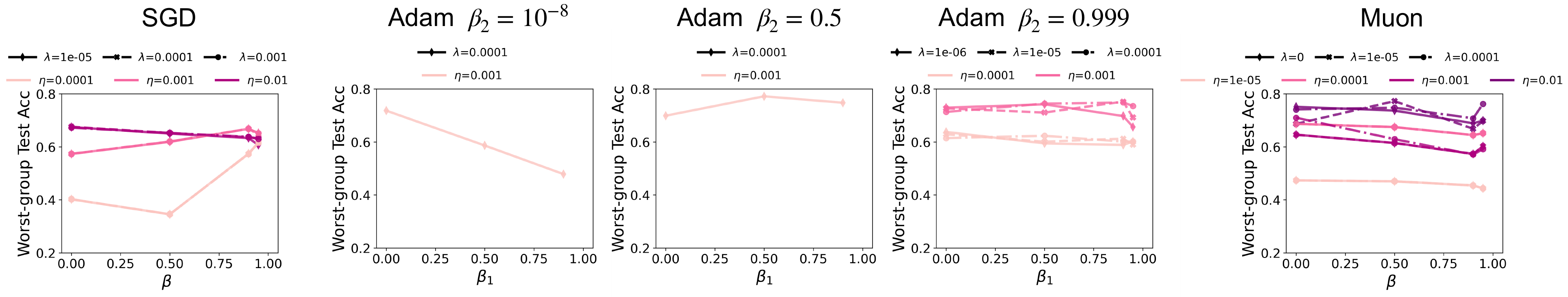}
    \caption{Hyperparameter sweep for different optimizers on the CelebA dataset, showing worst-group test accuracy at the last fine-tuning epoch. The sweep varies the momentum parameters ($\beta$ for SGD/Muon, $\beta_1,\beta_2$ for Adam/Shampoo), learning rate ($\eta$), and weight decay ($\lambda$). }
    \label{fig:cel_tuning}
\end{figure}

\subsection{TinyStories}\label{app:tiny-stories}

\textit{Dataset and Preprocessing.}
We use a subset of the TinyStories corpus~\citep{eldan-li-2023-tinystories}, randomly sampling $500$ stories from the full dataset as training set. The selected stories are tokenized using SentencePiece~\citep{kudo2018sentencepiece} with unigram tokenization and vocabulary size $2000$. We sampled another $50$ stories as validation set.  

We define token frequency buckets based on corpus statistics computed over the training set. Tokens are categorized as: (1) ``rare'' tokens: frequency $\leq$ 50th percentile, (2) ``frequent'' tokens: frequency $\geq$ 80th percentile. See Fig.~\ref{fig:tinystories_app} (left) for a demonstration of token frequency distribution. 

\textit{Evaluation Metrics.}
We evaluate models on the validation set every epoch using cross-entropy loss on both training and validation sets, and top-1/top-5 accuracy on the validation set.

\textit{Model Architecture and Training.}
We train $4$-layer Transformer models with the following specifications: maximum sequence length $64$, $4$ heads, embedding dimension $256$, feed-forward dimension $512$, and dropout rate $0.1$. The token embeddings and unembeddings are tied. All weights are initialized randomly.

Training is conducted with batch size $32$ for $100$ epochs using three optimizers: SGD ($\eta=0.1$, $\beta=0.9$, weight decay $10^{-4}$), Adam ($\eta=0.001$, $\beta_1=0.9$, $\beta_2=0.999$, weight decay $10^{-4}$), and Muon ($\eta=0.1$, $\beta=0.95$, no weight decay). Learning rates and momentum values are selected via grid search based on the lowest validation loss (see \cref{fig:ts_sweep}).

\textit{Results.} As shown in \cref{fig:tinystories_app} (right), Muon and Adam converge faster than SGD, with Muon showing rapid loss reduction around epoch 10. Critically, SGD exhibits a large gap between majority and minority token performance, while Muon significantly reduces this gap, achieving more balanced learning across token frequencies. Also see Fig.~\ref{fig:tinystories_results} which compares the validation set accuracy, where the effect is more pronounced.

\begin{figure}[h!]
\centering
\resizebox{0.75\textwidth}{!}{%
\begin{tabular}{ccc}
  \includegraphics[width=0.3\textwidth]{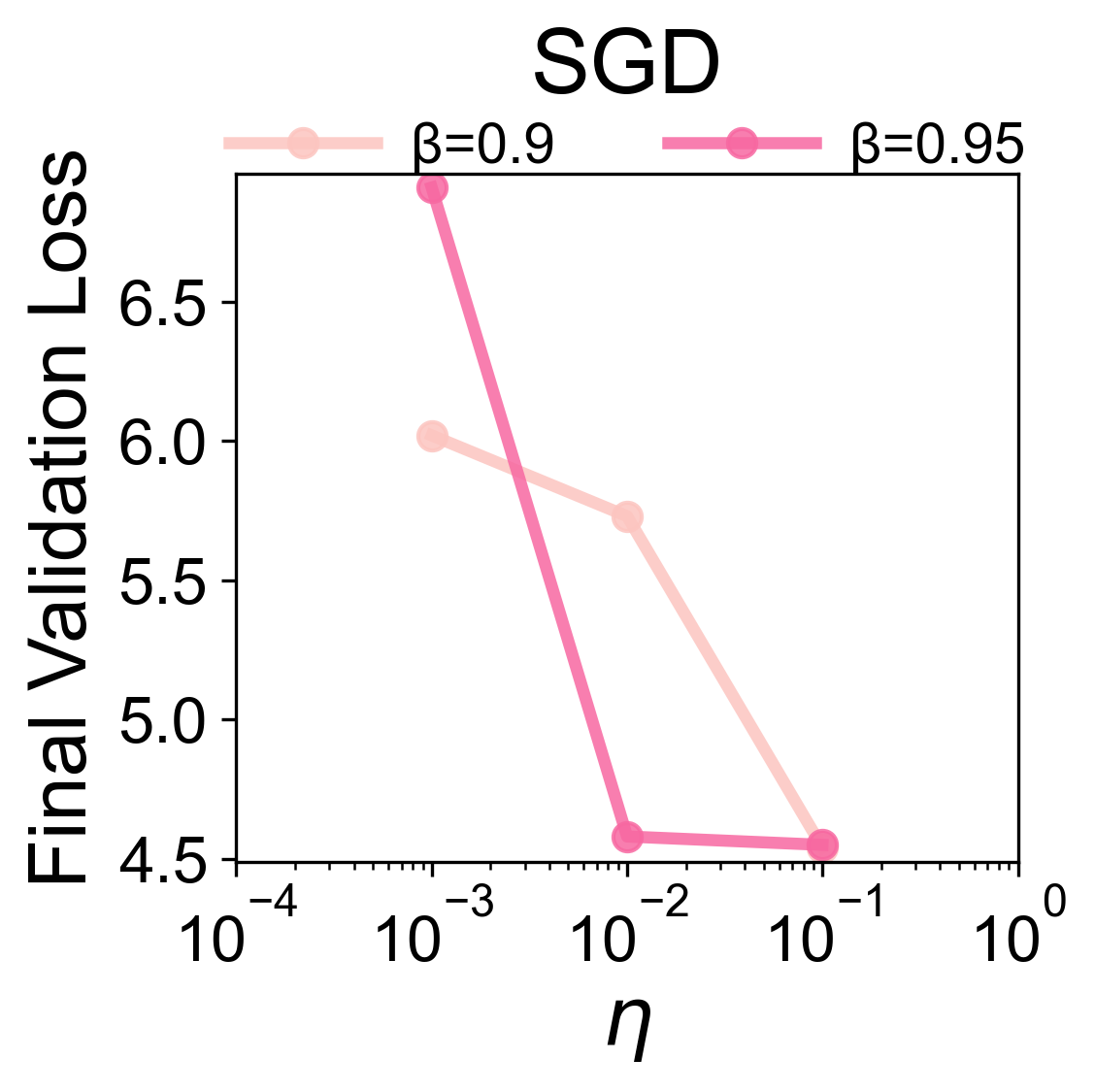} &
  \includegraphics[width=0.3\textwidth]{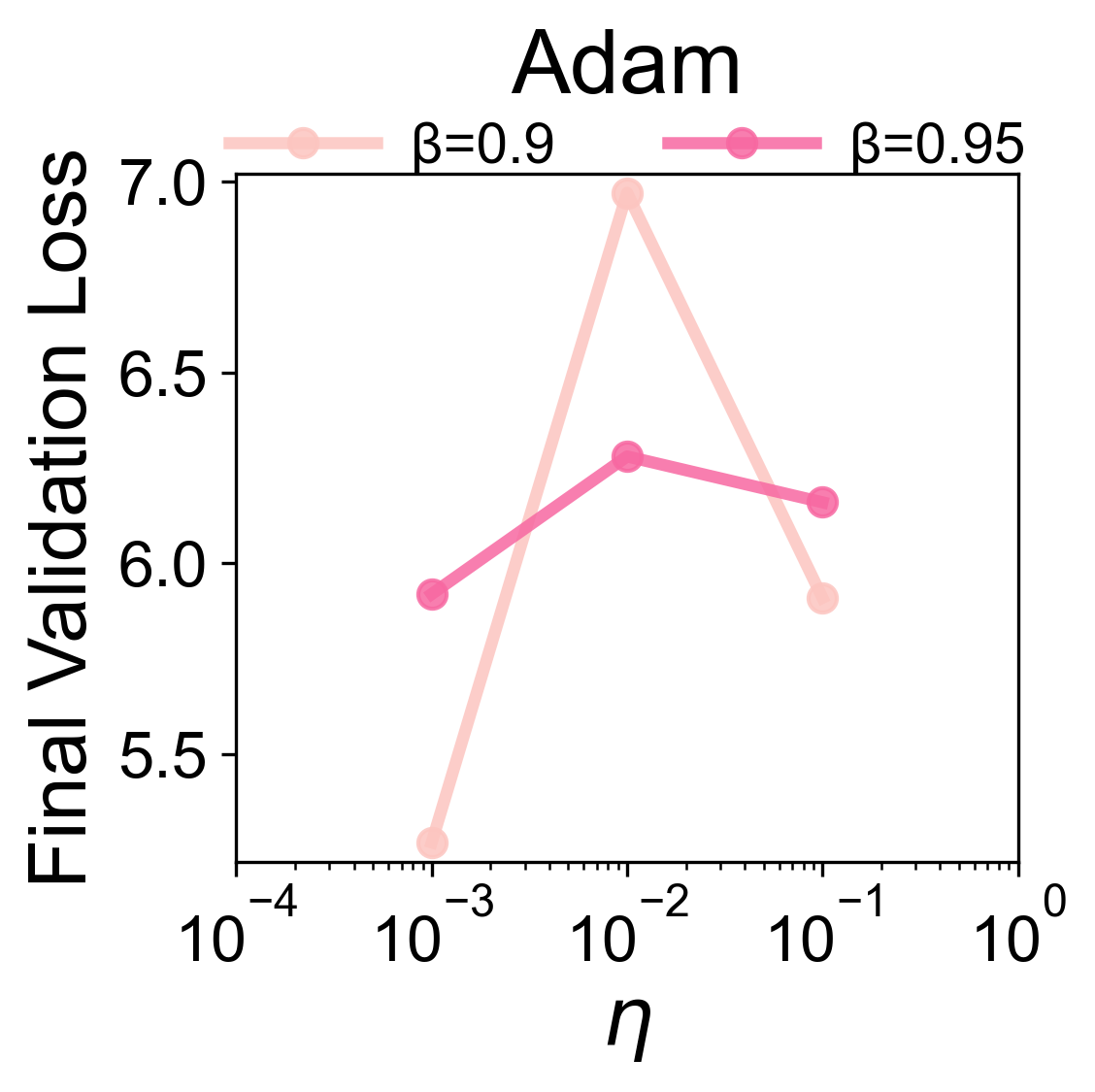} &
  \includegraphics[width=0.3\textwidth]{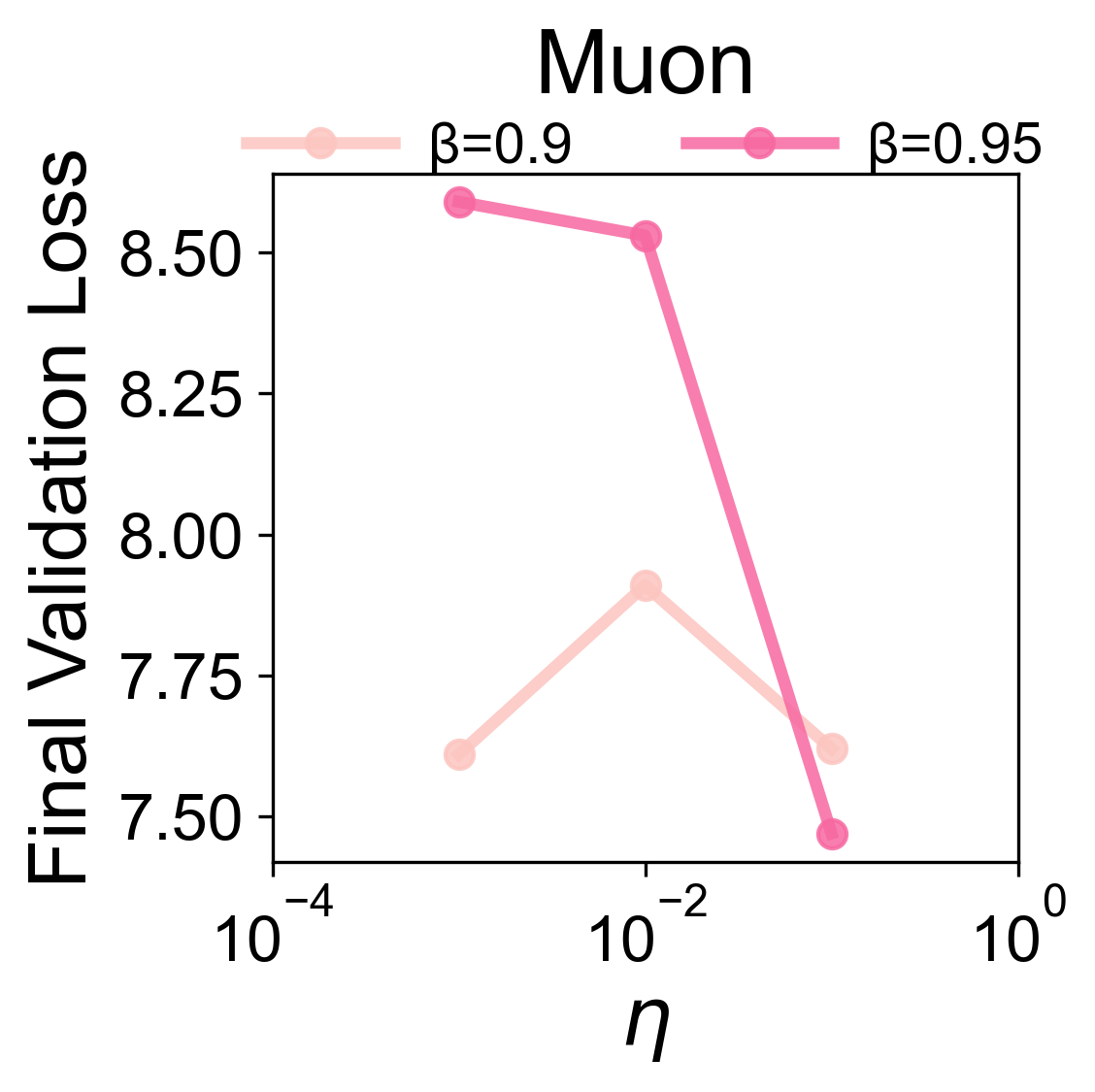}
\end{tabular}%
}\vsn
\caption{Effect of using different learning rates and momentum on the final validation loss of different optimizers on TinyStories. %The run with the lowest validation loss is reported in Fig.~\ref {fig:tinystories_app} and Fig.~\ref{fig:tinystories_results}. 
}
\label{fig:ts_sweep}\vsn
\end{figure}
\begin{figure}[h!]
\centering
\begin{subfigure}[t]{0.34\textwidth}  
  \centering
  \includegraphics[width=\linewidth]{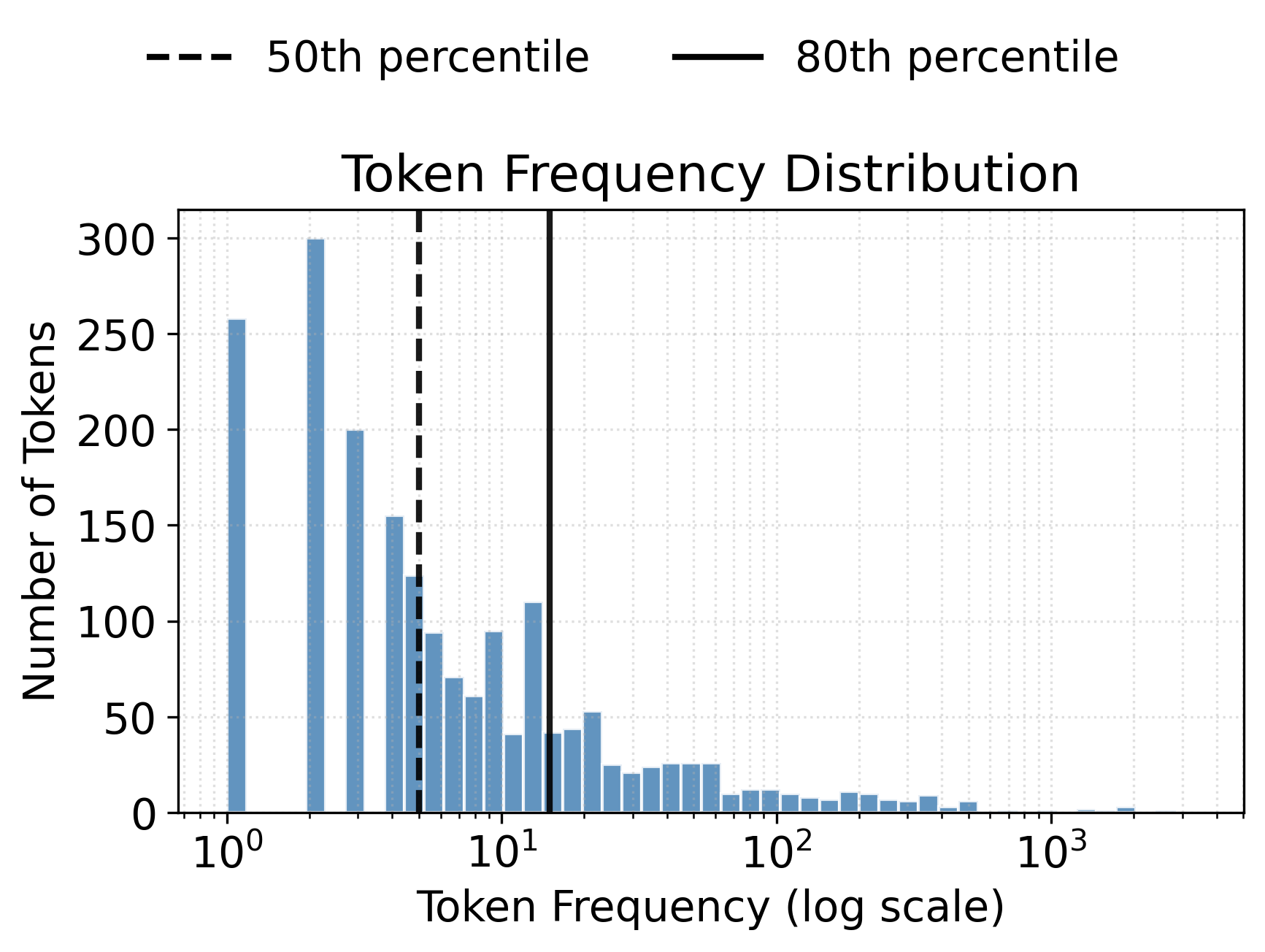}
\end{subfigure}\hfill
\begin{subfigure}[t]{0.6\textwidth}  
  \centering
  \includegraphics[width=\linewidth]{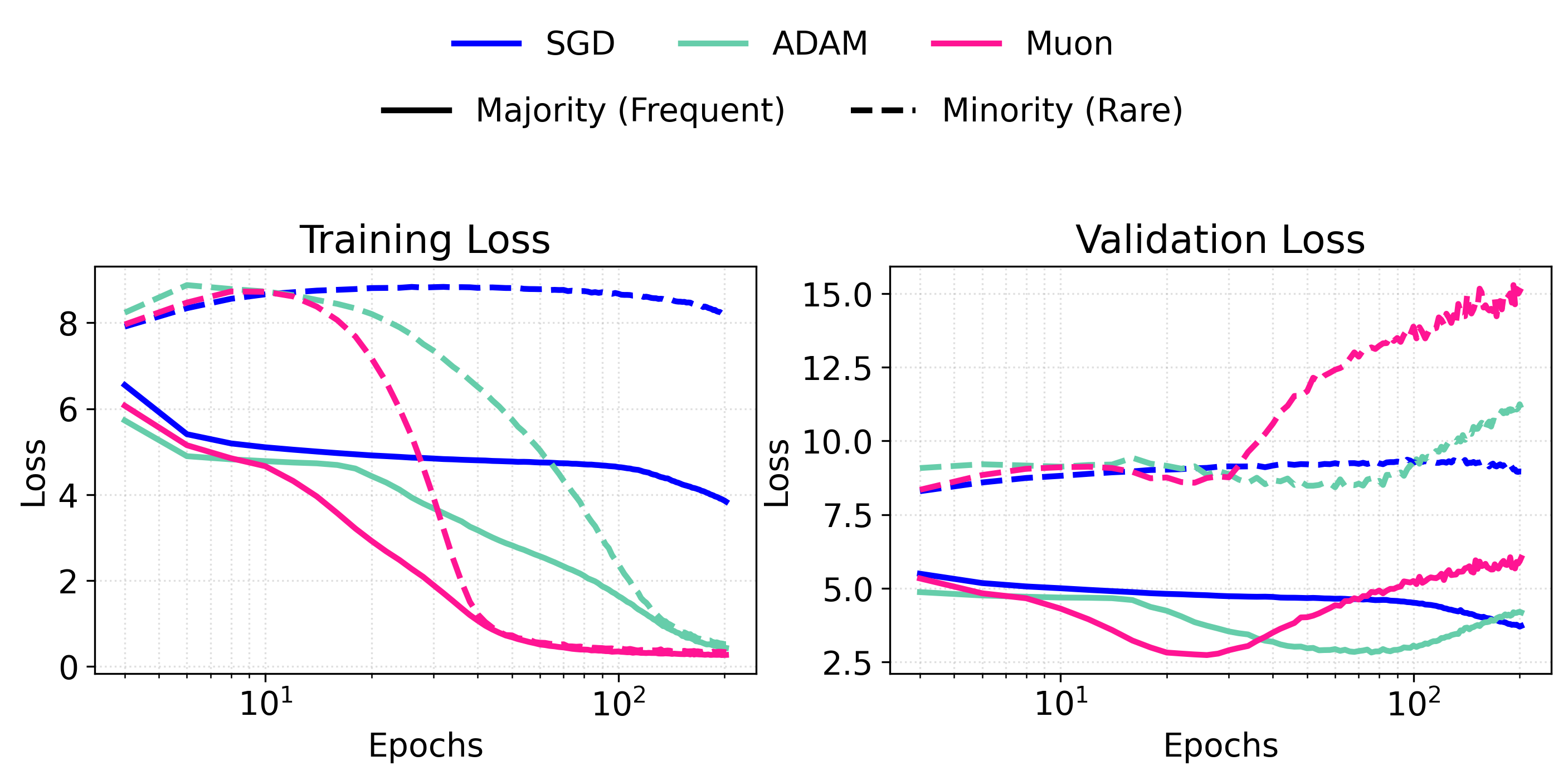}
\end{subfigure}
\caption{Left: Token frequency distribution in training set. The x-axis shows token rank ordered by frequency (most to least frequent), and the y-axis shows log frequency. Tokens at or below the 50th percentile (left of dashed line) are classified as rare tokens, while tokens at or above the 80th percentile (right of solid line) are classified as frequent tokens. Right: Comparison of train and validation loss dynamics of various optimizers on TinyStories. Solid lines: frequent tokens ($\ge80$th percentile), dashed lines: rare tokens ($\le50$th percentile). Muon leads to reduced performance gap between frequent and rare tokens on the train set compared to SGD and Adam.}
\label{fig:tinystories_app}
\end{figure}

\subsection{Attribute-Organism Classification} 
\label{app:att-obj}
\vsn
We consider structured attribute prediction using an organism dataset to study hierarchical concept learning and the effect of using different optimizers.

\textit{Dataset and Task.} The dataset contains 12 organisms across hierarchical categories: plants (apple, orange, rose, tulip), animals (dog, wolf, cat, tiger, sparrow, eagle, salmon, trout), mammals (dog, wolf, cat, tiger), and other animals. Each organism has 13 attributes with taxonomic structure: plant attributes (``uses sunlight for food,'' ``grows roots''), general animal attributes (``needs oxygen,'' ``can move around''), mammal attributes (``stays warm-blooded''), etc. The conditional probability matrix, given attribute and predict subject, is shown in \cref{fig:synthetic_cm}. This creates a hierarchical classification challenge where the model must learn both individual attribute predictions and the underlying taxonomic relationships.

\textit{Model Architecture and Training.} We use a bilinear model with logits = $\W^1 \W^0$, where $\W^0 \in \mathbb{R}^{d \times n}$ and $\W^1 \in \mathbb{R}^{k \times d}$, with $n = 12$ (number of organisms), $k = 13$ (number of attributes), and $d = 128$ hidden dimensions.  GD uses learning rate $\eta = 0.001$ and Muon uses learning rate $0.01$, with Muon using momentum $\beta = 0.9$. 

\textit{Results.} \cref{fig:synthetic_cm} shows confusion matrices tracking category classification for GD and Muon. The categories are organized from coarse to fine in these matrices. Each entry indicates the number of examples in the row category being classified in the column category. The results reveal a key difference in learning dynamics: GD learns the coarse categories (Animals and Plants) much faster than fine-grained categories, showing strong diagonal entries for these broad classes early in training. In contrast, Muon learns all hierarchical levels at similar rates, achieving more balanced progress across both coarse and fine categories. This differential learning behavior directly supports our theoretical finding that spectrum-aware optimizers like Muon learn principal components at equal rates, while GD prioritizes learning dominant patterns first.
\vsn
\vsn
\vsn
\begin{figure}[h!]
    \centering
    \includegraphics[width=\linewidth]{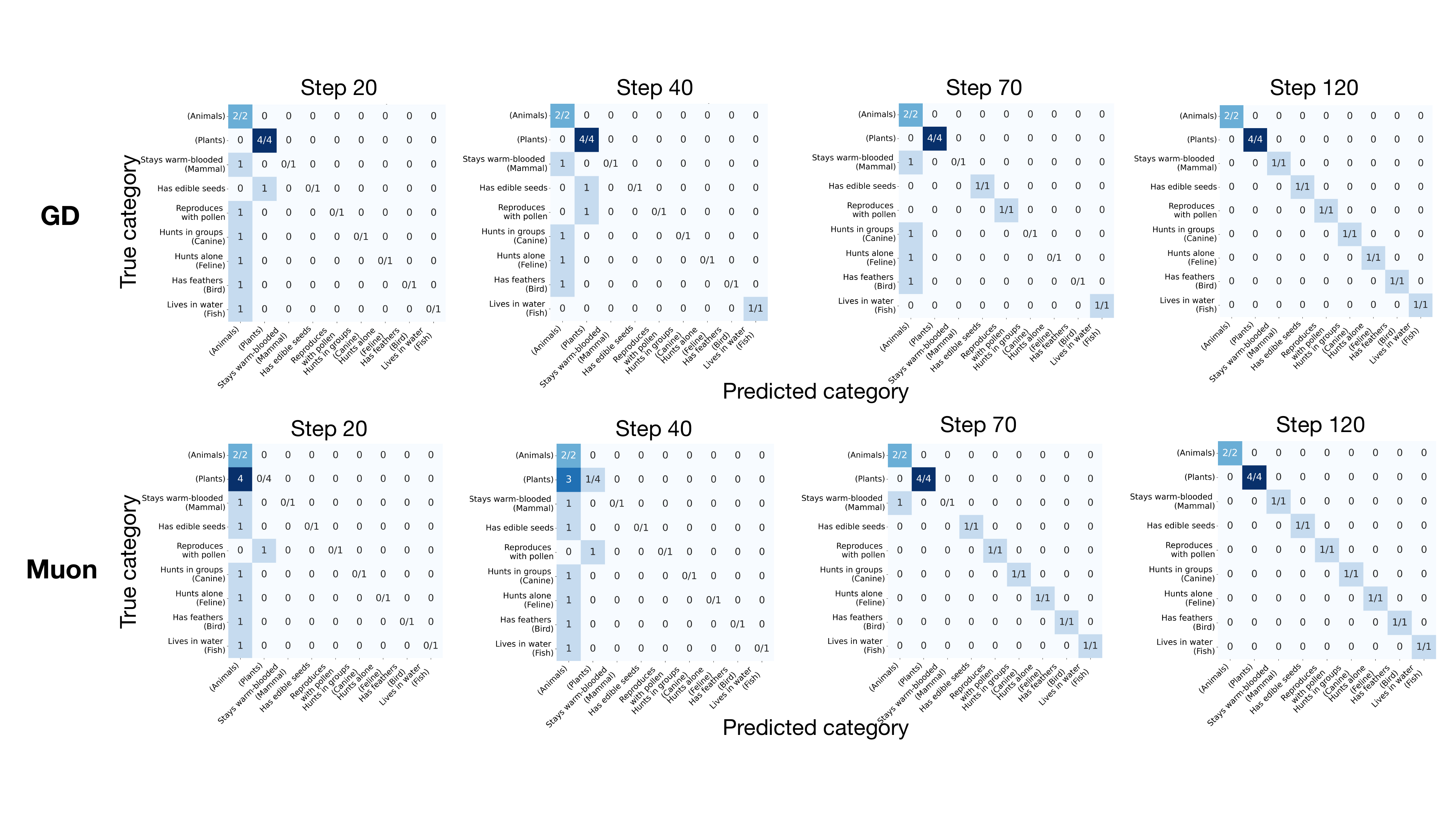}
    \vspace{-4mm}
\caption{Confusion matrix evolution on attribute classification for GD and Muon. Categories are organized from coarse (top-left) to fine (bottom-right). Each cell value indicates the number of attributes with true category (row) classified as predicted category (column). A diagonal matrix indicates perfect classification. GD learns coarse categories first (evident at step 20), while Muon learns all hierarchical levels at similar rates (step 70). Both optimizers achieve perfect classification by step 120.}
    \label{fig:synthetic_cm}\vsn
\end{figure}

\vsn
\vsn
\vsn
\subsection{Mixture of TinyStories and Add-$k$}

In this section, we investigate whether our findings on imbalanced learning extend to the level of knowledge acquisition and skill learning. We introduce a domain-imbalanced setting by constructing a composite dataset where the majority component consists of natural language narratives (TinyStories), while the minority component requires learning a structured mathematical skill.

% In this section, we conduct experiments to see if the insights on imbalanced learning setups translate to a knowledge level. We consider animbalanced setting, where we train on a mixture of two different datasets, a language-based dataset as majority and a mathematical task as minority.

\textit{Dataset and Task.} We train on a mixture of the TinyStories dataset (majority) and a mathematical task (minority), specifically add-$k$. For TinyStories, we follow the same train/test process as in \cref{app:tiny-stories}. For add-$k$, an example sequence is of the form “Q: $x_1$, A: $y_1$, Q: $x_2$, A: $y_2$, Q: $x_3$, A: $y_3$”, where $y_j=x_j+k$. We consider two-digit natural numbers for both the inputs $x_j$ and the offsets $k$, where $k$ is drawn from a fixed set of values during both training and evaluation. We train using the next token prediction objective and evaluate the exact match accuracy to predict the last $y_3$, given the rest of the sequence.

\textit{Model Architecture and Training.} We use the same Transformer architecture as in our TinyStories experiments (see \cref{app:tiny-stories} for details). We train on a composite dataset of $20,000$ samples. This includes $18,000$ TinyStories samples (majority, with $0.9$ fraction) and $2,000$ add-$k$ arithmetic sequences (minority, with $0.1$ fraction). We evaluate on $500$ held-out samples for each domain. We train for $200$ epochs with a batch size of $256$ and compare three optimizers. SGD uses $\eta = 5\times 10^{-2}$, momentum $0.95$, and weight decay $10^{-4}$. For Adam, we set $\eta = 10^{-3}$, $\beta_1=0.9,\beta_2=0.999$, and weight decay $10^{-4}$. For Muon, we set $\eta = 10^{-2}$ and momentum $0.95$. %We optimize embeddings and 1D parameters using Adam with a learning rate of $10^{-3}$.

\textit{Results.} We compare the top-1 and top-5 accuracies on TinyStories and the exact match accuracies on add-$k$ for SGD, Adam, and Muon in \cref{fig:ts_addk}. We observe that for Muon, the accuracies on both datasets increase at a more similar rate compared to SGD, and it generalizes better on the minority add-$k$ task. Adam, on the other hand, achieves performance comparable to Muon, similar to some of our results on subgroup robustness datasets in \cref{sec:expts}.

\begin{figure}[h!]
\centering
\resizebox{0.9\textwidth}{!}{%
\begin{tabular}{ccc}
  \includegraphics[width=0.3\textwidth]{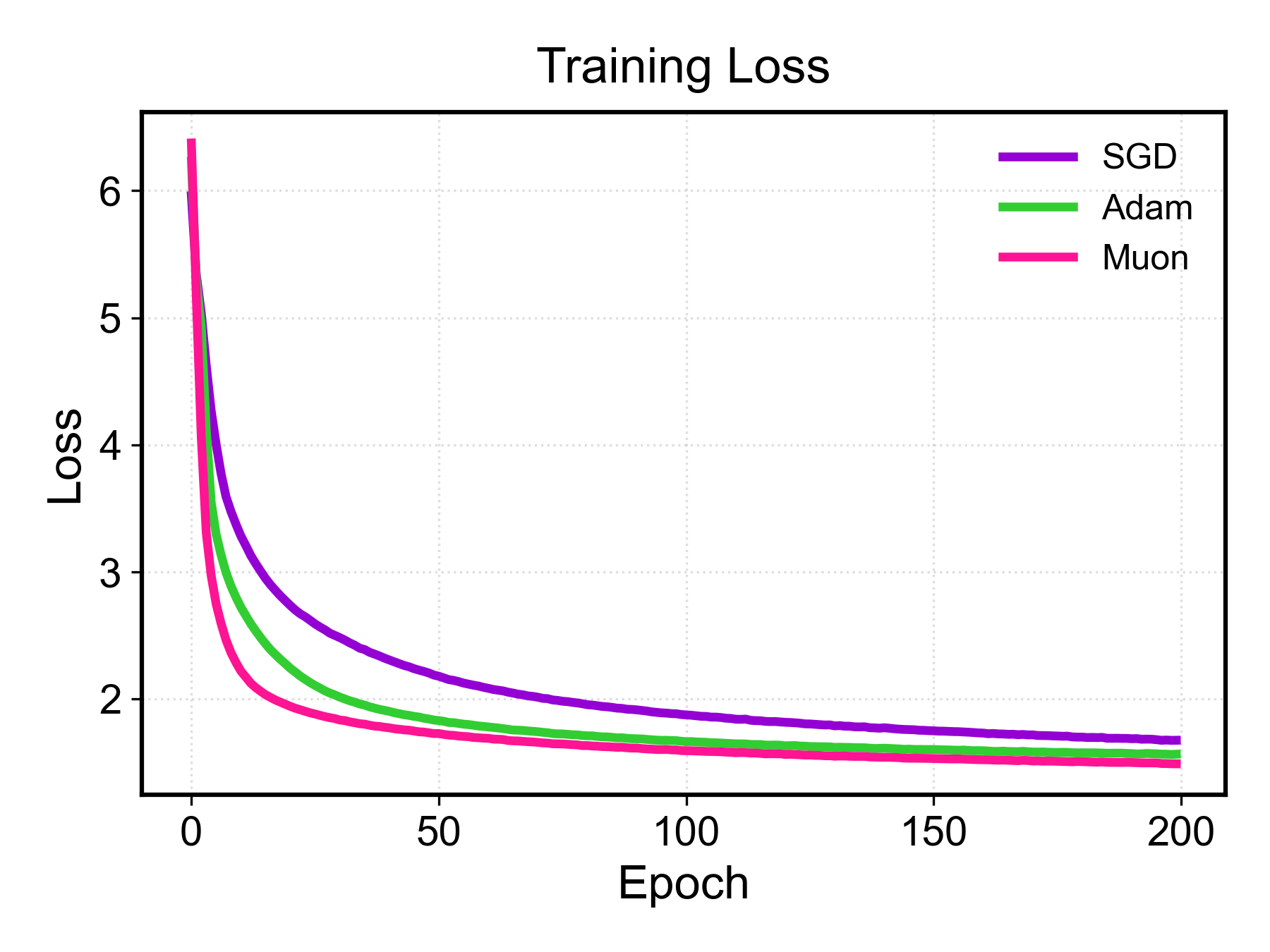} &
  \includegraphics[width=0.3\textwidth]{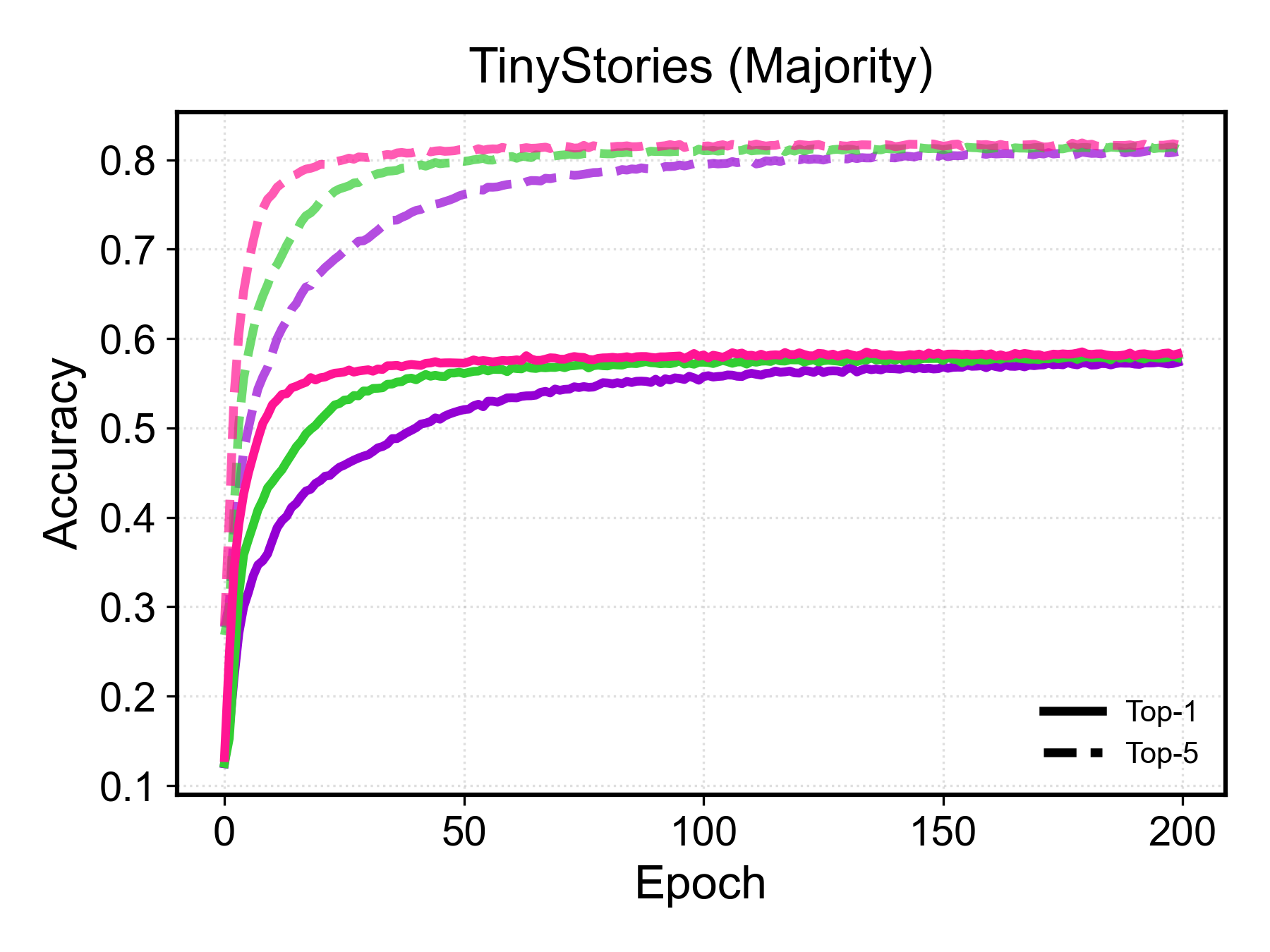} &
  \includegraphics[width=0.3\textwidth]{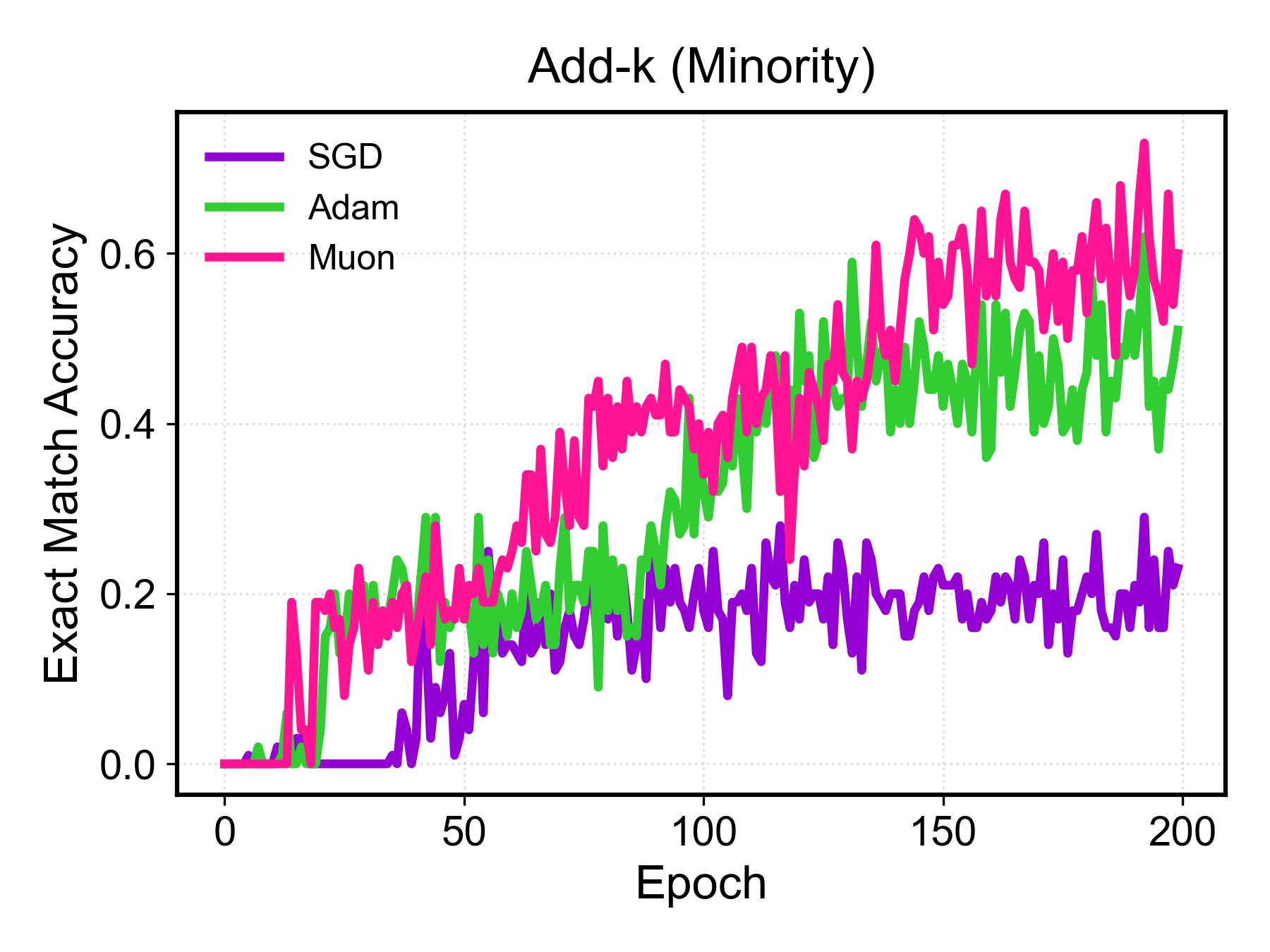}
\end{tabular}%
}\vsn
\caption{Comparison of SGD, Adam and Muon when training a transformer on a mixture of TinyStories (majority) and add-$k$ (minority) datasets: train loss (left), top-1 and top-5 test accuracies on TinyStories (middle) and exact match test accuracy on add-$k$ (right). Muon promotes more balanced learning compared to SGD while performing comparably to Adam. 
}
\label{fig:ts_addk}\vsn
\end{figure}

\begin{figure}[h!]
  \centering
  \begin{minipage}[b]{0.39\textwidth}
    \centering
    \includegraphics[width=\textwidth]{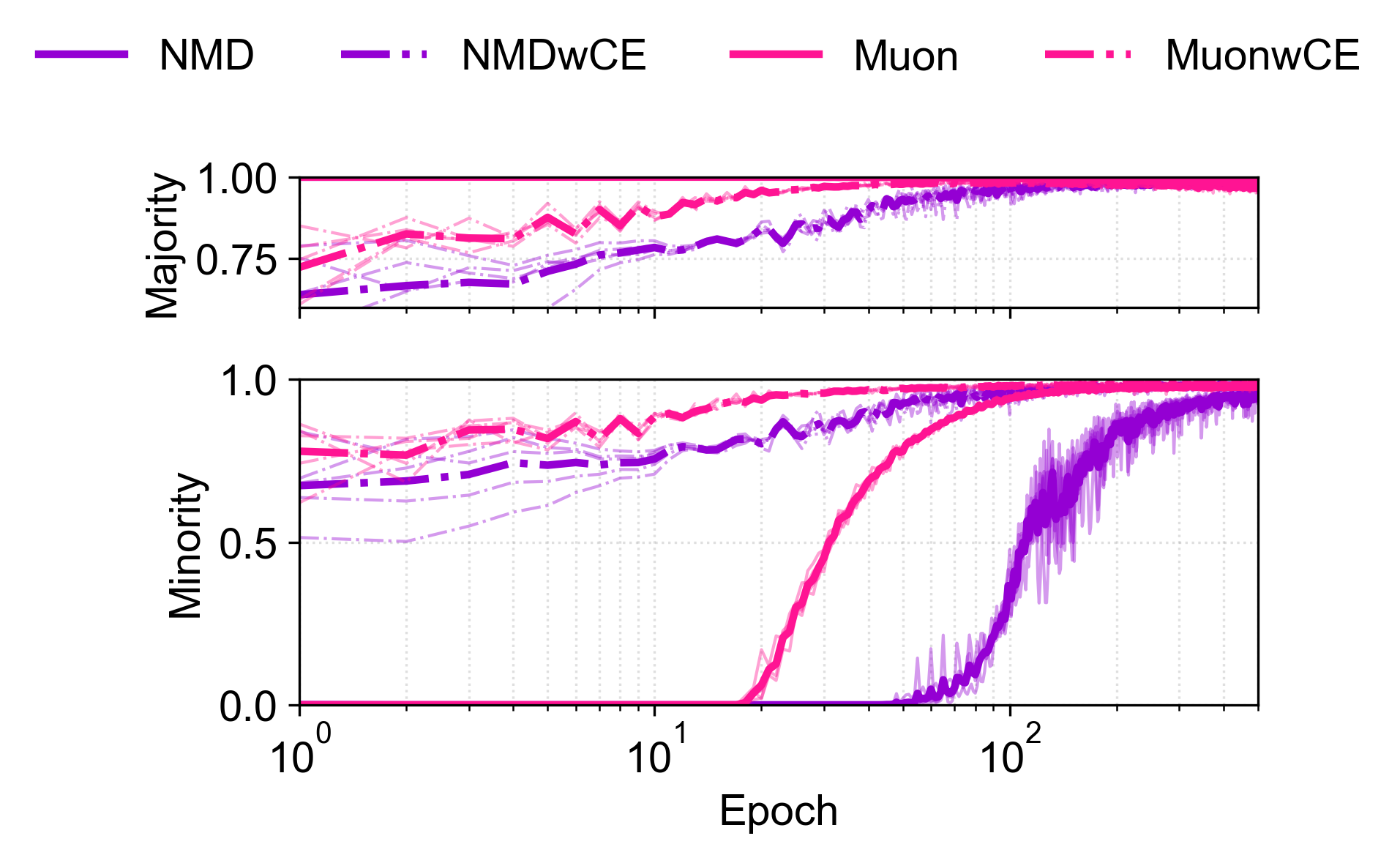}
    \caption{Effect of training with weighted cross-entropy (wCE) on Colored-MNIST using NMD and Muon. For NMD, using wCE causes majority- and minority-group test accuracies to rise at the same rate, outperforming Muon early in training; Muon with wCE further amplifies this effect.}
    \label{fig:cmnist-wce}
  \end{minipage}
  \hfill
  \begin{minipage}[b]{0.59\textwidth}
    \centering
    \includegraphics[width=\linewidth]{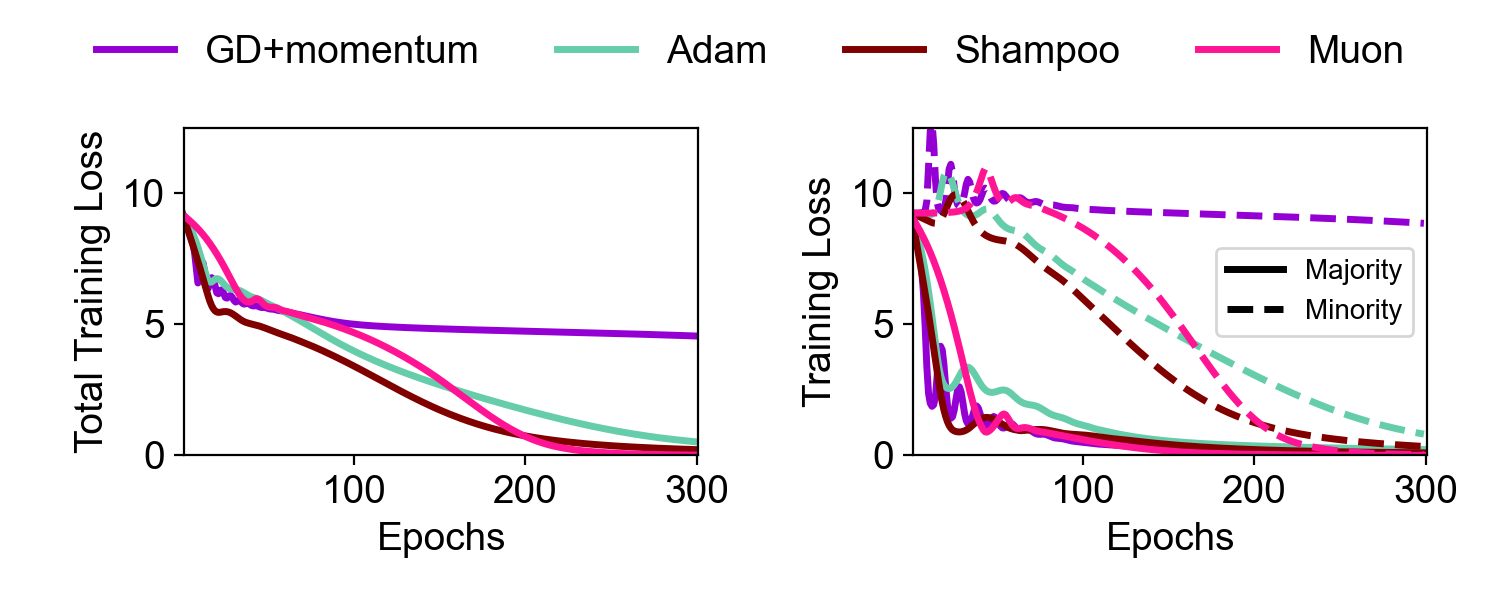}
    \caption{Comparison of GD, Adam, Shampoo and Muon when training a CNN on the Barcoded MNIST dataset from \citet{kunstner2024heavy}, a variant of MNIST with heavy-tailed class imbalance. GD only drives the loss on majority classes towards $0$ and makes little progress on the minority classes. In contrast, Adam, Shampoo and Muon drive the loss on both majority and minority classes towards $0$.}
    \label{fig:mnist_ht}
  \end{minipage}
\end{figure}
\vsn
\subsection{Effect of Loss Re-weighting}\label{app:wce} \vsn
In Fig.~\ref{fig:cmnist-wce}, we compare training with weighted cross-entropy (wCE) loss on the Colored-MNIST dataset (same setting as \cref{fig:group_imbalance_mlp}) using NMD and Muon. We find that for NMD with wCE, the majority- and minority-group accuracies rise at the same rate, and it outperforms Muon (with CE) very early in training, while using Muon with wCE further amplifies this effect. Since Muon with CE already exhibits a weaker form of this effect `for free', it may still be useful in settings where explicit group information for loss re-weighting is unavailable.

\subsection{MNIST with Heavy-tailed Class Imbalance}
\label{app:mnist_ht}
\textit{Dataset.} In this section, we consider the Barcoded MNIST dataset, which is a variant of MNIST with heavy-tailed class imbalance introduced in \citet{kunstner2024heavy}. Barcoded MNIST contains two types of classes: $10$ classes with $5000$ samples each from the original MNIST dataset (majority), and $10\times 2^{10}$ additional classes with $5$ samples each (minority). The images in the minority classes are generated by taking images from the original MNIST dataset, and encoding a $10$-bit barcode into the top left corner of the images, for each of the $10$ original classes. 

\textit{Model Architecture and Training.} Following \citet{kunstner2024heavy}, we train a $2$-layer CNN on this dataset in the full-batch setting. In \cref{fig:mnist_ht}, we compare the total train loss as well as train loss on the majority and minority classes (each comprising about $\approx50\%$ of the total samples) for the four optimizers: GD with momentum, Adam, Shampoo and Muon (we use \texttt{learning rates} $0.005$, $10^{-4}$, $10^{-4}$, and $0.005$, respectively). We use default momentum parameter values for all optimizers. For Shampoo, we use \texttt{AdamGraftingConfig}, and set the stabilization parameter to $10^{-8}$.

\textit{Results.} Consistent with the results in \citet{kunstner2024heavy}, we observe while that GD only minimizes the loss on the majority classes, and makes negligible progress on the minority classes, Adam minimizes loss on both majority and minority classes. In addition, we find that training with Muon or Shampoo leads to a similar behaviour as Adam: they also minimizes the loss on both majority and minority classes, in contrast to GD.

\subsection{Spectral Analysis for Non-linear Models}\label{app:spec_non_linear}
In this section, we extend our analysis on the rate of learning of different spectral components beyond linear models. Specifically, we consider two settings with step imbalance ($k/2$ majority and $k/2$ minority classes) trained using cross-entropy (CE) loss: i) one-hidden-layer MLP (with fixed outer layer, as used in \cref{fig:group_imbalance_mlp}) trained on the data model \eqref{eq:dm} with an imbalance ratio of $10$, and ii) a ResNet-18 trained on CIFAR-10 with an imbalance ratio of $20$ (same setting as \cref{app:cifar10} for \cref{fig:cifar_accuracy}). In both settings, we track the singular values of the $k\times n$ logit matrix using all $n$ train samples across the $k$ classes as training progresses. Specifically, we quantify the extent of balanced learning by measuring the KL divergence between the uniform probability vector $\tfrac{1}{k}\mathbf{1}$ and the vector of normalized singular values $\tfrac{\diag{\Sigma_t}}{\tr(\Sigma_t)}$ over training time, where $\Sigma_t$ is the singular value matrix at time $t$.

\begin{figure}[h!]
    \centering
    % --- 1. Training Loss ---
    \begin{subfigure}{0.24\textwidth}
        \includegraphics[width=\linewidth]{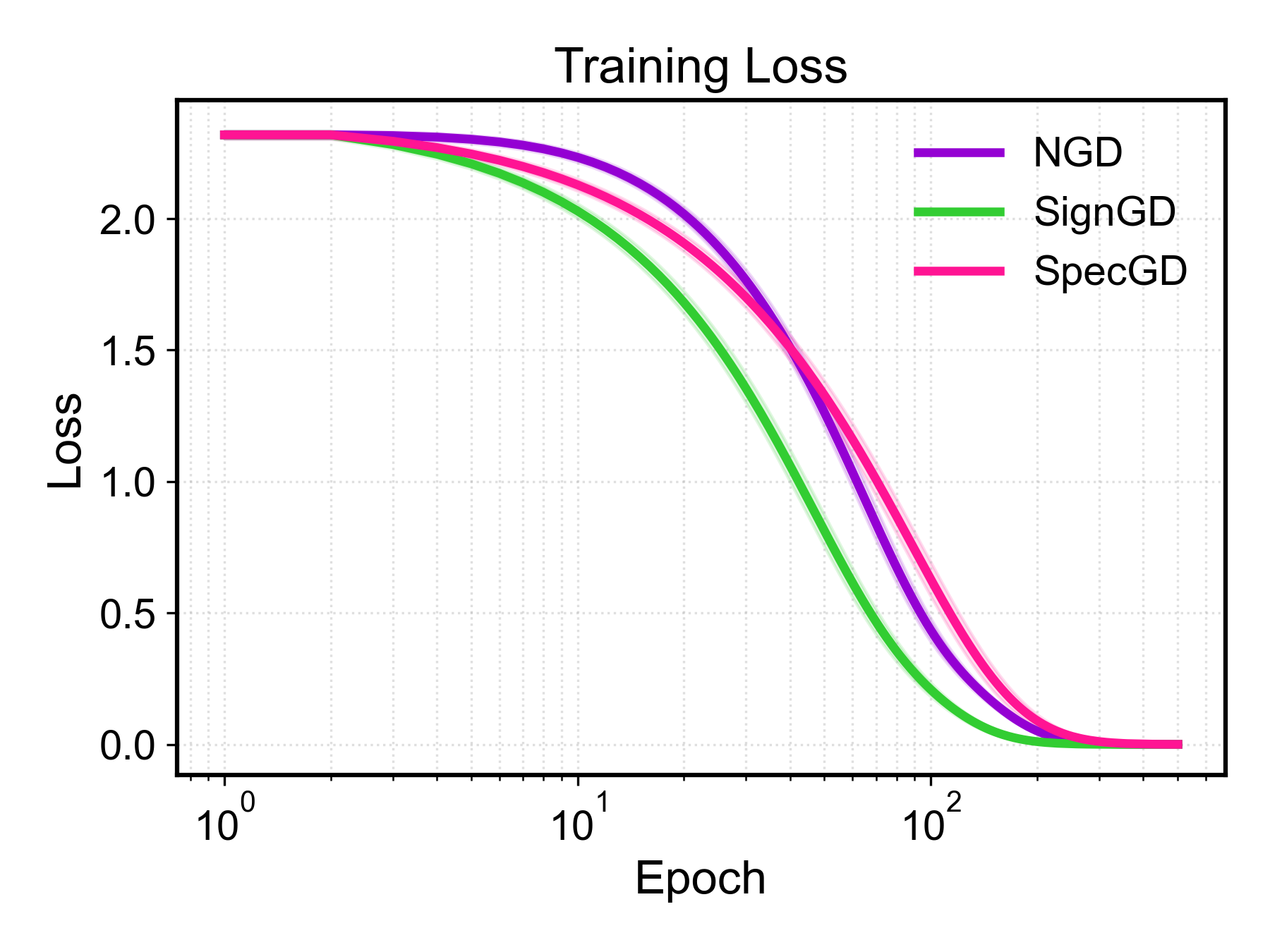}
    \end{subfigure}
    \hfill
    % --- 2. Minority test Accuracy ---
    \begin{subfigure}{0.24\textwidth}
        \includegraphics[width=\linewidth]{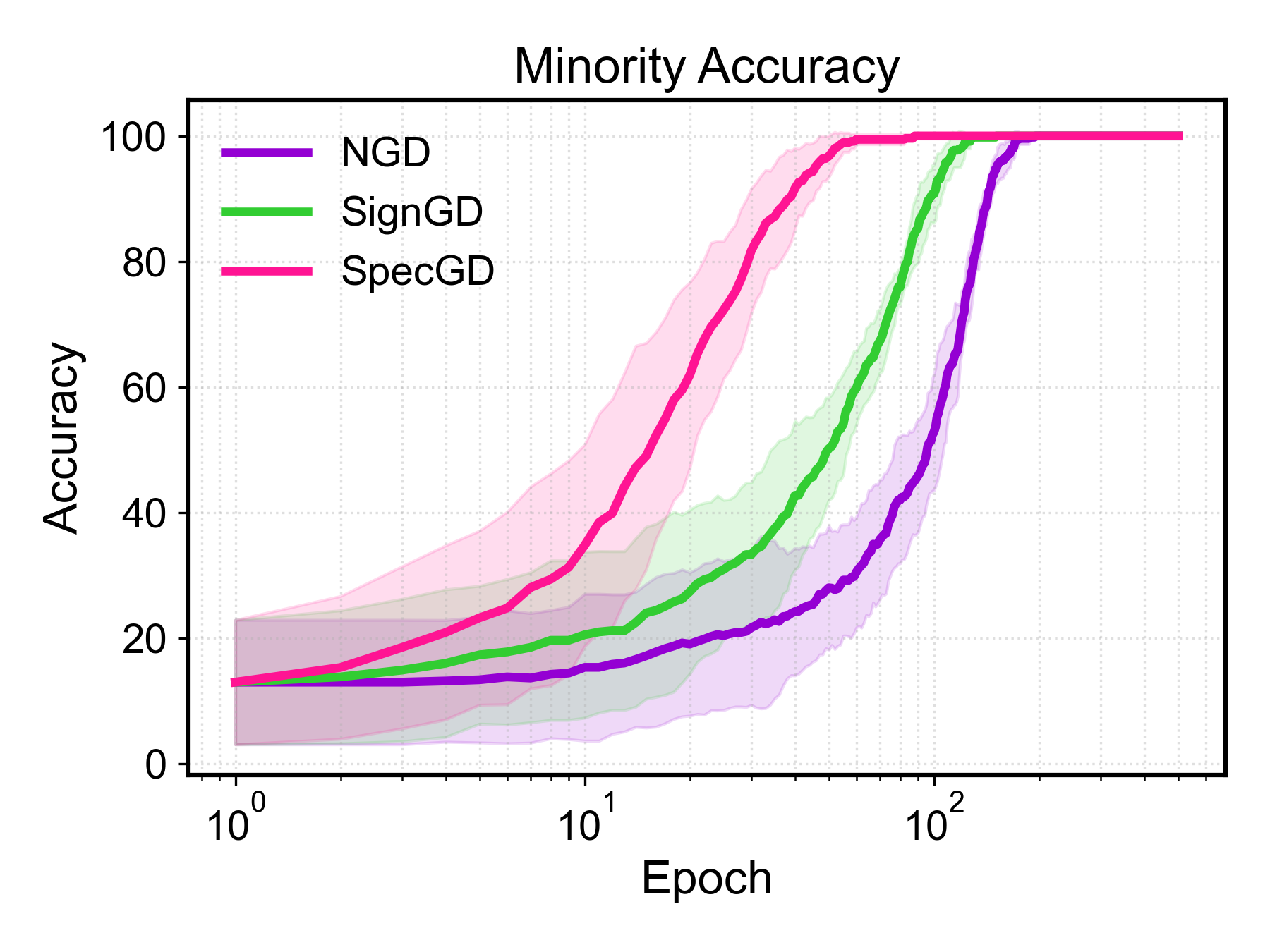}
    \end{subfigure}
    \hfill
    % --- 3. Majority Test Accuracy ---
    \begin{subfigure}{0.24\textwidth}
        \includegraphics[width=\linewidth]{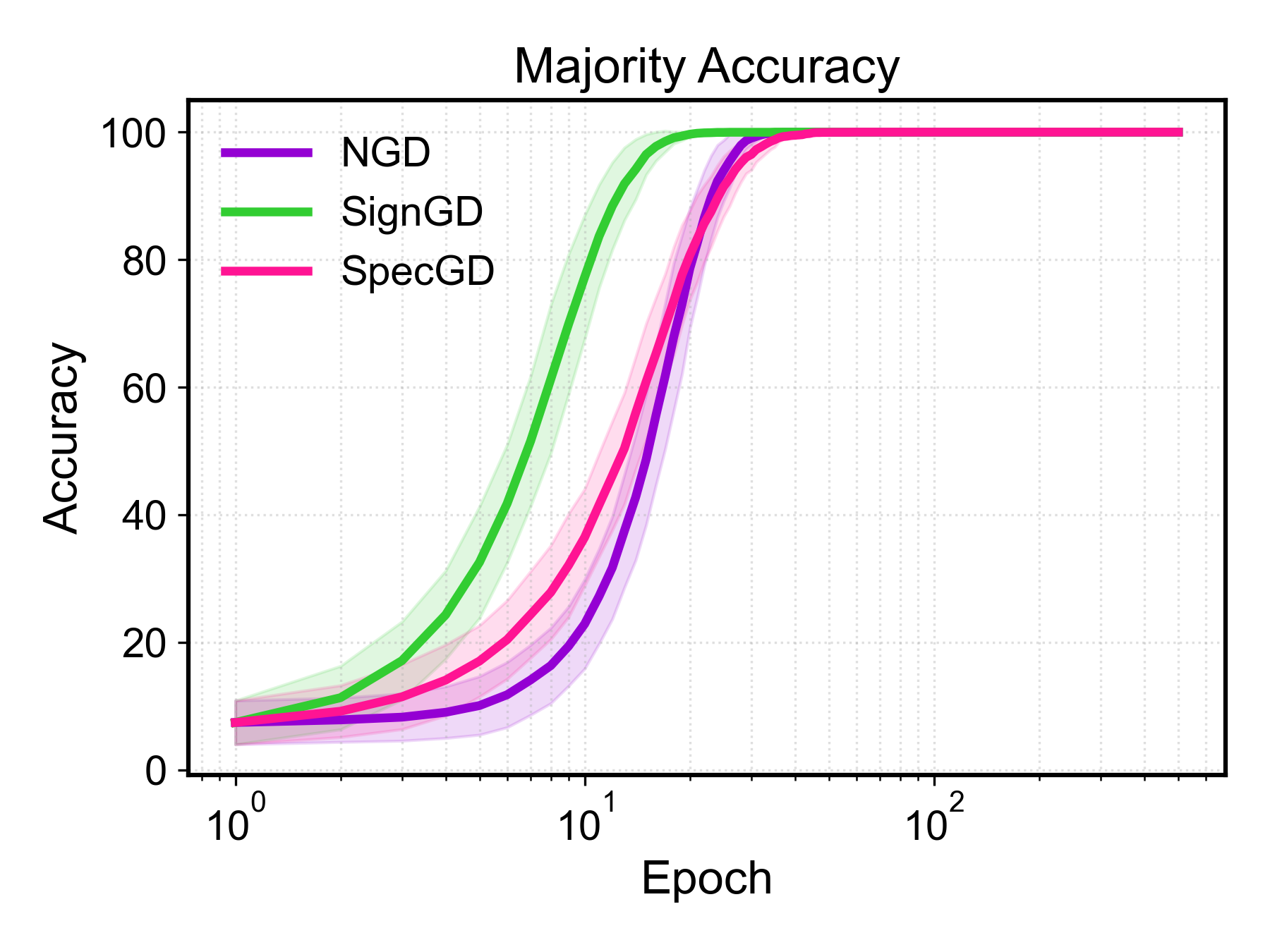}
    \end{subfigure}
    \hfill
    % --- 4. KL Divergence ---
    \begin{subfigure}{0.24\textwidth}
        \includegraphics[width=\linewidth]{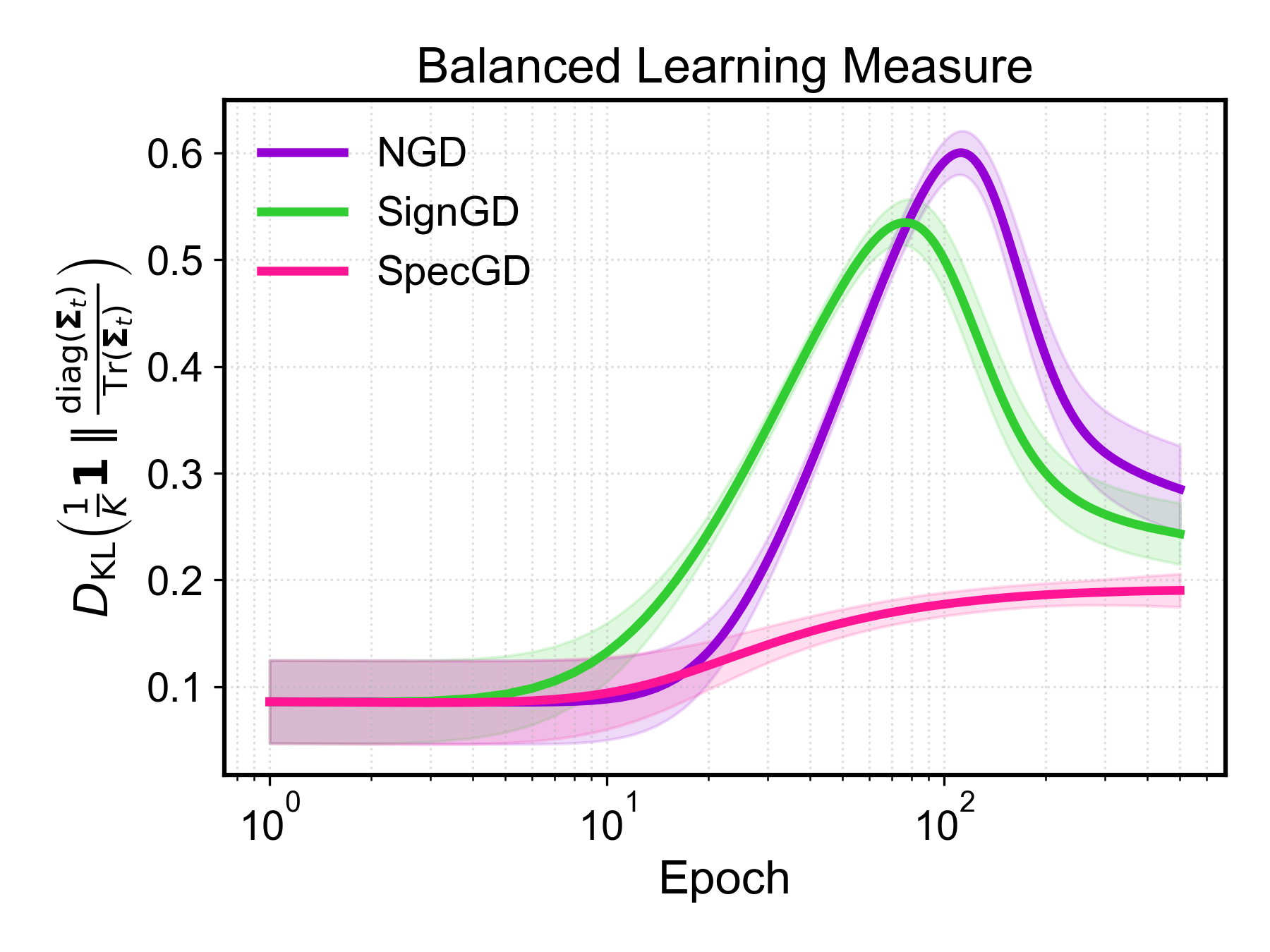}
    \end{subfigure}
    
    \caption{%\textbf{Going beyond Linear Models.} 
    Comparison of NGD, SignGD, and SpecGD on a one-hidden ReLU MLP on Gaussian mixture data generated using the data model \cref{eq:dm}. From left to right: train loss, minority- and majority-class test accuracies, and KL divergence between the normalized all-ones vector and the vector with normalized singular values of the logit matrix over time. SpecGD maintains a consistently low KL divergence throughout training. In contrast, NGD and SignGD exhibit an increase in this metric, indicating a period of spectral imbalance.}
    \label{fig:spectral_dynamics-2-layer}
\end{figure}
\vsn
\paragraph{Gaussian Mixture Setting.} We consider $n=1000,d=200$. Similar to \cref{fig:linear}, we compare optimizers NGD, SignGD and SpecGD in this case, with learning rates $8 \times 10^{-2}, 8 \times 10^{-4}$, and $2\times 10^{-2}$, respectively. We use the full batch for training, and use a held out set of $2000$ test samples for reporting test accuracies. The results are reported after averaging over $5$ independent seeds for initialization and data selection. As shown in \cref{fig:spectral_dynamics-2-layer}, for NGD and SignGD, the KL divergence-based metric initially increases as training progresses, indicating more imbalanced singular values, and becomes smaller in later stages of training, as both majority and minority classes are learned. In contrast, SpecGD maintains a consistently lower value for this metric throughout training, demonstrating that its normalized singular values remain closer to uniform and that spectral components are learned at a more balanced rate. These findings are supported by the corresponding majority- and minority-class accuracies. 

% From \cref{fig:spectral_dynamics}, we observe that for NGD and SignGD, the metric of balanced learning first increases as training progresses, indicating more imbalanced singular values, and becomes smaller in later stages of training, as both majority and minority classes are learned. In contrast, for SpecGD, this metric remains smaller, indicating that the normalized singular values are closer to uniform, and spectral components are learned at a more balanced rate. These observations are supported by the corresponding majority and minority. All these results are reported over $5$ different initialization and data selection seeds for each optimizer.  

\begin{wrapfigure}[15]{R}{0.45\textwidth}
\vspace{-5ex}
  \begin{center}
    \includegraphics[width=0.65\linewidth]{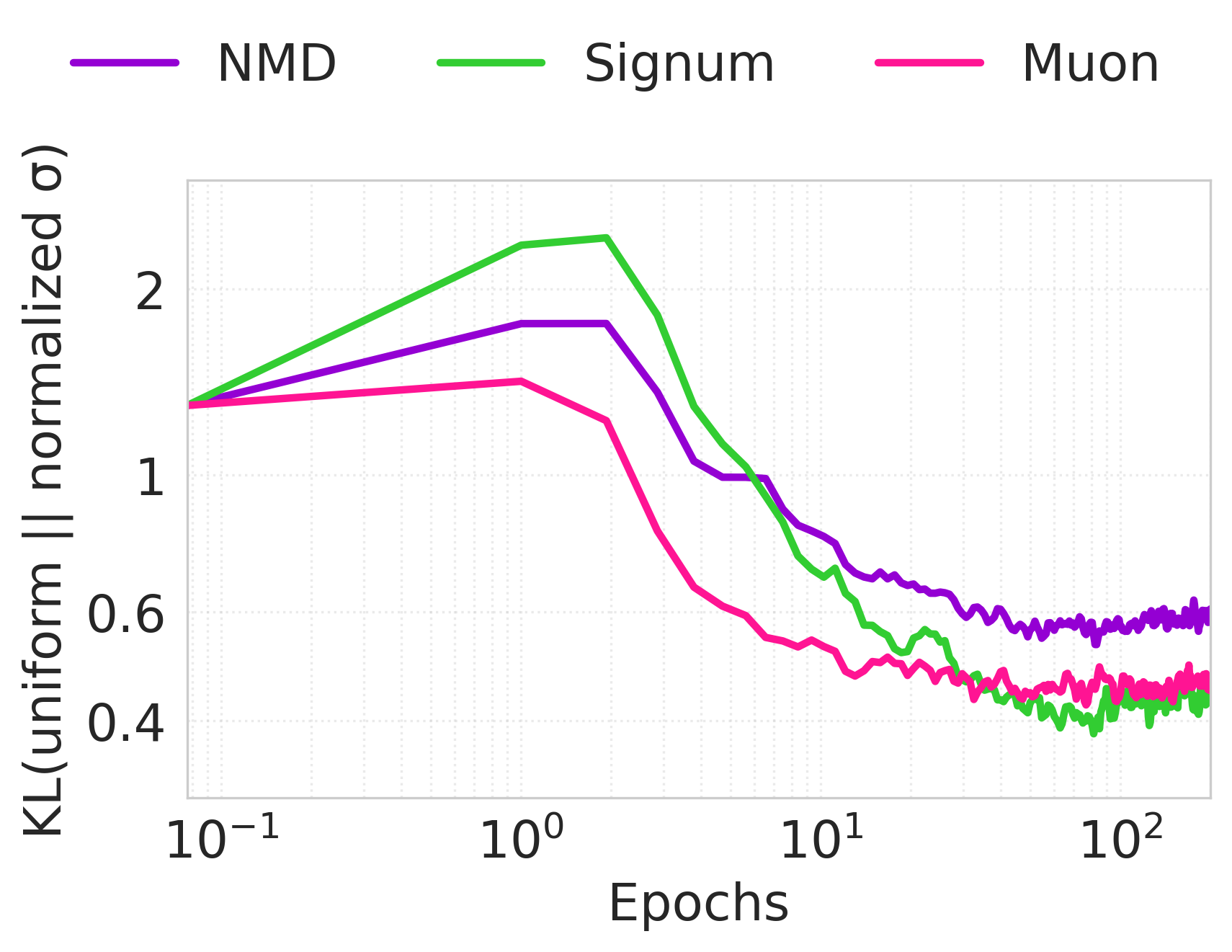}
  \end{center}
  \vspace{-2ex}
  \caption{Comparison of NMD, Signum, and Muon on imbalanced CIFAR-10 showing the KL divergence between the uniform probability vector and the vector of normalized singular values of the logit matrix. Muon maintains a lower KL divergence throughout training compared to NMD and Signum indicating more balanced learning of the spectral components.}
  \label{fig:spectral_dynamics-cifar10}
\end{wrapfigure}
\paragraph{CIFAR-10 with Class Imbalance.}We repeat the experiment described above on the imbalanced CIFAR-10 dataset (same setting as \cref{app:cifar10} for \cref{fig:cifar_accuracy}). In \cref{fig:spectral_dynamics-cifar10}, we observe that Muon similarly demonstrates a more balanced learning of the spectral components. This is supported by the dynamics of majority- and minority-class accuracies (see \cref{fig:cifar_accuracy}).

\vsn
\section{Related Work}
\vsn
\textbf{Spectrum-aware and Other Adaptive Optimizers.} Our work is situated within the growing body of research on spectrum-aware and matrix-level optimizers, which have demonstrated significant empirical success. The development of methods like Shampoo \citep{gupta2018shampoo} and, more recently, Muon \citep{pethick2025trainingdeeplearningmodels, jordan2024muon} has highlighted the potential of optimizers that operate with matrices at the layer level \citep{Carlson2015PreconditionedSD, large2024scalable}, rather than on vectorized parameters. Recent theoretical inquiries have sought to understand the mechanisms behind their performance. One line of work has focused on characterizing the implicit bias of their canonical form, SpecGD. For instance, \citet{Chen_NSD, tsilivis2024flavors} have shown that SpecGD converges to a max-margin classifier with respect to the spectral norm, though this analysis primarily describes the terminal phase of training. Concurrently, other research has investigated the advantages of adaptive methods like Adam, particularly in settings with heavy-tailed class imbalance \citep{kunstner2024heavy} or from the perspective of feature learning \citep{vasudeva2025richsimpleimplicitbias}. While these works highlight Adam's advantages on different forms of imbalanced data through various mechanisms, we use this testbed to reveal a fundamentally different principle behind SpecGD's success: its balanced learning of the data's principal components, which correspond directly to majority and minority groups.

Further, given that our experiments benchmark spectral methods against strong adaptive baselines, it is worth noting that recent large-scale empirical studies on language model pretraining confirm that well-tuned Adam variants remain highly competitive. Extensive benchmarks have shown that AdamW is a robust and strong baseline \citep{wen2025fantasticpretrainingoptimizers, semenov2025benchmarkingoptimizerslargelanguage}, which is consistent with our experiments in \cref{sec:expts}, where Adam is on par with or sometimes beats Shampoo and Muon. \citet{orvieto2025searchadamssecretsauce} showed, via extensive hyperparameter tuning, that $\beta_1 = \beta_2$ is a near-optimal choice for momentum parameters in Adam, contrary to conventional wisdom. They also showed that Signum recovers much of the performance gap between SignGD and Adam, a trend we also observe in our experiments (\cref{fig:cmnist-all}). A key factor in its performance relative to SGD is the batch size, as the Adam-SGD gap is most pronounced in large-batch training regimes \citep{srećković2025batchsizeproblemrevisiting, marek2025smallbatchsizetraining}. In line with these works, we tune hyperparameters such as the learning rate and momentum parameters for all our experiments (see \cref{app:additionalresultsanddetailsofexpsettings}).

\textbf{Training Dynamics.} Our theoretical analysis of training dynamics builds directly upon the framework established by \citet{saxe2013exact} and subsequently extended by \citet{gidel2019implicit}. These works provided exact, closed-form solutions for the learning dynamics of deep linear networks trained with standard GD. Their key insight was to analyze the system under the condition that the input and output moment matrices are jointly diagonalizable, which simplifies the complex, non-linear dynamics into a set of decoupled scalar equations. While their analysis revealed how GD prioritizes learning the dominant spectral components of the data, our work adopts this lens and extends it to SpecGD. This extension allows for a direct, analytical comparison between the training trajectories of GD and SpecGD, enabling us to precisely characterize the latter's more balanced learning rate across all spectral components. Furthermore, our analysis of the bilinear model (\cref{sec:effect_of_depth}) also connects to the literature on the Unconstrained Features Model (UFM), which is often used as a theoretical proxy to study the geometry of feature learning in deep networks, positing a model with enough capacity to freely optimize both the learned representations and the final classification layer \citep{yang2018breakingsoftmaxbottleneckhighrank,mixon2020neural}.

\textbf{Spectral Bias in Deep Learning.} More broadly, our work connects to the well-documented phenomenon of \emph{spectral bias} or \emph{simplicity bias} in deep learning, where networks trained with (S)GD show a predisposition to learn simpler patterns first. Prior works have demonstrated that networks prioritize learning low-frequency functions before high-frequency ones \citep{rahaman2019spectralbiasneuralnetworks, xu2020frequencyprinciple}, and that SGD guides networks to learn functions of progressively increasing complexity \citep{nakkiran2019sgdneuralnetworkslearns, cao2020understandingspectralbiasdeep}. This principle extends even to modern architectures, with recent work showing that Transformers also exhibit a bias towards learning low-sensitivity functions \citep{bhattamishra2023simplicitybiastransformersability, vasudeva2025transformerslearnlowsensitivity}. However, this inherent bias towards simplicity can be detrimental, as it can lead models to rely on simple but spurious correlations in the data, which hinders generalization on underrepresented groups \citep{shah2020pitfalls, vasudeva2025richsimpleimplicitbias}. Our analysis complements this line of work by showing how spectrum-aware optimizers offer a mechanism to somewhat mitigate this bias by promoting more uniform learning across the spectral components of the data.
\vsn\vsn
\section{Limitations and Open Questions}\label{app:limitations}\vsn\vsn
Our theoretical results are derived in a population setting with squared loss, and extending them to the more practical finite-sample regime is a key direction. In this setting, where many interpolating solutions may exist, can we characterize the implicit bias of SpecGD and establish formal generalization guarantees analogous to those in our population-level analysis? Another crucial extension is to move beyond squared loss to analyze commonly used losses like cross-entropy. A primary challenge here is that our key technical assumption of joint diagonalizability may no longer hold, requiring new analytical tools. Furthermore, while our experiments on language modeling (\cref{sec:expts}) suggest that Muon's balanced learning persists (higher top-$k$ validation accuracy on rare tokens), a formal extension of our theory is non-trivial. It remains an open question what data structures, analogous to the moment matrices $\Sigb_{\xb\xb}$ and $\Sigb_{\yb\xb}$ in our multi-class classification setup, define the principal components when doing next-token prediction with sequential, language data. Answering these questions would provide a more complete picture of why spectrum-aware optimizers succeed in modern deep learning. 

\end{document}